\documentclass{myarxiv}

\definecolor{assistantorange}{HTML}{C66700}

\title{Honeycomb-Surfels: Compact Hexagonal Primitives for Efficient Scene Rendering and Reconstruction}

\author[*, affil=1, email=pkaushi1@jh.edu]{Prakhar Kaushik}
\author[*, affil=1, email=spaul27@jh.edu]{Soumava Paul}
\author[affil=1, email=ayuille1@jh.edu]{Alan Yuille}
\affiliation{CCVL, Johns Hopkins University}
\institution{Johns Hopkins University}
\institutionlogo{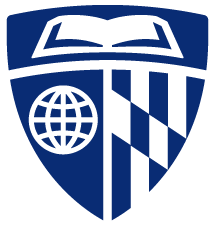}
\emails{\{pkaushi1, spaul27, ayuille1\}@jh.edu}
\projectlink{https://honeycomb-surfels.github.io/}
\codelink{https://github.com/honeycomb-surfels/hexels}
\titlelogo[height=86.43bp, xshift=-217.63bp, yshift=41.69bp]{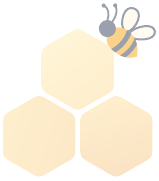}

\keywords{Surface elements, hexagonal primitives, novel view synthesis, surface reconstruction, image representation}

\begin{document}

\begin{teaserfigure}[1]
\centering
\includegraphics[width=\textwidth]{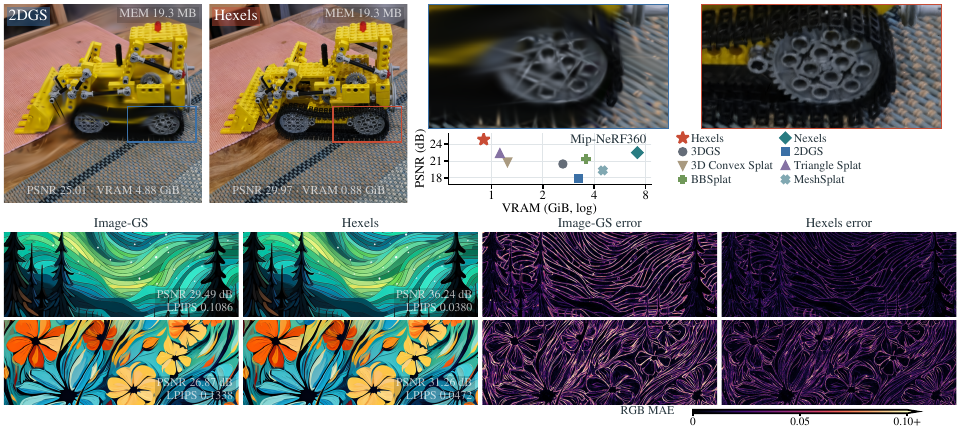}
\caption{\textbf{Hexels capture finer details using less memory.} Hexels are planar surface elements that combine hexagonal support with a first-order spatial color field and capitalize on structural redundancy. Top: At the same model memory (MEM), Hexels shows a gain of 4.96\,dB over 2DGS with $5.51\times$ less peak VRAM (0.88 versus 4.88\,GiB). Graph: Hexels outperform (mean PSNR) all other methods while requiring significantly less memory on 9 scenes in Mip-NeRF360 (Table~\ref{table:fixed_memory}). Bottom: Hexels can also efficiently represent images and outperform Image-GS with the same number of primitives (10k).}
\label{fig:teaser}
\end{teaserfigure}

\begin{abstract}
Novel view synthesis and 3D reconstruction have advanced rapidly with
radiance fields and splatting, yet their memory and computational demands
limit their use in resource-constrained settings. We introduce
\textbf{Hexels}: novel planar hexagonal surface elements with a first-order spatial color field for efficient 3D scene and 2D image representation. Hexels exploit locally planar regions with smoothly varying appearance using fewer, larger primitives. Their hexagonal support provides an area- and boundary-efficient domain, while the local color field captures appearance variation within each primitive. Across 34 scenes, Hexels improve PSNR over memory-matched 2DGS by 4.53--7.38\,dB across datasets while requiring lower training VRAM. We further ablate Hexels by showing that per primitive, Hexels outperform 2DGS by upto 4.74\,dB while using lower training VRAM and having $2.31\times$ faster FPS. Even in unconstrained settings, Hexels offers complementary quality-memory trade-offs. For efficient 3D surface reconstruction, Hexels reduce error by 28.9\% compared with memory-matched 2DGS across 15 DTU scans. On 6 Tanks and Temples surface scenes, Hexels achieves the highest mean F1 among compared methods (34.38\%, versus 31.72\% for 2DGS). We further also show that Hexels, as a general primitive, improves 2D image representation over Image-GS. Hexels push the Pareto frontier on the quality-efficiency trade-off of surface-aligned primitives.
\end{abstract}

\maketitle

\section{Introduction}
\label{sec:introduction}

Novel view synthesis and 3D reconstruction have progressed rapidly,
from neural radiance fields~\cite{mildenhall2020nerf} to explicit Gaussian
primitives that render in real time~\cite{kerbl3Dgaussians} to surface aligned planar primitives like 2DGS~\cite{huang2024_2dgs}, and Triangle Splatting~\cite{held2026triangle}.
These methods differ in geometry, but all face the same question: how much
of the scene can one primitive represent, and at what cost? This has led to the search for an overarching universal primitive which is efficient, expressive, and retains high geometric fidelity. This is even more important in resource-constrained settings, where efficiency includes not only rendering speed and model size but also the memory required during training. 
Previous methods have often traded one aspect for another. For example, conventional Gaussian and triangle splats typically represent one view-conditioned color per primitive, so spatial variation emerges through many overlapping elements, leading to large \textit{overdraw}. Others like BBSplat~\cite{svitov2025bbsplat} add
per-primitive textures, or use shared neural appearance fields like NeST~\cite{zhang2025nest} and
Nexels~\cite{rong2025nexels}.
These methods increase per-element capacity, but move complexity into
textures, grids, or decoders. In contrast, we target the entire quality--efficiency trade-off at the primitive itself. For resource-constrained settings, this efficiency includes not only rendering
speed and model size, but also the memory required during training. 

At constrained representation budgets, we seek a balance between geometric fidelity and appearance expressivity. Since agreement with training images alone does not guarantee accurate geometry~\cite{kaizhang2020}, we therefore target both surface reconstruction accuracy and novel-view rendering quality within a compact surface primitive.

We are motivated by the idea that natural images and scenes are highly structured: large regions are often locally planar with smoothly varying appearance. An efficient surface primitive should therefore be able to exploit this underlying structure, wherever possible, and do so in a compact and self-contained manner.

We introduce \emph{Hexels}, planar surface elements that combine a
hexagonal support with a first-order spatial color field. Natural scenes
contain large regions that are locally simple in geometry and appearance;
a Hexel can represent such a region with one large primitive, including
smooth color variation within it, while smaller Hexels capture boundaries,
curvature, and fine detail. This directly targets redundancy in surface-aligned splatting:
rather than approximating structured regions with many overlapping,
spatially constant primitives, Hexels increase what each primitive can
represent.

Why a hexagon? Regular hexagons tile a plane without gaps and cover more
area for the same boundary length than triangles or squares (other tileable primitives)~\cite{hales2001honeycomb} (discussed in Appendix~\ref{app:local-approximation}).
When combined with the local color field, each Hexel can model both the region itself and smooth appearance variation within it.
In 3D, Hexels are rendered as oriented surface elements using
perspective-correct ray-plane intersection and depth-ordered blending.
The same primitive can be placed directly in the image plane for 2D image
representation. Thus, Hexels provide a single compact primitive family for
novel-view synthesis, surface reconstruction, and image representation,
without requiring texture maps, neural appearance fields, or decoding networks.

We evaluate Hexels for 3D novel view synthesis (memory-matched, primitive-matched, and unconstrained), surface reconstruction, and 2D image representation. We primarily use 34 scenes across DTU~\cite{jensen2014large}, Tanks and Temples~\cite{Knapitsch2017}, and Mip-NeRF360~\cite{barron2022mip}. At matched model sizes, Hexels improves PSNR over 2DGS by 4.53--7.38\,dB with upto $3.6\times$ less VRAM requirements and over Triangle Splatting by 1.70--2.90\,dB, with up to $1.87\times$ higher rendering FPS (\autoref{table:fixed_memory}). At matched primitive budgets, Hexels improves PSNR over 2DGS by up to 4.74\,dB, using $1.76$--$1.95\times$ less model size and upto $7\times$ less VRAM, and renders up to $2.31\times$ faster (\autoref{table:fixed_primitives}). Under unrestricted model sizes, Hexels still improve PSNR over 2DGS by 0.90\,dB on both Tanks and Temples and DTU, while rendering $1.37\times$ and $2.26\times$ faster, respectively. For surface reconstruction, Hexels achieves 0.978\,mm mean Chamfer distance over 15 DTU scans, versus 1.375\,mm for 2DGS at the same model-memory budget (Table~\ref{tab:dtu_memory_main}). On 6 Tanks and Temples surface scenes, mean F1 for Hexels is 34.38\%, versus 31.72\% for 2DGS (Table~\ref{tab:tnt_surface_main}). Hexels also converges faster (\autoref{fig:2d_convergence}) and improves 2D image representation over Image-GS~\cite{imagegs2025} (Table~\ref{tab:hexels-2d}).

\paragraph{Contributions.}
\begin{itemize}
    \setlength{\itemsep}{0.35em}
    \setlength{\topsep}{0.35em}

    \item \textbf{Compact Hexel primitive.}
    We introduce \emph{Hexels}, compact, self-contained surface primitives
    for resource-constrained rendering and reconstruction, combining efficient
    hexagonal support with a first-order local color field without textures,
    feature grids, neural decoders, or baking.

    \item \textbf{Strong quality-efficiency trade-offs.}
    Under matched primitive budgets, Hexels outperform 2DGS by
    2.08--4.74\,dB while using $2\times$ less learned storage per primitive,
    upto $7\times$ lower peak training VRAM, and up to $2.31\times$ higher
    rendering throughput. Under matched model-memory budgets, they improve
    PSNR by 4.53--7.38\,dB while using upto $3.6\times$ lower peak training VRAM
    and rendering up to $1.87\times$ faster.

    \item \textbf{One primitive across tasks.}
    The same representation applies across novel-view synthesis, surface
    reconstruction, and 2D image representation, achieving strong performance
    and efficiency without changing the underlying primitive.
\end{itemize}

\section{Related Work}
\label{sec:related}

\begin{figure*}[htb]
 \centering
 \includegraphics[width=\textwidth]{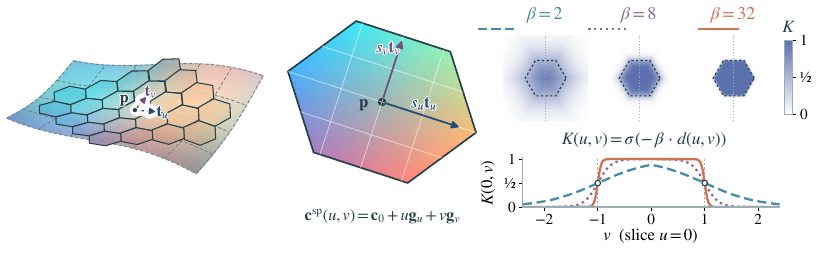}
\caption{A Hexel is a planar patch with a soft hexagonal boundary;
together, Hexels approximate a 3D surface (left). Its center $\bp$,
orthogonal unit tangents $\bt_u,\bt_v$, and scales $s_u,s_v$ determine
its position, orientation, and size (center). Color varies linearly from
$\bc_0$ at the center, with RGB gradients $\bg_u,\bg_v$ along local
coordinates $u,v$. $K$ (spatial weight) controls the patch's influence:
increasing sharpness $\beta$ sharpens the boundary (right). The tileable
hexagonal support and local linear color field jointly exploit structure
in scene geometry and appearance, allowing each Hexel to represent larger
coherent regions while retaining a controllably sharp boundary.}
 \label{fig:splat}
\end{figure*}

Novel view synthesis and 3D reconstruction have improved rapidly since
NeRF~\cite{mildenhall2020nerf}, which represents scenes as neural fields of
density and view-dependent radiance. Feature grids and explicit encodings
reduced their training and rendering cost~\cite{muller2022instant,yu2022plenoxels},
while 3DGS~\cite{kerbl3Dgaussians} replaced neural fields with explicit
anisotropic Gaussian primitives, enabling real-time rasterization.
2DGS~\cite{huang2024_2dgs} further replaced volumetric Gaussians with planar
surfels, substantially improving surface reconstruction. These advances made
rendering faster and geometry more accurate, but efficient representation
requires more: high quality together with low model storage, low training
VRAM, few primitives, and high FPS. Appendix~\ref{app:related} discusses these trade-offs.

Subsequent work improves different parts of this trade-off, but usually with
a catch. Triangle Splatting~\cite{held2026triangle}, Deformable Beta
Splatting~\cite{liu2025beta}, and 3D Convex Splatting~\cite{held2025convex}
redesign the primitive or kernel to improve expressiveness or rendering
speed, but at cost of higher memory and VRAM usage. Hybrid methods such as
BBSplat~\cite{svitov2025bbsplat}, NeST~\cite{zhang2025nest}, and
Nexels~\cite{rong2025nexels} reduce geometric redundancy by separating
geometry from appearance through textures or neural fields; this shifts
capacity into additional memory, training state, and computation. Other
methods retrofit existing representations through pruning, parameter
sharing, quantization, distillation, or baking
~\cite{kheradmand2024mcmc,lu2024scaffold,fan2024lightgaussian,chen2023mobilenerf} but still largely struggled to address the efficiency problem on all of its axes. 
Hexels provides a solution to the efficiency problem by redesigning the primitive itself to exploit local structure and geometry in appearance.
The resulting primitive is compact in the form in which it is learned and
rendered, simultaneously targeting storage, training VRAM, primitive count,
FPS, and reconstruction quality. Further, unlike previous works, Hexels are general surface
primitives: the same underlying design applies directly to oriented 3D
surfaces and to 2D images without major changes to the representation or
optimization.

\begin{figure*}[htb]
\centering
\includegraphics[width=\textwidth]
  {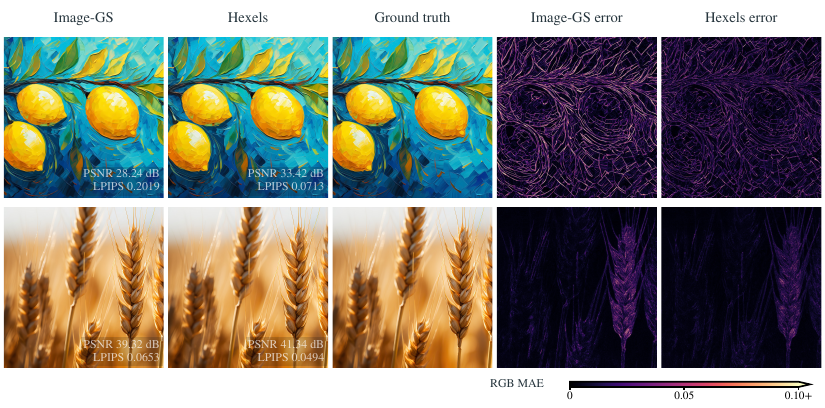}
\caption{\textbf{2D reconstruction on the Image-GS benchmark.}
Two $2048\!\times\!2048$ examples from the ImageGS dataset with 10K final primitives per method.
Hexels preserves finer boundaries and structure, with lower RGB error.}
\label{fig:2d_placeholder}
\label{fig:hexels-2d}
\label{fig:2d_reconstruction}
\end{figure*}
\section{Hexels: Hexagonal Surface Elements}
\label{sec:method}
We introduce \emph{Hexels}, planar hexagonal surface elements designed to reduce overlap and overdraw, avoid the long tails of Gaussian footprints, and represent more appearance within each primitive. Each Hexel combines a soft hexagonal support with a first-order spatial color field, allowing a single primitive to cover a larger coherent surface region while modeling smooth appearance variation within it. In 3D, a Hexel is an oriented planar patch with a local 2D coordinate
system. Following 2DGS~\cite{huang2024_2dgs}, we evaluate each primitive at the perspective-correct ray-plane intersection and blend the resulting contributions in depth order.

\paragraph{Modeling}
\label{sec:modeling}
A Hexel (Figure~\ref{fig:splat}) is centered at $\bp_i\in\mathbb{R}^3$ with two orthogonal unit
tangent directions $\bt_{u,i}$ and $\bt_{v,i}$ and positive tangent scales
$s_{u,i}$ and $s_{v,i}$. A local point $(u,v)$ maps to world space as
\begin{equation}
P_i(u,v)=
\bp_i+s_{u,i}\bt_{u,i}u+s_{v,i}\bt_{v,i}v.
\label{eq:plane-to-world}
\end{equation}
The tangent directions are obtained from a normalized quaternion, with
$\bn_i=\bt_{u,i}\times\bt_{v,i}$ defining the surface normal.
This maps the hexagon to an oriented, anisotropically scaled
planar patch in 3D.

\paragraph{Hexagonal support.}
In the local plane, we define
\begin{equation}
 d(u,v)=\max\left\{
 |v|,
 \left|\tfrac{\sqrt{3}}{2}u+\tfrac12v\right|,
 \left|\tfrac{\sqrt{3}}{2}u-\tfrac12v\right|
 \right\}-1 .
 \label{eq:hex-boundary}
\end{equation}
The contour $d(u,v)=0$ defines the canonical regular hexagon, with
$d<0$ inside and $d>0$ outside. We convert this into a soft support weight
\begin{equation}
 K_i(u,v)=\sigma\!\left(-\beta_i d(u,v)\right),\qquad
 \sigma(a)=\frac{1}{1+\exp(-a)},
 \label{eq:hex-footprint}
\end{equation}
where $\beta_i>0$ controls edge sharpness. Increasing $\beta_i$ produces a
sharper transition around the hexagon while preserving the $K_i=1/2$
contour. The scales in Eq.~\eqref{eq:plane-to-world} control the spatial
extent of the support, while the rasterizer truncates negligible sigmoid
tails below its contribution threshold (See \autoref{app:hex-math} for details). 

We use a hexagon because it provides an efficient, nearly isotropic way to
cover a plane. Regular hexagons tile a plane without gaps and, at equal
area, require less perimeter than regular triangles or squares, consistent
with the honeycomb theorem~\cite{hales2001honeycomb}. More concretely, under
the local first-order approximation model of
Appendix~\ref{app:local-approximation}, a suitably oriented and
anisotropically scaled hexagon can cover about $30\%$ more area than the
corresponding square and twice the area of the corresponding triangle for
the same bounded color error (See Proposition~\ref{prop:hex-area}). Our results reflect that these ideas do help structure a more efficient primitive even in non-ideal settings.

\paragraph{Spatial appearance.}
We model appearance within each Hexel using a first-order color field:
\begin{equation}
 \bc_i^{\mathrm{sp}}(u,v)
 =\bc_{0,i}+u\bg_{u,i}+v\bg_{v,i},
 \label{eq:spatial-color}
\end{equation}
where $\bc_{0,i}$ is the base RGB color and
$\bg_{u,i},\bg_{v,i}$ are RGB gradients along the two local coordinates.
This allows one Hexel to represent smooth appearance variation across its
surface rather than assigning a single spatial color to the primitive.
Since the field is defined in local coordinates, increasing a tangent scale
spreads the same color variation over a larger physical region. We refer to this view-independent model as \emph{V1}, with 20 learned
parameters per Hexel, compared with 58 for the evaluated 2DGS primitive.
Our \emph{V2} variant adds compact view dependence through a linear residual
$\br_i(\bd_i)=A_i\bd_i$, where $\bd_i$ is the normalized direction from the
camera center to the Hexel center, for 29 parameters in total. We also
consider an optional \emph{V3} variant that replaces the linear residual
with degree-three spherical harmonics.
See Appendix~\ref{app:appearance_variants} for the complete parameterization details.

\paragraph{Splatting.}
Following 2DGS, we recover local coordinates
$(u_i,v_i)$ at each pixel using perspective-correct ray--plane intersection
and composite primitives front-to-back in depth order. A Hexel contributes
$\alpha_i=\min\{o_iK_i(u_i,v_i),0.99\}$, with color evaluated from its local
appearance field at $(u_i,v_i)$. Thus, relative to 2DGS, the rasterization
and compositing remain unchanged; we replace the Gaussian weight with the
Hexel support and evaluate spatially varying color within each primitive.
Implementation details are given in Appendix~\ref{app:hex-renderer}.

\paragraph{Training.}
\label{sec:optimization}
We optimize all Hexel parameters directly from calibrated training images
with Adam~\cite{kingma2015adam}, using the standard photometric objective
$\mathcal{L}=(1-\lambda)\|\hat I-I\|_1+
\lambda(1-\operatorname{SSIM}(\hat I,I))$.
Hexels are initialized from either sparse COLMAP points\textcolor{assistantorange}{~\cite{schoenberger2016sfm}} or, where stated,
a denser MVS reconstruction\textcolor{assistantorange}{~\cite{schoenberger2016mvs}} obtained from the training views. Unless
otherwise specified, our reported models use a fixed primitive population.

\paragraph{Densification and Pruning.}
\label{sec:adaptive-population}
For unconstrained training, we additionally consider standard
densification and pruning, following 3DGS and
2DGS. Primitives with large position
gradients are cloned or split, with new Hexels initialized near the parent
in its tangent plane. We conservatively prune mature primitives that make
negligible contributions across the training views.
More details are given in Appendix~\ref{app:adaptive-population}.

At inference, we precompute camera-independent quantities and use an RGB-only rasterizer; depth ordering and compositing are unchanged from the training renderer. Full implementation details are provided in Appendix~\ref{app:hex-renderer}.

\subsection{Representing 2D Images as Hexels}
\label{sec:hexels-2d}
Unlike most 3D surface primitives, Hexels are general and can be used directly
for both 3D scenes and 2D images. For efficient image representation, a 2D
Hexel is defined by its image-plane center $\bm_i$, rotation $\theta_i$,
two scales $s_{x,i},s_{y,i}$, and support sharpness $\beta_i$. The same
hexagonal support from Eq.~\eqref{eq:hex-boundary} is evaluated directly at
each pixel, without projection, ray--plane intersection, opacity, or depth
ordering. Let $\bS_i=\operatorname{diag}(s_{x,i},s_{y,i})$ be the positive scale matrix and $\bR(\theta_i)$ the planar rotation. For a pixel $\bx$, we compute
\[
\bu_i=\bS_i^{-1}\bR(\theta_i)^\top(\bm_i-\bx),
\qquad
w_i(\bx)=\sigma\!\left(-\beta_i d(\bu_i)\right),
\]
Here $w_i(\bx)$ is a \emph{blending weight}: it is
larger inside the Hexel's footprint and decreases smoothly outside it.
Following Image-GS~\cite{imagegs2025}, each pixel retains up to $K=10$ Hexels
with the largest weights among those assigned to its tile. Let
$\mathcal{S}_K(\bx)$ be the indices of these selected Hexels. Their RGB
colors $\bc_i(\bx)$ form a weighted average:
\[
\bc(\bx)=
\frac{\sum_{i\in \mathcal{S}_K(\bx)} w_i(\bx)\bc_i(\bx)}
     {\sum_{i\in \mathcal{S}_K(\bx)} w_i(\bx)+\epsilon}.
\]
$\epsilon=10^{-4}$ prevents division by zero.

\begin{table}[ht]
\caption{\textbf{2D image representation on ImageGS dataset (45 images).}
Constant and linear RGB using 10K primitives. Metrics use unquantized FP32 parameters; Memory is losslessly compressed image files, including decoding metadata.}
\label{tab:hexels-2d}
\small
\fittabular{7}{%
\begin{tabular}{@{}lcccccc@{}}
\toprule
Method
& PSNR$\uparrow$
& SSIM$\uparrow$
& LPIPS$\downarrow$
& Iter.$\downarrow$
& \shortstack{Train\\(s)$\downarrow$}
& \shortstack{Memory\\(kB)$\downarrow$} \\
\midrule
Image-GS
& 33.16 & .9141 & .1372
& 10,000 & 238.40 & 292.175 \\

Hexels (constant)
& 34.16 & .9208 & .1099
& \textbf{5,100} & \textbf{112.11} & \textbf{289.799} \\

Hexels
& \textbf{35.82} & \textbf{.9359} & \textbf{.0924}
& \textbf{5,100} & 157.07 & 551.174 \\

\bottomrule
\end{tabular}}
\end{table}

We use the first-order spatial color field as the primary 2D appearance
model, expressed in image coordinates; constant-color Hexels, with a fixed shared $\beta$, match Image-GS's eight stored scalars per primitive. Both methods use image-gradient-based initialization from \cite{imagegs2025} and an L1 plus SSIM loss. Image-GS retains its original training, densification, and pruning methodology. Appendix~\ref{app:hex-2d} provides more training and inference details.
\begin{figure}[ht]
\centering
\includegraphics[width=\columnwidth]
  {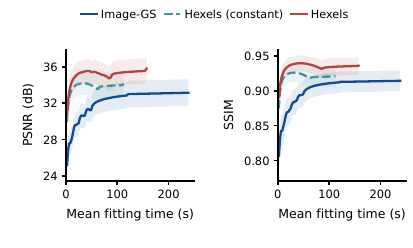}

\caption{\textbf{Hexels converge faster.} On the Image-GS benchmark, Hexels show faster convergence versus Image-GS while also outperforming it in terms of PSNR and SSIM.}
\label{fig:2d_convergence}
\end{figure} This 2D representation setting permits a straightforward way to evaluate \textit{overdraw} using reconstruction error and
primitive overlap. On the 45-image Image-GS benchmark, Hexels improve PSNR from 33.16 to 35.82\,dB and reduce LPIPS from .1372 to .0924 over ImageGS.
They use 5.1K rather than 10K iterations and 157.07 rather than 238.40 seconds
of image fitting/training time (Table~\ref{tab:hexels-2d}). Even using constant-color instead of linear color representation, Hexels still improve
all three quality metrics, fit in 112.11 seconds, and occupy 289.799\,kB memory,
versus 292.175\,kB for Image-GS. Figure~\ref{fig:2d_placeholder} shows qualitative comparisons.

\section{Experiments}
\label{sec:experiments}

We evaluate novel-view synthesis under matched model memory, matched
primitive budgets and native baseline settings. As our focus is on fundamental parameter-based efficiency, all Hexels results do not utilize any post-quantization or compression. For initial runtime compression analysis, refer to Appendix~\ref{app:nvs_compression}.

\paragraph{Implementation.}
Hexels start from sparse COLMAP points and train for 30K iterations.
Fixed configurations retain the initial population; densified configurations
use mixed clone/split growth and contribution-based pruning, with a 2M limit.

Appendix~\ref{app:hexels_variants} reports all configurations.
Training and evaluation details are in Appendix~\ref{sec:nvs_evaluation_protocol}.

\paragraph{Datasets, baselines, and metrics.}
We evaluate nine Mip-NeRF360 scenes~\cite{barron2022mip}, fifteen DTU
scans~\cite{jensen2014large}, and ten Tanks and Temples
scenes~\cite{Knapitsch2017}, using common held-out views and resolutions.
We report PSNR, SSIM\textcolor{assistantorange}{~\cite{wang2004ssim}}, LPIPS\textcolor{assistantorange}{~\cite{zhang2018perceptual}}, training time, peak training VRAM,
rendering FPS, and deployment memory size, averaged over scenes.
Our baselines are 3DGS~\cite{kerbl3Dgaussians}, 3D Convex
Splatting~\cite{held2025convex}, BBSplat~\cite{svitov2025bbsplat},
Nexels~\cite{rong2025nexels}, 2DGS~\cite{huang2024_2dgs},
Triangle Splatting~\cite{held2026triangle}, and
MeshSplatting~\cite{held2025mesh}.
Tables group methods into volumetric primitives, textured or neural surfels,
and native surface-aligned primitives with local appearance. We select these
baselines as competitive representatives of their respective classes, spanning
complementary quality-efficiency trade-offs. The final group - 2DGS,
Triangle Splatting, and MeshSplatting - provides the closest comparison to
Hexels, as these methods use explicit surface-aligned primitives without
separate texture maps or neural appearance fields; we therefore emphasize
this group in our analysis. To our knowledge, this evaluation over 34
real-world scenes is the largest quantitative NVS evaluation, by scene count,
among the original papers introducing the compared methods.

\begin{table*}[t]
\caption{\textbf{Matched model memory.} Hexels achieves the best PSNR and SSIM and the best or tied-best LPIPS across all three benchmarks, with low peak training VRAM and high rendering rates. Colors rank methods within each primitive group.}
\label{table:fixed_memory}
\label{tab:nvs_memory_v1_mip_outdoor}
\label{tab:nvs_memory_v1_dtu}
\small
\flushrules
\setlength{\tabcolsep}{4pt}
\renewcommand{\arraystretch}{1.14}
\fittabular{19}{%
\begin{tabular}{@{}lrrrrrr@{\hspace{9pt}}rrrrrr@{\hspace{9pt}}rrrrrr@{}}
\toprule
\multicolumn{1}{@{}l}{\multirow{3}{*}{Method}} & \multicolumn{6}{c}{Tanks and Temples (10)} & \multicolumn{6}{c}{Mip-NeRF 360 (9)} & \multicolumn{6}{c@{}}{DTU (15)} \\
\cmidrule(lr){2-7}\cmidrule(lr){8-13}\cmidrule(l){14-19}
 & Train & VRAM & FPS & PSNR & SSIM & LPIPS & Train & VRAM & FPS & PSNR & SSIM & LPIPS & Train & VRAM & FPS & PSNR & SSIM & LPIPS \\
 & min $\downarrow$ & GiB $\downarrow$ & $\uparrow$ & dB $\uparrow$ & $\uparrow$ & VGG $\downarrow$ & min $\downarrow$ & GiB $\downarrow$ & $\uparrow$ & dB $\uparrow$ & $\uparrow$ & VGG $\downarrow$ & min $\downarrow$ & GiB $\downarrow$ & $\uparrow$ & dB $\uparrow$ & $\uparrow$ & VGG $\downarrow$ \\
\midrule
\multicolumn{19}{@{}c@{}}{\textcolor{black!65}{\textit{Volumetric primitives}}} \\
3DGS & \cellcolor{tabtealfirst!55!white}5.6 & 1.60 & \cellcolor{tabtealfirst!55!white}807.5 & \cellcolor{tabtealfirst!55!white}20.39 & \cellcolor{tabtealfirst!55!white}0.683 & \cellcolor{tabtealfirst!55!white}0.477 & \cellcolor{tabtealfirst!55!white}8.2 & 2.62 & \cellcolor{tabtealfirst!55!white}852.6 & 20.46 & 0.528 & 0.614 & \cellcolor{tabtealfirst!55!white}4.5 & \cellcolor{tabtealfirst!55!white}0.47 & \cellcolor{tabtealfirst!55!white}1035.0 & \cellcolor{tabtealfirst!55!white}27.06 & 0.840 & 0.395 \\
3D Convex & 20.3 & \cellcolor{tabtealfirst!55!white}0.96 & 203.5 & 19.71 & 0.680 & 0.481 & 35.5 & \cellcolor{tabtealfirst!55!white}1.24 & 146.1 & \cellcolor{tabtealfirst!55!white}20.82 & \cellcolor{tabtealfirst!55!white}0.570 & \cellcolor{tabtealfirst!55!white}0.564 & 12.0 & 0.61 & 316.6 & 26.74 & \cellcolor{tabtealfirst!55!white}0.847 & \cellcolor{tabtealfirst!55!white}0.382 \\
\addlinespace[2pt]
\multicolumn{19}{@{}c@{}}{\textcolor{black!65}{\textit{Hybrid primitives}}} \\
BBSplat & \cellcolor{tabbluefirst!93!blue}21.1 & \cellcolor{tabbluefirst!93!blue}1.27 & \cellcolor{tabbluefirst!93!blue}132.0 & 19.00 & 0.624 & 0.548 & \cellcolor{tabbluefirst!93!blue}37.0 & \cellcolor{tabbluefirst!93!blue}3.58 & \cellcolor{tabbluefirst!93!blue}87.7 & 21.34 & 0.535 & 0.603 & \cellcolor{tabbluefirst!93!blue}14.4 & \cellcolor{tabbluefirst!93!blue}0.65 & \cellcolor{tabbluefirst!93!blue}179.9 & 26.90 & 0.833 & 0.380 \\
Nexels & 72.2 & 3.62 & 99.6 & \cellcolor{tabbluefirst!93!blue}21.32 & \cellcolor{tabbluefirst!93!blue}0.695 & \cellcolor{tabbluefirst!93!blue}0.424 & 82.7 & 7.15 & 49.9 & \cellcolor{tabbluefirst!93!blue}22.49 & \cellcolor{tabbluefirst!93!blue}0.590 & \cellcolor{tabbluefirst!93!blue}0.493 & 29.0 & 2.21 & 155.5 & \cellcolor{tabbluefirst!93!blue}28.23 & \cellcolor{tabbluefirst!93!blue}0.876 & \cellcolor{tabbluefirst!93!blue}0.303 \\
\addlinespace[2pt]
\multicolumn{19}{@{}c@{}}{\textcolor{black!65}{\textit{Surface primitives}}} \\
2DGS & \cellcolor{tabsecond}10.6 & \cellcolor{tabthird}1.37 & \cellcolor{tabthird}284.5 & \cellcolor{tabthird}18.52 & \cellcolor{tabthird}0.652 & \cellcolor{tabthird}0.508 & \cellcolor{tabsecond}20.0 & \cellcolor{tabthird}3.24 & \cellcolor{tabthird}201.2 & 17.82 & \cellcolor{tabthird}0.494 & 0.658 & \cellcolor{tabsecond}8.4 & \cellcolor{tabsecond}0.68 & \cellcolor{tabthird}478.0 & \cellcolor{tabthird}21.56 & \cellcolor{tabthird}0.757 & \cellcolor{tabthird}0.476 \\
Triangle Spl. & \cellcolor{tabfirst}9.4 & \cellcolor{tabsecond}0.74 & \cellcolor{tabfirst}495.2 & \cellcolor{tabsecond}21.35 & \cellcolor{tabsecond}0.731 & \cellcolor{tabsecond}0.409 & \cellcolor{tabfirst}15.8 & \cellcolor{tabsecond}1.12 & \cellcolor{tabsecond}312.7 & \cellcolor{tabsecond}22.36 & \cellcolor{tabsecond}0.588 & \cellcolor{tabsecond}0.538 & \cellcolor{tabfirst}7.4 & \cellcolor{tabthird}0.69 & \cellcolor{tabsecond}517.5 & \cellcolor{tabsecond}26.04 & \cellcolor{tabsecond}0.835 & \cellcolor{tabsecond}0.381 \\
MeshSplatting & 17.2 & 2.15 & 119.6 & 16.29 & 0.565 & 0.567 & 33.1 & 4.49 & 60.8 & \cellcolor{tabthird}19.31 & 0.475 & \cellcolor{tabthird}0.634 & \cellcolor{tabthird}13.3 & 1.92 & 130.7 & 21.14 & 0.666 & 0.505 \\
\textbf{Hexels V1} & \cellcolor{tabthird}13.3 & \cellcolor{tabfirst}0.56 & \cellcolor{tabsecond}438.4 & \cellcolor{tabfirst}23.05 & \cellcolor{tabfirst}0.811 & \cellcolor{tabfirst}0.295 & \cellcolor{tabthird}24.2 & \cellcolor{tabfirst}0.90 & \cellcolor{tabfirst}377.8 & \cellcolor{tabfirst}24.82 & \cellcolor{tabfirst}0.705 & \cellcolor{tabfirst}0.410 & \cellcolor{tabsecond}8.4 & \cellcolor{tabfirst}0.55 & \cellcolor{tabfirst}830.8 & \cellcolor{tabfirst}28.94 & \cellcolor{tabfirst}0.890 & \cellcolor{tabfirst}0.303 \\
\bottomrule
\end{tabular}}
\end{table*}

\begin{table*}[t]
\caption{\textbf{Matched primitive budgets.} At matched primitive-count budgets, Hexels combines high reconstruction quality with low peak training VRAM and compact models.}
\label{tab:nvs_count_mip_outdoor}
\label{table:fixed_primitives}
\label{tab:nvs_count_dtu}
\footnotesize
\flushrules
\setlength{\tabcolsep}{2pt}
\renewcommand{\arraystretch}{1.14}
\resizebox{\linewidth}{!}{%
\begin{tabular}{@{}lrrrrrrr@{\hspace{1pt}}rrrrrrr@{\hspace{1pt}}rrrrrrr@{}}
\toprule
\multicolumn{1}{@{}l}{\multirow{3}{*}{Method}} & \multicolumn{7}{c}{Tanks and Temples (10)} & \multicolumn{7}{c}{Mip-NeRF 360 (9)} & \multicolumn{7}{c@{}}{DTU (15)} \\
\cmidrule(lr){2-8}\cmidrule(lr){9-15}\cmidrule(l){16-22}
 & Train & VRAM & FPS & PSNR & SSIM & LPIPS & Deploy. & Train & VRAM & FPS & PSNR & SSIM & LPIPS & Deploy. & Train & VRAM & FPS & PSNR & SSIM & LPIPS & Deploy. \\
 & min $\downarrow$ & GiB $\downarrow$ & $\uparrow$ & dB $\uparrow$ & $\uparrow$ & VGG $\downarrow$ & MB $\downarrow$ & min $\downarrow$ & GiB $\downarrow$ & $\uparrow$ & dB $\uparrow$ & $\uparrow$ & VGG $\downarrow$ & MB $\downarrow$ & min $\downarrow$ & GiB $\downarrow$ & $\uparrow$ & dB $\uparrow$ & $\uparrow$ & VGG $\downarrow$ & MB $\downarrow$ \\
\midrule
\multicolumn{22}{@{}c@{}}{\textcolor{black!65}{\textit{Volumetric primitives}}} \\
3DGS & \cellcolor{tabtealfirst!55!white}7.5 & 2.11 & \cellcolor{tabtealfirst!55!white}581.8 & \cellcolor{tabtealfirst!55!white}22.95 & \cellcolor{tabtealfirst!55!white}0.773 & \cellcolor{tabtealfirst!55!white}0.361 & \cellcolor{tabtealfirst!55!white}45.15 & \cellcolor{tabtealfirst!55!white}9.8 & 2.79 & \cellcolor{tabtealfirst!55!white}607.2 & 23.38 & 0.599 & 0.527 & \cellcolor{tabtealfirst!55!white}28.45 & \cellcolor{tabtealfirst!55!white}5.1 & \cellcolor{tabtealfirst!55!white}0.64 & \cellcolor{tabtealfirst!55!white}819.6 & \cellcolor{tabtealfirst!55!white}29.29 & \cellcolor{tabtealfirst!55!white}0.893 & 0.324 & 8.43 \\
3D Convex & 29.2 & \cellcolor{tabtealfirst!55!white}1.60 & 114.3 & 22.20 & 0.758 & 0.376 & 46.90 & 45.7 & \cellcolor{tabtealfirst!55!white}1.90 & 83.0 & \cellcolor{tabtealfirst!55!white}23.88 & \cellcolor{tabtealfirst!55!white}0.652 & \cellcolor{tabtealfirst!55!white}0.466 & 30.35 & 15.0 & 0.82 & 213.3 & 28.24 & 0.887 & \cellcolor{tabtealfirst!55!white}0.305 & \cellcolor{tabtealfirst!55!white}8.12 \\
\addlinespace[2pt]
\multicolumn{22}{@{}c@{}}{\textcolor{black!65}{\textit{Hybrid primitives}}} \\
BBSplat & \cellcolor{tabbluefirst!93!blue}64.7 & 8.90 & 49.8 & \cellcolor{tabbluefirst!93!blue}24.41 & \cellcolor{tabbluefirst!93!blue}0.842 & \cellcolor{tabbluefirst!93!blue}0.224 & \cellcolor{tabbluefirst!93!blue}146.64 & \cellcolor{tabbluefirst!93!blue}77.7 & 14.45 & 34.1 & \cellcolor{tabbluefirst!93!blue}26.50 & \cellcolor{tabbluefirst!93!blue}0.767 & \cellcolor{tabbluefirst!93!blue}0.296 & \cellcolor{tabbluefirst!93!blue}96.28 & \cellcolor{tabbluefirst!93!blue}22.0 & \cellcolor{tabbluefirst!93!blue}1.88 & 125.4 & \cellcolor{tabbluefirst!93!blue}30.19 & \cellcolor{tabbluefirst!93!blue}0.913 & 0.228 & \cellcolor{tabbluefirst!93!blue}25.91 \\
Nexels & 76.9 & \cellcolor{tabbluefirst!93!blue}4.86 & \cellcolor{tabbluefirst!93!blue}77.8 & 23.48 & 0.803 & 0.272 & 183.02 & 91.2 & \cellcolor{tabbluefirst!93!blue}8.04 & \cellcolor{tabbluefirst!93!blue}44.2 & 25.49 & 0.736 & 0.304 & 164.99 & 31.5 & 3.14 & \cellcolor{tabbluefirst!93!blue}142.9 & 29.00 & 0.891 & \cellcolor{tabbluefirst!93!blue}0.209 & 143.37 \\
\addlinespace[2pt]
\multicolumn{22}{@{}c@{}}{\textcolor{black!65}{\textit{Surface primitives}}} \\
2DGS & \cellcolor{tabsecond}12.4 & 2.64 & 193.2 & 21.40 & 0.741 & 0.396 & 40.01 & \cellcolor{tabsecond}22.0 & 5.15 & 161.9 & 21.65 & 0.573 & 0.566 & 27.98 & \cellcolor{tabthird}9.1 & 1.01 & 364.8 & 24.83 & 0.832 & 0.399 & 8.29 \\
Triangle Spl. & \cellcolor{tabfirst}11.6 & 1.08 & \cellcolor{tabsecond}407.2 & 22.90 & 0.803 & 0.310 & 47.63 & \cellcolor{tabfirst}18.4 & 1.32 & 278.7 & 24.73 & 0.677 & 0.434 & 30.21 & \cellcolor{tabfirst}8.4 & 0.79 & 488.1 & 27.14 & 0.860 & 0.331 & 8.39 \\
MeshSplatting & 18.0 & 2.59 & 116.8 & 16.42 & 0.599 & 0.548 & \cellcolor{tabthird}28.33 & 31.5 & 4.54 & 60.9 & 18.93 & 0.498 & 0.620 & \cellcolor{tabthird}19.90 & 12.5 & 1.96 & 135.3 & 22.96 & 0.729 & 0.471 & \cellcolor{tabthird}4.58 \\
\textbf{Hexels V1} & \cellcolor{tabthird}13.3 & \cellcolor{tabfirst}0.56 & \cellcolor{tabfirst}438.4 & \cellcolor{tabthird}23.05 & \cellcolor{tabthird}0.811 & \cellcolor{tabthird}0.295 & \cellcolor{tabfirst}16.22 & \cellcolor{tabthird}24.2 & \cellcolor{tabfirst}0.70 & \cellcolor{tabfirst}377.8 & \cellcolor{tabthird}24.82 & \cellcolor{tabthird}0.705 & \cellcolor{tabthird}0.410 & \cellcolor{tabfirst}10.22 & \cellcolor{tabfirst}8.4 & \cellcolor{tabfirst}0.55 & \cellcolor{tabsecond}830.8 & \cellcolor{tabthird}28.94 & \cellcolor{tabthird}0.890 & \cellcolor{tabthird}0.303 & \cellcolor{tabfirst}3.04 \\
\textbf{Hexels V2} & 14.4 & \cellcolor{tabsecond}0.60 & \cellcolor{tabthird}386.9 & \cellcolor{tabsecond}23.49 & \cellcolor{tabsecond}0.819 & \cellcolor{tabsecond}0.287 & \cellcolor{tabsecond}22.77 & 24.5 & \cellcolor{tabsecond}0.74 & \cellcolor{tabsecond}375.0 & \cellcolor{tabsecond}25.06 & \cellcolor{tabsecond}0.711 & \cellcolor{tabsecond}0.405 & \cellcolor{tabsecond}14.35 & \cellcolor{tabsecond}8.5 & \cellcolor{tabsecond}0.56 & \cellcolor{tabfirst}845.6 & \cellcolor{tabsecond}29.57 & \cellcolor{tabsecond}0.895 & \cellcolor{tabsecond}0.298 & \cellcolor{tabsecond}4.26 \\
\textbf{Hexels V3} & 14.6 & \cellcolor{tabthird}0.71 & 370.6 & \cellcolor{tabfirst}23.78 & \cellcolor{tabfirst}0.824 & \cellcolor{tabfirst}0.281 & 48.99 & 24.3 & \cellcolor{tabthird}0.81 & \cellcolor{tabthird}366.9 & \cellcolor{tabfirst}25.41 & \cellcolor{tabfirst}0.716 & \cellcolor{tabfirst}0.401 & 30.86 & \cellcolor{tabsecond}8.5 & \cellcolor{tabthird}0.58 & \cellcolor{tabthird}792.3 & \cellcolor{tabfirst}29.95 & \cellcolor{tabfirst}0.899 & \cellcolor{tabfirst}0.293 & 9.15 \\
\bottomrule
\end{tabular}}
\end{table*}

\subsection{Novel View Synthesis}
 
\paragraph{Matched memory.}
\label{sec:fixed_memory}
Like BBSplat's fixed-storage comparisons~\cite{svitov2025bbsplat}, we constrain baseline primitive counts and other learned components to compare methods under matched uncompressed model-memory budgets.
Table~\ref{table:fixed_memory} compares all methods at the V1 model-memory
budget along with corresponding training resources and rendering rates; additional comparisons at the V2 and V3 budgets are provided in Appendix~\ref{app:nvs_budgets}. At matched model-memory budgets, Hexels V1 improves PSNR over 2DGS
by 4.53--7.38\,dB and over Triangle Splatting by 1.70--2.90\,dB across the
three benchmarks. Average rendering speed (FPS) range from 377.8 to 830.8\,FPS, or $1.54$--$1.87\times$ those of 2DGS. On Mip-NeRF360, the mean PSNR gain is 6.99\,dB. All these improvements come with upto $3.6\times$ less VRAM usage than 2DGS. 
Figure~\ref{fig:quality_memory} shows that Hexels retain fine scene structure
under these compact memory budgets. This matched-memory evaluation is our
primary efficiency setting, in which Hexels establish the strongest overall
quality--efficiency trade-off among the evaluated methods. Full experimental
details are provided in Appendix~\ref{sec:nvs_evaluation_protocol}.

\begin{figure*}[tp]
\centering
\includegraphics[width=\textwidth]{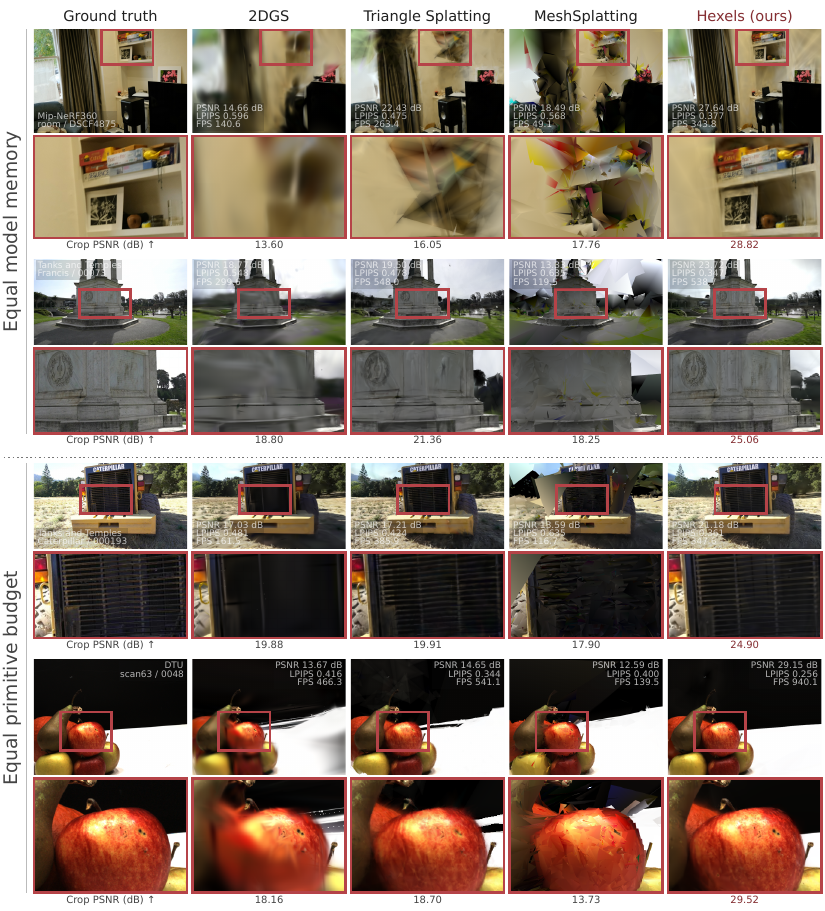}
\caption{\textbf{Qualitative comparisons under matched model-memory
and primitive budgets.}
Top: fixed Hexels V1 on room (Mip-NeRF360) and Francis (Tanks and Temples).
Bottom: fixed V3 on Caterpillar (Tanks and Temples) and DTU scan63.
Zoomed crops highlight sharper shelf detail, pedestal carvings,
radiator slats, and fruit texture achieved by Hexels over baseline methods.}
\label{fig:quality_memory}
\label{fig:quality_count}
\end{figure*}

\begin{table*}[t]
\caption{\textbf{Native novel-view synthesis.} Native baseline settings are compared with densified Hexels V3 and fixed-population V2. Deploy. gives deployment size in MB; Prims. gives primitive count in thousands. Mip-NeRF 360 uses scene-count-weighted means over five outdoor and four indoor scenes. Colors rank methods within each primitive group.}
\label{tab:nvs_native_mip_outdoor}
\label{table:unconstrained}
\label{tab:nvs_native_dtu}
\footnotesize
\flushrules
\setlength{\tabcolsep}{3pt}
\renewcommand{\arraystretch}{1.14}
\fittabular{25}{%
\begin{tabular}{@{}lcccccccc@{\hspace{0pt}}cccccccc@{\hspace{0pt}}cccccccc@{}}
\toprule
\multicolumn{1}{@{}l}{\multirow{3}{*}{Method}} & \multicolumn{8}{c}{Tanks and Temples (10)} & \multicolumn{8}{c}{Mip-NeRF 360 (9)} & \multicolumn{8}{c@{}}{DTU (15)} \\
\cmidrule(lr){2-9}\cmidrule(lr){10-17}\cmidrule(l){18-25}
 & Train & VRAM & FPS & PSNR & SSIM & LPIPS & Deploy. & Prims. & Train & VRAM & FPS & PSNR & SSIM & LPIPS & Deploy. & Prims. & Train & VRAM & FPS & PSNR & SSIM & LPIPS & Deploy. & Prims. \\
 & min $\downarrow$ & GiB $\downarrow$ & $\uparrow$ & dB $\uparrow$ & $\uparrow$ & VGG $\downarrow$ & MB $\downarrow$ & K & min $\downarrow$ & GiB $\downarrow$ & $\uparrow$ & dB $\uparrow$ & $\uparrow$ & VGG $\downarrow$ & MB $\downarrow$ & K & min $\downarrow$ & GiB $\downarrow$ & $\uparrow$ & dB $\uparrow$ & $\uparrow$ & VGG $\downarrow$ & MB $\downarrow$ & K \\
\midrule
\multicolumn{25}{@{}c@{}}{\textcolor{black!65}{\textit{Volumetric primitives}}} \\
3DGS & \cellcolor{tabtealfirst!55!white}18.1 & \cellcolor{tabtealfirst!55!white}4.65 & \cellcolor{tabtealfirst!55!white}157.5 & 24.35 & \cellcolor{tabtealfirst!55!white}0.846 & 0.237 & \cellcolor{tabtealfirst!55!white}294.75 & 1188.5 & \cellcolor{tabtealfirst!55!white}32.8 & \cellcolor{tabtealfirst!55!white}8.60 & \cellcolor{tabtealfirst!55!white}94.7 & \cellcolor{tabtealfirst!55!white}27.55 & \cellcolor{tabtealfirst!55!white}0.817 & 0.259 & \cellcolor{tabtealfirst!55!white}680.48 & 2743.8 & \cellcolor{tabtealfirst!55!white}10.5 & 1.72 & \cellcolor{tabtealfirst!55!white}301.8 & \cellcolor{tabtealfirst!55!white}28.91 & \cellcolor{tabtealfirst!55!white}0.910 & \cellcolor{tabtealfirst!55!white}0.249 & 97.27 & 392.2 \\
3D Convex & 70.2 & 13.95 & 44.4 & \cellcolor{tabtealfirst!55!white}24.80 & 0.841 & \cellcolor{tabtealfirst!55!white}0.224 & 488.65 & 1586.5 & 97.8 & 14.68 & 35.1 & 27.24 & 0.802 & \cellcolor{tabtealfirst!55!white}0.245 & 792.46 & 2572.9 & 21.8 & \cellcolor{tabtealfirst!55!white}1.32 & 136.3 & 28.49 & 0.898 & 0.254 & \cellcolor{tabtealfirst!55!white}22.21 & 72.1 \\
\addlinespace[2pt]
\multicolumn{25}{@{}c@{}}{\textcolor{black!65}{\textit{Hybrid primitives}}} \\
BBSplat & 100.5 & 30.67 & 27.4 & \cellcolor{tabbluefirst!93!blue}24.52 & \cellcolor{tabbluefirst!93!blue}0.844 & \cellcolor{tabbluefirst!93!blue}0.218 & \cellcolor{tabbluefirst!93!blue}229.48 & 300.0 & 156.7 & 21.77 & 14.0 & 26.75 & 0.783 & 0.274 & \cellcolor{tabbluefirst!93!blue}184.02 & 237.8 & \cellcolor{tabbluefirst!93!blue}34.3 & 13.34 & 74.5 & \cellcolor{tabbluefirst!93!blue}30.26 & \cellcolor{tabbluefirst!93!blue}0.914 & 0.221 & \cellcolor{tabbluefirst!93!blue}41.99 & 60.0 \\
Nexels & \cellcolor{tabbluefirst!93!blue}94.5 & \cellcolor{tabbluefirst!93!blue}7.54 & \cellcolor{tabbluefirst!93!blue}45.8 & 24.14 & 0.829 & 0.234 & 241.35 & 399.6 & \cellcolor{tabbluefirst!93!blue}116.3 & \cellcolor{tabbluefirst!93!blue}11.64 & \cellcolor{tabbluefirst!93!blue}25.3 & \cellcolor{tabbluefirst!93!blue}27.35 & \cellcolor{tabbluefirst!93!blue}0.802 & \cellcolor{tabbluefirst!93!blue}0.235 & 241.40 & 399.8 & 43.2 & \cellcolor{tabbluefirst!93!blue}3.82 & \cellcolor{tabbluefirst!93!blue}96.5 & 29.03 & 0.905 & \cellcolor{tabbluefirst!93!blue}0.186 & 241.40 & 399.8 \\
\addlinespace[2pt]
\multicolumn{25}{@{}c@{}}{\textcolor{black!65}{\textit{Surface primitives}}} \\
2DGS & \cellcolor{tabsecond}19.2 & \cellcolor{tabsecond}4.37 & 103.6 & \cellcolor{tabsecond}23.64 & \cellcolor{tabthird}0.829 & \cellcolor{tabthird}0.270 & \cellcolor{tabsecond}177.56 & 727.7 & \cellcolor{tabthird}41.2 & \cellcolor{tabsecond}11.06 & 48.9 & \cellcolor{tabsecond}26.82 & \cellcolor{tabsecond}0.798 & \cellcolor{tabsecond}0.301 & \cellcolor{tabthird}494.72 & 2027.5 & \cellcolor{tabthird}12.7 & \cellcolor{tabthird}2.04 & 213.2 & \cellcolor{tabthird}28.86 & \cellcolor{tabfirst}0.907 & \cellcolor{tabsecond}0.265 & 56.20 & 230.3 \\
Triangle Spl. & 38.6 & 14.44 & \cellcolor{tabthird}121.4 & 23.15 & \cellcolor{tabsecond}0.838 & \cellcolor{tabfirst}0.227 & 1039.86 & 2821.0 & 42.8 & 14.52 & \cellcolor{tabthird}85.8 & \cellcolor{tabfirst}27.02 & \cellcolor{tabfirst}0.808 & \cellcolor{tabfirst}0.233 & 1093.15 & 3595.3 & 16.5 & 2.38 & \cellcolor{tabthird}278.7 & 27.05 & 0.873 & \cellcolor{tabfirst}0.257 & 127.76 & 362.6 \\
MeshSplatting & 49.3 & \cellcolor{tabthird}10.38 & 43.3 & 20.73 & 0.754 & 0.358 & 498.08 & 3694.1 & 73.2 & \cellcolor{tabthird}13.48 & 23.8 & 24.74 & 0.731 & 0.366 & 707.77 & 5379.5 & 17.0 & 5.64 & 94.7 & 25.85 & 0.810 & 0.408 & \cellcolor{tabthird}34.71 & 285.1 \\
\textbf{Hexels V3} & \cellcolor{tabthird}24.3 & 12.06 & \cellcolor{tabsecond}142.7 & \cellcolor{tabfirst}24.53 & \cellcolor{tabfirst}0.839 & \cellcolor{tabsecond}0.248 & \cellcolor{tabthird}198.84 & 739.1 & \cellcolor{tabsecond}37.5 & 13.95 & \cellcolor{tabsecond}100.7 & \cellcolor{tabthird}26.52 & \cellcolor{tabthird}0.780 & \cellcolor{tabthird}0.307 & \cellcolor{tabsecond}246.76 & 917.3 & \cellcolor{tabsecond}10.6 & \cellcolor{tabsecond}1.21 & \cellcolor{tabsecond}483.1 & \cellcolor{tabfirst}29.76 & \cellcolor{tabsecond}0.902 & \cellcolor{tabthird}0.268 & \cellcolor{tabsecond}27.03 & 100.4 \\
\textbf{Hexels V2} & \cellcolor{tabfirst}14.4 & \cellcolor{tabfirst}0.60 & \cellcolor{tabfirst}386.9 & \cellcolor{tabthird}23.49 & 0.819 & 0.287 & \cellcolor{tabfirst}22.77 & 182.1 & \cellcolor{tabfirst}24.5 & \cellcolor{tabfirst}0.74 & \cellcolor{tabfirst}375.0 & 25.06 & 0.711 & 0.405 & \cellcolor{tabfirst}14.35 & 114.7 & \cellcolor{tabfirst}8.5 & \cellcolor{tabfirst}0.56 & \cellcolor{tabfirst}845.6 & \cellcolor{tabsecond}29.57 & \cellcolor{tabthird}0.895 & 0.298 & \cellcolor{tabfirst}4.26 & 34.0 \\
\bottomrule
\end{tabular}}
\end{table*}

\paragraph{Matched primitive budget.}
\label{sec:fixed_primitives}

We use this equal-primitive experiment to ablate the Hexel primitive design as a
whole under an equal primitive population ceiling. Table~\ref{table:fixed_primitives}
uses each scene's SfM point count as the common budget across all methods. Hexel V1/V2 provide compact alternatives, while V3 uses a richer appearance model with a larger per-primitive memory cost. V3 outperforms 2DGS, Triangle Splatting, and MeshSplatting in mean PSNR across all three datasets. V3 exceeds Triangle Splatting in mean PSNR by 0.68\,dB on Mip-NeRF360, 2.82\,dB on DTU and 0.88\,dB on Tanks and Temples. Hexels remain the most efficient primitives amongst all compared surface alignment methods. While we find that BBSplat sometimes outperforms Hexels, it comes at the expense of all relevant efficiency metrics, including training time, FPS, VRAM, and memory, as it requires a separate appearance model and therefore belongs to a separate primitive category. Figure~\ref{fig:quality_count} shows the detail retained by Hexels under this common primitive budget. At the same primitive budget, Hexels V2 outperforms 2DGS on all quality metrics across all three datasets, improving PSNR by up to 4.74\,dB while using up to $7\times$ lower peak training VRAM, 1.76--1.95$\times$ less deployment memory, and rendering up to 2.31$\times$ faster.

\paragraph{Native baselines.}
\label{sec:native}
Although unconstrained training is not the primary focus of our
efficiency-oriented evaluation, we also compare against all baselines in their
native settings in Table~\ref{table:unconstrained}. This tests whether Hexels
remain competitive when model memory and primitive count are unrestricted.
On Mip-NeRF360, densified V3 is within 0.30\,dB of 2DGS while rendering
$2.06\times$ faster and using $2.00\times$ less deployment memory.
On DTU and T\&T, it improves PSNR over 2DGS by 0.90 and 0.90\,dB,
respectively, while rendering $2.26\times$ and $1.37\times$ faster.
Hexel (FP32 deployment) memory is 27.03\,MB versus 56.20\,MB for 2DGS on DTU, and 198.84\,MB versus 177.56\,MB on T\&T. BBSplat remains
0.50\,dB better on DTU, while 3D Convex Splatting is 0.27\,dB better on T\&T; however, neither provides a direct comparison to Hexels as a native surface-aligned primitive. The fixed V2 variant provides the corresponding lower-cost yet high-performance alternative. In fact, for comparative visual quality, V2 uses upto $14\times$ less peak VRAM during training and upto $34\times$ less memory during inference compared to 2DGS. Figure~\ref{fig:quality_native} compares densified
V3 with the native/unconstrained baselines on Mip-NeRF360 and Tanks and Temples scenes. 

Across 34 scenes, Hexels consistently achieve a strong quality--efficiency trade-off. In our primary
matched-memory setting, Hexels outperform every evaluated baseline in PSNR,
including volumetric and textured or neural representations. We further
evaluate the primitive design as a whole under the same primitive-count
budget. In this setting, Hexels outperform all evaluated native
surface-aligned methods while using substantially less training VRAM and,
for V1/V2, less deployment/inference storage memory; Hexels also achieves the highest FPS on each
dataset. Finally, Hexels remain competitive in the native unconstrained
setting, where the fixed and densified variants together provide the strongest
overall quality--efficiency trade-off among the evaluated native
surface-aligned representations.

\begin{figure*}[ht]
\centering
\includegraphics[width=\textwidth]{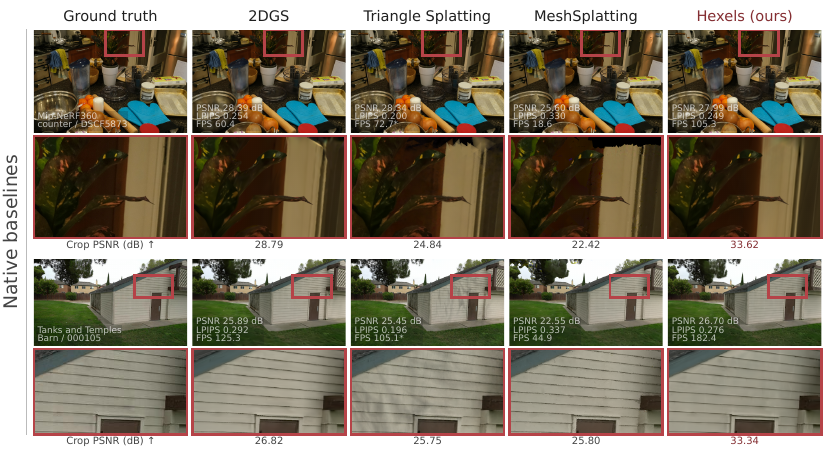}
\caption{\textbf{Unconstrained/Native NVS on Mip-NeRF360 and Tanks and Temples.}
Densified Hexels V3 preserves fine detail, with PSNR gains (in crops) of 4.84\,dB and 6.52\,dB over the best baseline.}
\label{fig:quality_native}
\end{figure*}

\subsection{Efficient Surface Reconstruction}
\label{sec:geometry_main}

We further evaluate surface reconstruction on all 15 DTU scans and 6 Tanks and Temples surface scenes. As can be seen from the results (Table~\ref{tab:dtu_memory_main}), our primary comparison is with 2DGS, and our primary setting is memory-matched and primitive-matched as before, while we also report unconstrained results. Table~\ref{tab:dtu_memory_main} reports mean DTU Chamfer distance (CD). Hexels reaches 0.978\,mm, compared with 1.375\,mm and 1.449\,mm for the memory- and primitive-controlled 2DGS variants. Triangle Splatting and MeshSplatting obtain 1.124\,mm and 1.662\,mm, respectively. Unconstrained 2DGS achieves 0.794\,mm, however trails Hexels in memory usage which use 0.61\,GiB mean peak training VRAM, versus 1.02\,GiB and 0.98\,GiB for the two memory/primitive matched 2DGS variants. Notably, Hexels in Table~\ref{tab:dtu_memory_main} are fixed, whereas other methods allow densification and pruning even beyond the final memory/primitive capacity. We report further improvements in surface reconstruction with densification for Hexels in Appendix~\ref{app:surface_ablation}. In summary, Hexels retain substantially better surface geometry for the same model memory and even the same number of primitives.

On Tanks and Temples, Table~\ref{tab:tnt_surface_main} reports the average of
the 6 scene-wise F1 scores at the official thresholds. Hexels achieves
34.38\%, versus 31.72\% for unconstrained 2DGS, and 22.05\% and 21.03\% for our main memory- and primitive-controlled settings, followed by 18.63\% for Triangle Splatting
and 27.13\% for MeshSplatting.

Figure~\ref{fig:dtu_surface_qualitative} shows qualitative results of surface reconstruction. Hexels demonstrate
reduced wall holes and surface fragmentation despite using significantly less VRAM and memory. 
\begin{table}[t]
\caption{\textbf{DTU surface reconstruction.}
We report mean Chamfer distance (CD) over 15 DTU scans. Hexels achieves 0.978\,mm,
vs 1.375\,mm and 1.449\,mm for memory-matched and primitive-matched 2DGS, respectively with lower peak VRAM usage than all 2DGS variants.}
\label{tab:dtu_memory_main}
\small
\flushrules
\renewcommand{\arraystretch}{1.15}
\fittabular[6pt]{3}{%
\begin{tabular}{@{}lrr@{}}
\toprule
Method & CD (mm)$\downarrow$ & VRAM (GiB)$\downarrow$ \\
\midrule
2DGS unconstrained & \cellcolor{tabfirst}0.794 & 1.93 \\
2DGS matched mem. & 1.375 & \cellcolor{tabthird}1.02 \\
2DGS matched prim. & 1.449 & \cellcolor{tabsecond}0.98 \\
Triangle Splatting & \cellcolor{tabthird}1.124 & 2.29 \\
MeshSplatting & 1.662 & 2.65 \\
\textbf{Hexels (V3)} & \cellcolor{tabsecond}0.978 & \cellcolor{tabfirst}0.61 \\
\bottomrule
\end{tabular}}
\end{table}

\begin{table}[t]
\caption{\textbf{Tanks and Temples surface reconstruction.}
We report mean F1 over six scenes, with higher values indicating better reconstruction. Hexels achieves the highest score among native and budget-controlled 2DGS, as well as reproduced triangle and mesh baselines.}
\label{tab:tnt_surface_main}
\begingroup
\small
\flushrules
\renewcommand{\arraystretch}{1.3}
\fittabular[6pt]{2}{%
\begin{tabular}{@{}lr@{}}
\toprule
\multicolumn{2}{c}{Tanks and Temples (6 scenes)}\\
\cmidrule(lr){1-2}
Method & F1 (\%)$\uparrow$\\
\midrule
2DGS unconstrained & \cellcolor{tabsecond}31.72 \\
2DGS matched mem. & 22.05 \\
2DGS matched prim. & 21.03 \\
Triangle Splatting & 18.63 \\
MeshSplatting & \cellcolor{tabthird}27.13 \\
\textbf{Hexels (V3)} & \cellcolor{tabfirst}34.38 \\
\bottomrule
\end{tabular}}
\endgroup
\end{table}

\begin{figure*}[htb]
\centering
\includegraphics[width=\textwidth]{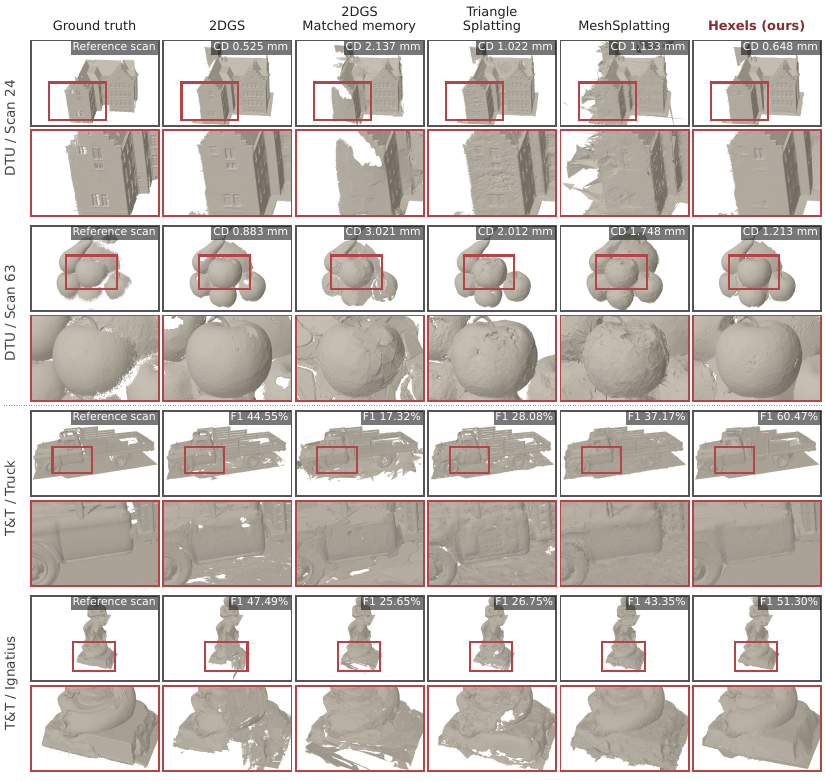}
\caption{\textbf{Efficient Surface reconstruction on DTU and Tanks and Temples scenes.} Hexels reduces wall holes, surface fragmentation (see Scan 63) and broken surfaces relative to memory-matched 2DGS and other surface splatting baselines.}
\label{fig:dtu_surface_qualitative}
\end{figure*}

\subsection{Applications}
\label{sec:applications}

\begin{figure}[htb]
 \centering
 \includegraphics[width=0.47\textwidth]{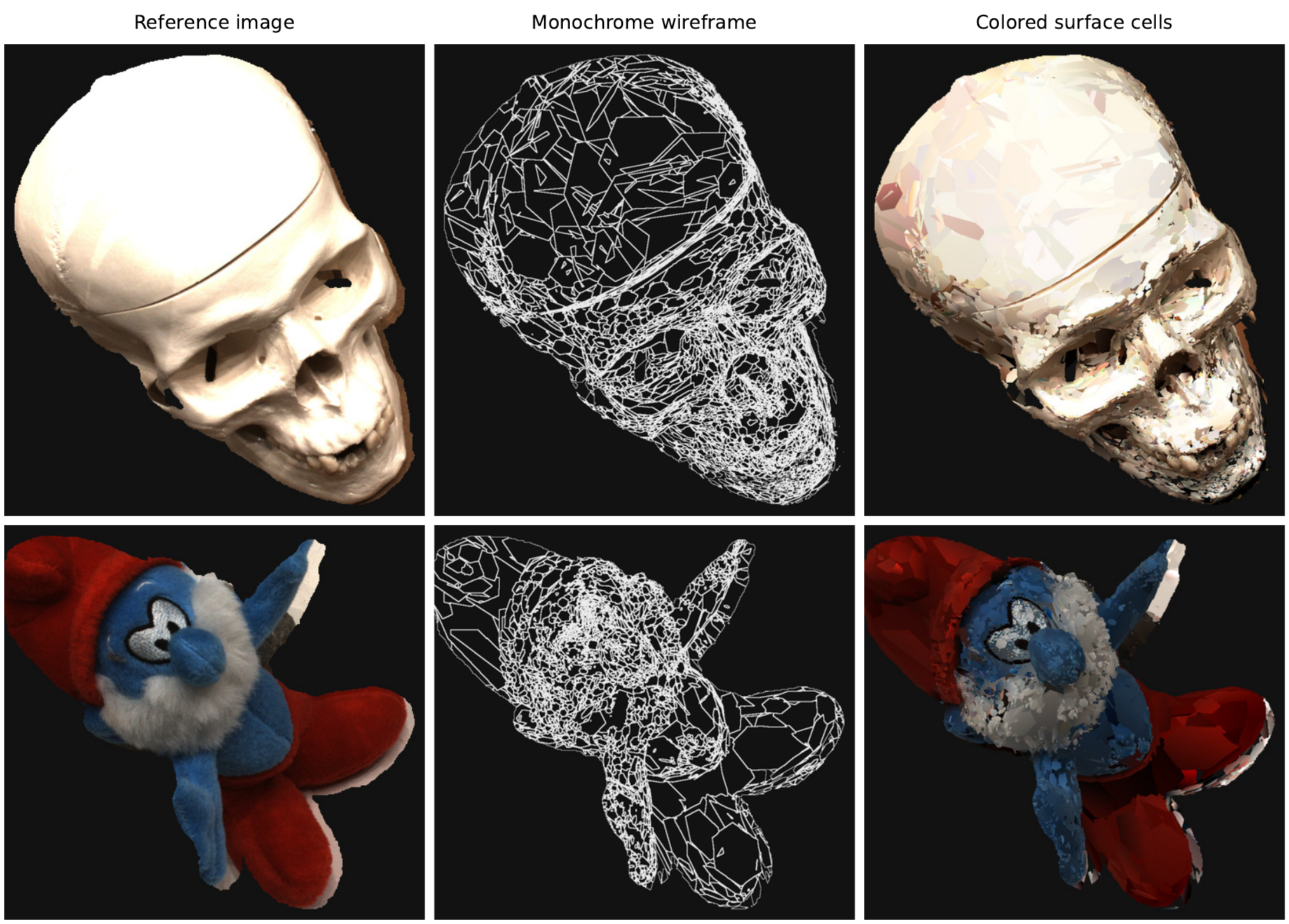}
 \caption{\textbf{Direct surface extraction from Hexels}.} 
 
 \label{fig:surface_export_references}
\end{figure}

\begin{figure*}[htb]
\centering
\includegraphics[width=.9\textwidth]{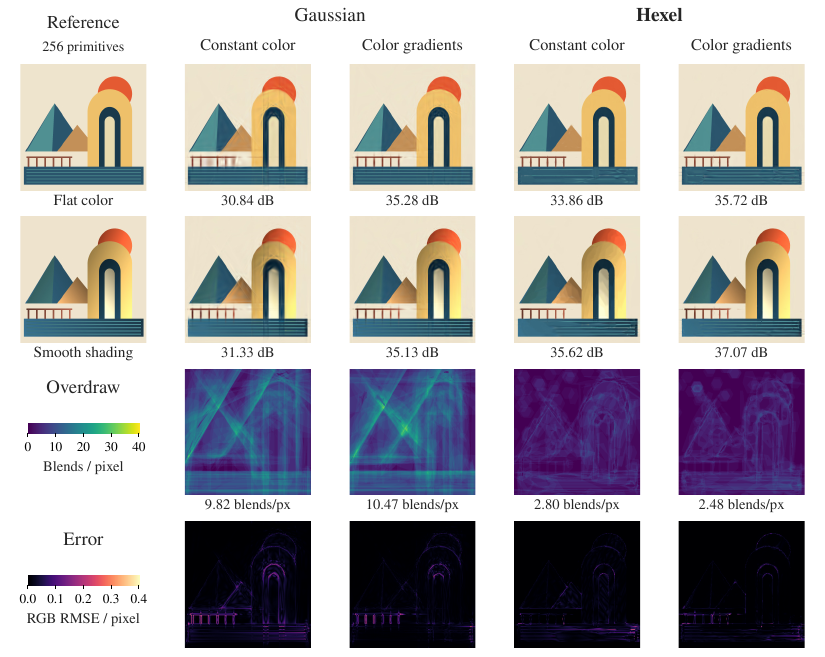}
\caption{\textbf{Shape, color, and overdraw ablation for Hexels.}
Gaussian and Hexel shapes are paired with constant color or spatial RGB color
gradients (V1) under same optimization conditions.
Top: flat and smoothly shaded images using 256 primitives; labels give PSNR.
Bottom: overdraw (how many primitives contribute to every pixel - blends/pixel) and per-pixel RGB error for the shaded target;
}
\label{fig:shape_color_overdraw}
\end{figure*}

\paragraph{Direct Surface-Cell Extraction}
\label{app:surface_export}

Hexels provide an explicit surface representation that can be exported
definably and in constant time after optimization from images. Unlike the depth-fusion pipeline
used for mesh extraction in 2DGS~\cite{huang2024_2dgs}, this conversion
requires no depth rendering, voxel grid, or Marching Cubes. The resulting
surface cells can be used directly in object-centric applications such
as 3D generation, segmentation, and representation learning, which we leave to future work.

For each retained Hexel, we transform the six vertices of its canonical
hexagon, defined by $d=0$, into world space using
Eq.~\eqref{eq:plane-to-world}. Connecting consecutive vertices produces
six edges and one polygonal face. Colors are obtained from the base
RGB or evaluated from the spatial color field. Each cell remains
associated with its parent Hexel, providing direct access to its geometry
and appearance, for any downstream application-based optimization, as the associated Position and parameters remain differentiable with respect to the task.

The conversion requires $O(1)$ work per Hexel. The exported cells can be optimized independently, opening avenues for many downstream applications in both 3D and 4D.
Figure~\ref{fig:surface_export_references} shows the resulting wireframes and surface cells. More details in Appendix~\ref{app:surface_cell_export}.

\paragraph{Resource-constrained reconstruction.}
Compact, fixed-population Hexels retain high reconstruction quality while
requiring little model memory and training VRAM and supporting fast rendering.
These gains arise directly from the primitive and do not rely on texture
atlases, neural decoders, pruning, or quantization. This makes Hexels well
suited to reconstruction and rendering on memory-limited and edge devices.
Pruning and quantization remain optional extensions that could further reduce
deployment storage.

We have also developed an efficient renderer that directly evaluates the same
Hexel representation used during training, without baking or conversion to a
separate deployment representation. We will release it together with the
training code. This provides a direct path toward on-device NVS and surface
reconstruction, where captured images and reconstructed scene data can remain
local to the device.

\subsection{Additional Ablation}
\label{sec:planar_shape_appearance}
To further isolate the effects of Hexel's shape and spatial appearance, we ablate
Gaussian and hexagonal shape supports with constant RGB color or first-order RGB gradients (Hexel V1).
We use planar image fitting for our ablation instead of a 3D scene to keep surface normals and depth order fixed, therefore
avoiding confounding changes in reconstructed geometry and multiview visibility.
All four settings of our ablation use the same primitive count,
initialization, loss (MSE),
and optimization (6K-step Adam). 

This differs from the Image-GS comparison in
Section~\ref{sec:hexels-2d}, which uses normalized top-$K$ blending and quantization-aware training. The objective here is to do one-to-one analysis of shape and appearance.

Figure~\ref{fig:shape_color_overdraw} compares the four ablative settings on flat and
smoothly shaded targets with 256 primitives. At either appearance
model, Hexels achieve higher PSNR than Gaussians on both targets, isolating
the effect of support shape. Adding spatial gradients improves both shapes,
showing the benefit of using our color gradients within each primitive, and providing support to our motivation for the appearance design.
The linear color gradient Hexel achieves the lowest error: on the shaded
target, it reaches 37.07\,dB versus 35.13\,dB for the same appearance Gaussian,
while reducing the overdraw - represented using mean blended contributions of primitives from 10.47 to 2.48 per pixel.
The error and contribution maps in the figure show these improvements.

Figure~\ref{fig:artwork_blending} shows more results regarding the overdraw benefits for complicated and detailed artworks. Hexels require significantly fewer primitives per pixel (reducing mean blends/pixel contributions by 71--80\% relative to their
Gaussian counterparts) to achieve similar perceptual quality. 

\section{Conclusion}
\label{sec:conclusion}

We introduced Hexels, a general surface primitive, for 2D and 3D, that exploits the local structure of the data it represents: hexagonal support efficiently covers
coherent surface regions, while a first-order spatial color field models
appearance variation within each primitive. Across our three 3D benchmarks,
Hexels establish a new quality--efficiency frontier among the evaluated surface-aligned representations, improving PSNR by up to 7.38\,dB at
matched model memory, using lower peak training VRAM and
up to 2$\times$ less learned state at matched primitive count, while rendering up
to 2.31$\times$ faster than 2DGS. The same primitive also performs strongly
for memory-constrained surface reconstruction and directly generalizes to
efficient 2D image representation. Hexels therefore move efficiency into the
primitive itself, opening a path toward high-quality novel-view synthesis
and explicit surface reconstruction on edge and resource-constrained
hardware.

\paragraph{Limitations and Future Work.}
A first-order color model cannot represent arbitrary texture within a
single primitive. High-frequency or uncorrelated appearance requires
smaller primitives or a richer color model, reducing the expected benefit
of exploiting local smoothness. Sparse initialization can also leave parts of a scene
poorly covered, and the independent surface cells do not form a watertight
mesh. Sharper footprints do not by themselves prevent sampling artifacts,
especially at small projected sizes or grazing views.
Hexels does not outperform every baseline on every metric;
the appendix provides the full comparisons. Our current primitives are planar. A natural extension is developable Hexels: curved patches with zero Gaussian curvature that remain locally isometric to the planar canonical hexagon. Such primitives could bend along one principal direction, for example, as cylindrical or conical patches, while preserving the simple 2D footprint and affine appearance parameterization. This may allow individual primitives to cover larger regions of locally near-developable geometry without introducing additional overlap.

\newpage
\bibliographystyle{ieeenat_fullname}
\bibliography{main}

\clearpage
\appendix
\begin{figure*}[!ht]
\centering
\includegraphics[width=\textwidth]{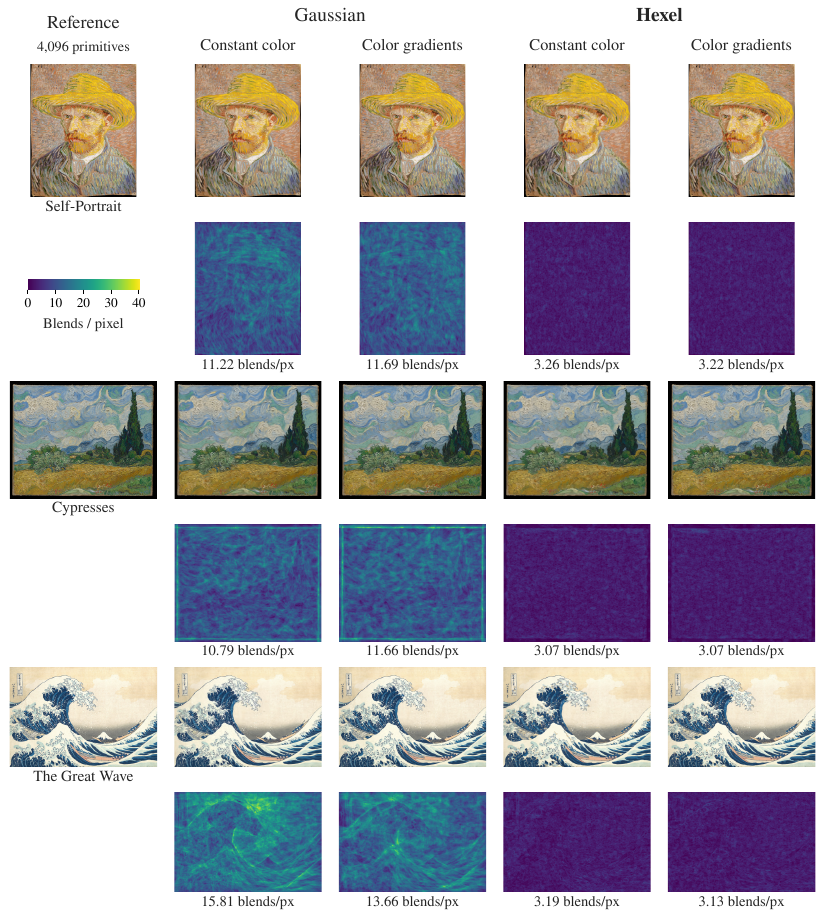}
\caption{\textbf{Overdraw Ablation.} Reference images and Gaussian/Hexel reconstructions are followed by their overdraw maps, using 4,096 primitives and same optimization setup.
Labels below the maps give mean blends/pixel \ie overdraw; The four ablative settings match those in
Figure~\ref{fig:shape_color_overdraw}. This visualization shows that the Hexel primitive requires lower overdraw for similar or better visual quality as compared to Gaussians.}
\label{fig:artwork_blending}
\end{figure*}

\section{Hexel Parameterization}
\label{app:hex-math}

\paragraph{Stored parameters.}
A 3D Hexel stores its center $\bp_i\in\mathbb{R}^3$, two positive tangent
scales $s_{u,i}$ and $s_{v,i}$, a quaternion defining its orientation,
opacity $o_i$, and support sharpness $\beta_i$. The quaternion is normalized
before constructing the tangent frame, and its first two axes define the
orthogonal tangent directions $\bt_{u,i}$ and $\bt_{v,i}$. Thus, the
geometric and support parameters require 11 learned scalars:
three for position, two for scale, four for orientation, one for opacity,
and one for support sharpness.

The V1 appearance model adds a base RGB value and two RGB spatial gradients,
for nine additional scalars. A V1 Hexel therefore contains 20 learned
scalars in total. V2 adds nine directional coefficients, while V3 adds
45 spherical-harmonic coefficients, giving 29 and 65 learned scalars,
respectively. 

\subsection{Appearance}
\label{app:appearance_variants}

The spatial appearance field used by all three variants is
\begin{equation}
    \bc_i^{\mathrm{sp}}(u,v)
    =
    \bc_{0,i}
    + u\,\bg_{u,i}
    + v\,\bg_{v,i},
    \label{eq:hex-spatial-appendix}
\end{equation}
where $\bc_{0,i}\in\mathbb{R}^3$ is the base color and
$\bg_{u,i},\bg_{v,i}\in\mathbb{R}^3$ are RGB gradients along the two local
Hexel coordinates. V1 uses this spatial field directly. V2 and V3 add a
view-dependent RGB residual:
\begin{equation}
    \bc_i(u,v,\bd_i)
    =
    \bc_i^{\mathrm{sp}}(u,v)
    + \br_i(\bd_i),
    \label{eq:hex-color-full}
\end{equation}
where $\bd_i$ is the normalized world-space direction from the camera center
to the Hexel center $\bp_i$, and
\begin{equation}
 \br_i(\bd_i)=
 \begin{cases}
  0, & \text{V1},\\[2pt]
  \bA_i\bd_i, & \text{V2},\\[4pt]
  \displaystyle
  \sum_{\ell=1}^{3}\sum_{m=-\ell}^{\ell}
       \ba_{i,\ell m}Y_{\ell m}(\bd_i),
       & \text{V3}.
 \end{cases}
 \label{eq:hex-appearance}
\end{equation}
Here $\bA_i\in\mathbb{R}^{3\times3}$ gives the nine coefficients of the
linear V2 residual. For V3, $Y_{\ell m}$ denotes a real spherical-harmonic
basis function and $\ba_{i,\ell m}\in\mathbb{R}^3$ its RGB coefficient.
Degrees one through three contain 15 basis functions, giving 45 learned
coefficients; the base RGB $\bc_{0,i}$ provides the view-independent
constant component.

The viewing direction is evaluated once at the Hexel center. Therefore, for
a fixed camera, the V2 and V3 residuals are constant over the primitive:
view dependence changes its RGB offset but not the spatial gradients
$\bg_{u,i}$ and $\bg_{v,i}$. 

\subsection{Derivatives}

\paragraph{Support derivatives.}
Let $\bu=(u,v)^\top$ and define the three unit normals
\begin{equation}
 \bn_0=(0,1)^\top,\quad
 \bn_1=(\sqrt{3}/2,1/2)^\top,\quad
 \bn_2=(\sqrt{3}/2,-1/2)^\top.
\end{equation}
Together, $\mathcal{N}=\{\pm\bn_0,\pm\bn_1,\pm\bn_2\}$ are the outward
normals of the six sides of the canonical hexagon. The hexagonal function
can therefore be written as
\begin{equation}
 d(\bu)
 =
 \max_{\bn\in\mathcal{N}}\bn^\top\bu-1
 =
 \max_{e\in\{0,1,2\}}|\bn_e^\top\bu|-1.
 \label{eq:hex-boundary-normal}
\end{equation}
Its zero level set defines the canonical hexagonal boundary, with $d<0$
inside and $d>0$ outside.

At a point where one projection is strictly larger than the other five,
let $\bn^\star$ denote the corresponding normal. For sufficiently small
changes in $\bu$, this projection remains the largest, so
$d(\bu)=\bn^{\star\top}\bu-1$ and
\begin{equation}
 \nabla_{\bu}d=\bn^\star.
 \label{eq:hex-boundary-grad}
\end{equation}
When two or more projections tie for the largest value, small movements
in different directions can select different side normals. The function
remains continuous, but has no ordinary gradient at that point. Any of
the tied maximizing normals is a valid subgradient: one of the slopes
that can be used for optimization at this corner of the function.
These ties form a zero-area set, so the ordinary gradient exists almost
everywhere. Although $d$ gives the exact hexagonal contour and
inside-outside sign, it is not the Euclidean signed distance outside
the hexagon near its vertices.

For the soft support $K(\bu)=\sigma(-\beta d(\bu))$, the chain rule gives
\begin{equation}
 \nabla_{\bu}K
 =
 -\beta K(1-K)\nabla_{\bu}d,
 \qquad
 \frac{\partial K}{\partial\beta}
 =
 -d(\bu)K(1-K).
 \label{eq:hex-footprint-grad}
\end{equation}
The spatial derivative holds wherever $d$ is differentiable. At a tie,
substituting a selected maximizing normal gives the corresponding gradient
choice for optimization. The derivative with respect to $\beta$ exists
at every point.

The rendered opacity contribution is $\alpha=\min\{oK,0.99\}$.
For $oK<0.99$,
\begin{equation}
 \frac{\partial\alpha}{\partial o}=K,
 \qquad
 \frac{\partial\alpha}{\partial K}=o.
\end{equation}
For $oK>0.99$, $\alpha$ is locally constant, and both derivatives are zero.
At $oK=0.99$, our backward implementation uses the unclamped-branch derivatives.

\paragraph{Color and compositing derivatives.}
For the spatial color field,
$\partial\bc_i^{\mathrm{sp}}/\partial u=\bg_{u,i}$ and
$\partial\bc_i^{\mathrm{sp}}/\partial v=\bg_{v,i}$.
Changes in local coordinates therefore affect both the support weight and
the color within the Hexel.

Let $\bc_{>i}$ be the color obtained by compositing only the retained
Hexels behind primitive $i$, including the background and starting with
transmittance one. With the retained contributions and their order fixed,
let $T_i=\prod_{j<i}(1-\alpha_j)$ be the transmittance reaching primitive
$i$. Then
\begin{equation}
 \frac{\partial\bc(\bx)}{\partial\bc_i}
 =T_i\alpha_i\bI_3,
 \qquad
 \frac{\partial\bc(\bx)}{\partial\alpha_i}
 =T_i\bigl(\bc_i-\bc_{>i}\bigr),
 \label{eq:hex-compositing-grad}
\end{equation}
where $\bI_3$ is the $3\times3$ identity matrix. The first expression
weights the color derivative by the primitive's blending weight. The
second describes how increasing its opacity replaces some of the color
behind it with its own color.

A back-to-front traversal accumulates these derivatives. Gradients
propagate to appearance and opacity, and through the local coordinates
and ray--plane intersection to geometry. For V2 and V3, moving the Hexel
center also changes its view-dependent color. We differentiate through
the numerically stabilized view direction
$\bd_i=(\bp_i-\bo)/\sqrt{\|\bp_i-\bo\|^2+10^{-8}}$,
where $\bo$ is the camera center.

Using $\boldsymbol\delta_i=\bp_i-\bo$ and
$\rho_i=\sqrt{\|\boldsymbol\delta_i\|^2+10^{-8}}$, the additional position
gradient is
\begin{equation}
 \frac{\partial\bd_i}{\partial\bp_i}
 =\frac{\bI_3}{\rho_i}
  -\frac{\boldsymbol\delta_i\boldsymbol\delta_i^\top}{\rho_i^3}.
 \label{eq:hex-view-jacobian}
\end{equation}
The V2 variant has Jacobian $\bA_i$ with
respect to $\bd_i$; Hexel V3 has a differentiable polynomial SH basis. Both
therefore contribute to the position gradient in addition to the
ray--plane geometry.

\paragraph{Optimization coordinates.}
For 3D applications, (positive) scales and sharpness parameters are
represented exponentially, and opacity is derived using a sigmoid:
$s_a=\exp(\ell_a)$, $\beta=\exp(b)$, and $o=\sigma(a)$.
For a latent or upstream gradient $\bar z=\partial\mathcal{L}/\partial z$,
\begin{equation}
 \bar\ell_a=s_a\bar s_a,\qquad
 \bar b=\beta\bar{\beta},\qquad
 \bar a=o(1-o)\bar o.
 \label{eq:hex-positive-gradients}
\end{equation}
For the forward pass, the kernel applies the floor function for sharpness parameter at
$10^{-4}$ for numerical stability. Below this value, its backward pass
uses the support derivative evaluated at the floor, using a
straight-through gradient value.

\section{Rasterization}
\label{app:hex-renderer}

\paragraph{Perspective-correct coordinates.}
Like 2DGS~\cite{huang2024_2dgs}, we recover local Hexel coordinates
through ray--plane intersection. Let $\bT_i\in\mathbb{R}^{3\times3}$
map local homogeneous coordinates $(u,v,1)^\top$ to homogeneous image
coordinates, combining Eq.~\eqref{eq:plane-to-world} with the calibrated
camera. It factors as $\bT_i=\bPi\bM_i$, where the
local-to-world matrix
\begin{equation}
 \begin{aligned}
 \bM_i&=
 \begin{pmatrix}
  s_{u,i}\bt_{u,i}&s_{v,i}\bt_{v,i}&\bp_i\\
  0&0&1
 \end{pmatrix}\in\mathbb{R}^{4\times3},\\
 \bT_i&=\bPi\bM_i.
 \end{aligned}
 \label{eq:hex-projective-map}
\end{equation}
is Eq.~\eqref{eq:plane-to-world} written as a
matrix: it maps $(u,v,1)^\top$ to the world point $P_i(u,v)$ in
homogeneous form, $(P_i(u,v),1)^\top$. It places the Hexel plane in
the world. $\bT_i$ is therefore the exact perspective map of the Hexel
plane, whereas 3DGS~\cite{kerbl3Dgaussians} projects each Gaussian with a
local affine approximation.
Denote its rows by $\bt_x^\top$, $\bt_y^\top$, and $\bt_z^\top$.
For a pixel $(x,y)$, define
\begin{equation}
 \bk=x\bt_z-\bt_x,\qquad
 \bl=y\bt_z-\bt_y,\qquad
 \mathbf{h}=\bk\times\bl.
 \label{eq:hex-ray-intersection}
\end{equation}
The intersection must satisfy
$\bk^\top(u,v,1)^\top=\bl^\top(u,v,1)^\top=0$.
Since $\mathbf{h}$ is orthogonal to both $\bk$ and $\bl$, the local
coordinates are $(u,v)=(h_x/h_z,h_y/h_z)$ when $h_z\neq0$.
This recovers the perspective-correct intersection without a per-pixel
matrix inverse. We skip intersections with $h_z=0$ or camera-space depth
below the near-plane threshold of $0.2$.
Before Hexels are assigned to image tiles
(see Visibility and compositing below), we also discard every Hexel whose
center has camera-space depth at most $0.2$.
The same local coordinates are used to evaluate the hexagonal support
and spatial color field.

\paragraph{Support bounds.}
The sigmoid is nonzero outside the canonical hexagon, so we bound the
region that can contribute above the rasterizer's alpha threshold
$\tau=1/255$. Since $\alpha_i\leq K_i$, a contribution with
$\alpha_i\geq\tau$ must satisfy $K_i\geq\tau$. Substituting
$K_i=\sigma(-\beta_i d)$ gives
\begin{equation}
 d(u,v)
 \leq
 \frac{1}{\beta_i}\log\frac{1-\tau}{\tau}
 =
 \frac{\log254}{\beta_i}.
 \label{eq:hex-support-cutoff}
\end{equation}
This defines a larger regular hexagon with center-to-edge distance
$1+\log254/\beta_i$. It is enclosed by a circle with radius
\begin{equation}
 r_i=\frac{2}{\sqrt{3}}
 \left(1+\frac{\log254}{\beta_i}\right).
 \label{eq:hex-support-radius}
\end{equation}
The radius is in local Hexel coordinates, before tangent scaling and
projection. We use this enclosing circle with the projective bounding
routine inherited from 2DGS to determine the image tiles assigned to
each primitive. The bound is conservative in the local plane because it
uses the maximum possible opacity, rather than the learned opacity of
each Hexel.
The kernel computes a second, tighter
box using the learned opacity. A contribution passes the threshold
only if $o_iK_i\geq\tau$, which holds only inside the smaller hexagon
$d(u,v)\leq\beta_i^{-1}\log(o_i/\tau-1)$;
a Hexel with $o_i<\tau$ with $o_i<\tau$ cannot contribute.

\paragraph{Visibility and compositing.}
We divide the image into $16\times16$-pixel tiles and assign each Hexel
to the tiles intersecting its projected bound. Within each tile,
primitives are sorted by their centers' camera-space depth. This order
is shared by the pixels in the tile; we do not sort intersections
separately at each pixel.

For each candidate intersection, we evaluate its opacity as
$\alpha_i=\min\{o_iK_i(u_i,v_i),0.99\}$ and skip contributions with
$\alpha_i<1/255$. Let $T$ be the remaining transmittance before processing
the candidate. If $T(1-\alpha_i)<10^{-4}$, we stop without accumulating
that candidate or updating $T$. Otherwise, its color contribution is
$T\alpha_i\bc_i(u_i,v_i,\bd_i)$, and we update
$T\leftarrow T(1-\alpha_i)$.

For the $m$ contributions actually accumulated at a pixel, indexed in
their compositing order, the final color is
\begin{equation}
 \begin{aligned}
 \bc(\bx)
 &=\sum_{i=1}^{m}T_i\alpha_i\bc_i(u_i,v_i,\bd_i)
   +T_{\mathrm{end}}\bc_{\mathrm{bg}},\\
 T_i&=\prod_{j<i}(1-\alpha_j),
 \qquad
 T_{\mathrm{end}}=\prod_{i=1}^{m}(1-\alpha_i).
 \end{aligned}
 \label{eq:hex-compositing}
\end{equation}
Here $\bc_{\mathrm{bg}}$ is the background color. Its weight is the
transmittance remaining after the contributions actually accumulated,
not the tentative value that triggered early termination.

\section{Densification and Pruning}
\label{app:adaptive-population}
Hexels (V1/V2/V3 - Fixed) do not primarily use densification and pruning to avoid exceeding memory and/or primitive constraints in our primary experiments. Given our focus on practical efficiency, we want to avoid any hacks to improve our performance by temporarily going over the memory or primitive limits - which is why the peak VRAM values are very important, as often, other efficiency-focused methods do not report them. We, however, also report the densified NVS and surface reconstruction counterparts in our results - primarily in unconstrained settings - which are not the focus of our work. Notably, our baseline methods are allowed to temporarily exceed memory and primitive constraints to achieve better results, as they often perform much worse when forced to remain below these requirements.

\paragraph{Mixed growth.}
Our densification strategy is similar to 3DGS and 2DGS. Between two growth triggers, for each Hexel, we track the norm of the loss gradient with respect to its projected 2D center, averaged over the iterations in which the Hexel is visible. A Hexel becomes a \textit{parent} Hexel when this average is large, it was visible often enough, and it is not too young. Small parents are cloned, and large parents are split. A parent Hexel counts as small when its larger tangent scale is at most 1\% of the scene extent. A clone is a copy of the parent Hexel whose center is moved within the parent's tangent plane by a random Gaussian offset, scaled by the parent's two scales. A split replaces the parent with two children, each offset randomly in the same plane, with both scales divided by $1.6$, as in 3DGS, and an opacity $o_{\mathrm{child}}=1-\sqrt{1-o_{\mathrm{parent}}}$. New Hexels copy the parent's appearance coefficients.

\paragraph{Contribution-based pruning.}
Pruning removes Hexels that contribute very little to the scene. Each pruning session renders all training views $\mathcal{V}_{\mathrm{train}}$ and scores Hexel $i$ by its mean blending weight

\begin{equation}
    q_i=\frac{1}{|\mathcal{V}_{\mathrm{train}}|}
 \sum_{v\in\mathcal{V}_{\mathrm{train}}}\frac{1}{|\Omega_v|}
 \sum_{\bx\in\Omega_v}T_{i,v}(\bx)\alpha_{i,v}(\bx),
 \label{eq:weak-pruning-score}
\end{equation}

where $\Omega_v$ is the set of pixels of view $v$ and $T_{i,v}(\bx)\alpha_{i,v}(\bx)$ is the weight of Hexel $i$ at pixel $\bx$ in Eq.~\eqref{eq:hex-compositing}. The weight is zero whenever the Hexel does not contribute -- outside its tiles, below the alpha threshold, or after early termination. A Hexel can be pruned when $q_i$ is below a threshold or has existed for longer than a grace period -- this gives opportunity for newer Hexels to contribute.

\paragraph{Schedule and thresholds.}
All densified (G) V1/V2/V3 models, for both NVS and surface reconstruction, use mixed growth every 500 iterations
from 5.5K through 14.5K, with a ceiling of 2M Hexels. A parent Hexel needs an average gradient norm of at least $2\times10^{-4}$, at least 20 observations since the previous growth trigger, and a minimum age of 500 iterations.
Clone and split offsets have standard deviations of $0.25$ and $0.5$ times the parent's scales, and a clone's opacity is at most $0.02$. Pruning runs every 1K iterations from 5K to 16K, with a 500-iteration grace period, a threshold of $10^{-7}$ on $q_i$, and at most 2\% of the Hexels are removed per trigger. At each growth trigger, we also remove every Hexel with opacity below $0.005$, regardless of its age or the 2\% limit. Opacities are never reset.

\section{2D Image Representation}
\label{app:hex-2d}
Hexels can represent an image directly by placing the primitives in the
image plane. We retain the hexagonal support and first-order spatial color
model, but replace 3D projection and depth-ordered compositing with
image-space evaluation and blending.

\paragraph{Geometry and function support.}
For a pixel $\bx=(x,y)^\top$, let $\bm_i=(m_{x,i},m_{y,i})^\top$ be the
Hexel center, $\bR(\theta_i)$ its planar rotation, and
$\bS_i=\operatorname{diag}(s_{x,i},s_{y,i})$ its positive scale matrix.
We compute local coordinates and the soft support weight
(for numerical stability) $\widetilde{\bS}_i=\bS_i+10^{-8}\bI_2$:
\begin{equation}
 \bu_i(\bx)=\widetilde{\bS}_i^{-1}\bR(\theta_i)^\top(\bm_i-\bx),
 \qquad
 w_i(\bx)=\sigma\!\left(-\beta_i d(\bu_i(\bx))\right),
 \label{eq:hex-2d-support}
\end{equation}
where $d$ is the hexagonal function from
Eq.~\eqref{eq:hex-boundary} and $\beta_i>0$ is the support sharpness.
This evaluates the support directly at each pixel, without projection
or ray--plane intersection. Since $d(\bu)=d(-\bu)$, reversing the
displacement does not change the support weight. Scales are optimized as log-scales. Hexels (V1) learn $\beta_i=\operatorname{softplus}(\tilde\beta_i)$, initialized near 10; the constant-color ablation uses a fixed $\beta=10$ shared by all primitives and not stored.

\paragraph{Spatial appearance.}
We use the first-order spatial color model of V1, expressed here in
image coordinates:
\begin{equation}
 \bc_i(\bx)
 =
 \bc_{0,i}
 +\eta\left[
 (m_{x,i}-x)\bg_{x,i}
 +(m_{y,i}-y)\bg_{y,i}
 \right],
 \label{eq:hex-2d-color}
\end{equation}
where $\bc_{0,i}$ is the base RGB color,
$\bg_{x,i},\bg_{y,i}\in\mathbb{R}^3$ are the spatial color coefficients,
and $\eta=0.1$ controls their scale. The coefficients describe color changes
along the image axes, using the center-minus-pixel displacement shown
above. Unlike Eq.~\eqref{eq:spatial-color}, these coordinates are not
rotated or normalized by the Hexel scales. Both formulations represent
an affine color field, but use different coordinate parameterizations.
The constant-color ablation omits the two spatial terms, giving
$\bc_i(\bx)=\bc_{0,i}$. Neither model uses view-dependent appearance.

\paragraph{Rendering.}
Following Image-GS~\cite{imagegs2025}, we retain the $K=10$ largest support
weights among the primitives assigned to the pixel's tile. Let
$\mathcal{S}_K(\bx)$ denote their indices. We blend their colors as
\begin{equation}
 \bc(\bx)=
 \frac{\displaystyle
       \sum_{i\in\mathcal{S}_K(\bx)}w_i(\bx)\bc_i(\bx)}
      {\displaystyle
       \sum_{i\in\mathcal{S}_K(\bx)}w_i(\bx)+\epsilon},
 \label{eq:hex-2d-blending}
\end{equation}
where $\epsilon=10^{-4}$ prevents division by zero. Unlike the 3D compositor,
this \textit{blend} has no transmittance, depth ordering, or learned opacity:
each retained primitive contributes according to its support weight
relative to the others. During optimization, gradients propagate through
the selected weights and colors to the Hexel parameters. The selected
indices are treated as fixed when computing these gradients. 
Backward pass reuses the saved top-$K$ indices and
reevaluates only their support weights and colors.

\paragraph{Finite support.}
Each Hexel is assigned to the $16\times16$ tiles overlapping a square of half-width
$r_i=\lceil\tfrac{2}{\sqrt3}\max(s_{x,i},s_{y,i})(1+4.5/\beta_i)\rceil$ pixels, which encloses the
contour $d=4.5/\beta_i$ where $w_i=\sigma(-4.5)\approx0.011$. Every candidate in these tiles is evaluated without a weight threshold, and pixels in tiles without candidates are rendered as zero. Unlike the 3D bound of Eq.~\eqref{eq:hex-support-cutoff}, this cutoff does not use $1/255$.

\paragraph{Experiment details.}
We fit all 45 Image-GS images at $2048\times2048$ resolution for a primary 2D representation experiment.
We use the Image-GS image-gradient initializer for both methods, with 30\% sampled
uniformly; base RGB values are sampled from the target image, scales are initialized at five pixels, while
rotations and spatial color gradients are zero-initialized. Each Hexel is initialized
with 20K primitives. From iterations 550 through 3000, we prune 200 primitives
every 50 iterations, converging to 10K primitives. Hexels use L1 plus $0.1(1-\mathrm{SSIM})$ loss and the Adam optimizer and
stop early at 5100 iterations since Hexels converge faster. In a smaller experiment, we found that further training to 10,000 iterations did see marginal improvements in quality. All parameters and optimizer states are in FP32. For all methods in Table~\ref{tab:hexels-2d} same precision is used for all parameters. Memory size includes every active parameter and any relevant decoding metadata. Figure~\ref{fig:2d_convergence} uses values with the mean of three different experimental runs, with the variance spread displaying the differences in convergence across the entire Image-GS dataset.

\paragraph{Primitive Overdraw.}
\label{app:overlap}
We distinguish the total number of primitives from the number covering
each pixel. Since both Gaussian and Hexel supports have nonzero tails,
we count a primitive as covering a pixel when its support weight reaches
a common fraction $\tau\in(0,1)$ of its own peak:
\begin{equation}
 C_\tau(\bx)=
 \sum_{i=1}^{N}\mathbf{1}\!\left[
 \frac{w_i(\bx)}{\max_{\mathbf y}w_i(\mathbf y)}\geq\tau
 \right].
 \label{eq:overlap-count}
\end{equation}
Here $N$ is the total primitive count and $\mathbf{1}$ is the indicator
function. A value $C_\tau(\bx)=0$ means that no primitive covers the
pixel at this threshold; $C_\tau(\bx)=1$ means that exactly one does,
and larger values indicate overlapping supports. We count supports
before top-$K$ selection, rather than summing the normalized blending
weights. This measures spatial overlap that we refer to as overdraw.

We summarize coverage by the mean of $C_\tau$ over the full image and
the fractions of pixels covered by zero or multiple supports. A mean tending to one can result from uncovered pixels
offsetting heavily overlapping regions. Primitive count and reconstruction
quality are also reported for completeness. Comparing methods at matched
reconstruction quality additionally tests whether Hexels require fewer
primitives, rather than simply reducing overlap at the expense of quality.

\section{Implementation Details}
\label{sec:experimental_details}

\subsection{Datasets and Metrics}
\label{sec:nvs_evaluation_protocol}
We evaluate 34 scenes: nine from Mip-NeRF360, fifteen DTU scans
(24, 37, 40, 55, 63, 65, 69, 83, 97, 105, 106, 110, 114, 118 and 122),
and ten from Tanks and Temples (Barn, Caterpillar, Courthouse, Francis,
Horse, Ignatius, Lighthouse, Meetingroom, Train and Truck).
Every eighth camera is held out, with identical cameras and images across
methods. Mip-NeRF360 uses quarter-resolution outdoor and half-resolution
indoor images; DTU and T\&T use half-resolution.
PSNR and SSIM use float RGB images, and LPIPS-VGG inputs are normalized. We average reported metrics over test views, then equally over scenes for each configuration.

Colors rank methods within each primitive family and dataset. Teal and blue mark the best volumetric and hybrid values, respectively; red, orange, and yellow mark the top three surface values. F/G denote fixed/grown (densified) Hexels.

\subsection{Initialization and Training}
\label{sec:training_details}

Hexels V1-V3 start from sparse COLMAP points and colors unless stated otherwise, with one
Hexel per point and its color used as the base RGB point. Rotations start at identity quaternion initialization, without estimated surface normals. Both tangent scales
are initialized to one quarter of the root-mean-square distance to the three
nearest neighbors, capped at 5\% of the scene camera extent. Opacity is initialized
at 0.1 and sharpness parameter at 10. Spatial color gradients and
view-dependent coefficients are initialized to zero. We optimize geometry, opacity,
sharpness, and appearance parameters jointly. We train using Adam optimizer for 30K iterations using the photometric loss and with SSIM weight 0.2. Experiments using densification, usually in the unconstrained case, use the method described in Appendix~\ref{app:adaptive-population}. Unlike some other baselines, for NVS experiments, Hexels do not use depth distortion, normal regularization, or MVS initialization.
Details regarding learning rates are given in Table~\ref{tab:training_parameters}.

\begin{table}[ht]
\centering
\caption{Initial learning rates for the Hexel V1-V3 models for NVS experiments. Only parameters enabled by a variant are optimized. All rates are halved after 15K, 20K, and 25K iterations.}
\label{tab:training_parameters}
\small
\setlength{\tabcolsep}{4pt}
\renewcommand{\arraystretch}{1.14}
\begin{tabular}{@{}lr@{}}
\toprule
Parameter & Learning rate \\
\midrule
Center & $1.6\times10^{-4}$ \\
Log tangent scales & $5\times10^{-3}$ \\
Rotation quaternion & $10^{-3}$ \\
Opacity logit & $10^{-1}$ \\
Base RGB & $10^{-2}$ \\
Spatial color gradients & $5\times10^{-3}$ \\
Linear view coefficients (V2) & $5\times10^{-3}$ \\
SH coefficients (V3) & $5\times10^{-4}$ \\
Log boundary sharpness & $10^{-2}$ \\
\bottomrule
\end{tabular}
\end{table}

\paragraph{2D image fitting.}
 For the image-representation experiments, Hexels use following learning rates:
$5\times10^{-4}$ for centers, $10^{-2}$ for (log-)scales, $2\times10^{-3}$
for rotation angles, $5\times10^{-3}$ for base RGB, and $10^{-3}$ for spatial
RGB gradients and support-sharpness parameters when present. All rates are
reduced tenfold at iteration 5001. Hexels stop at 5100 iterations after
pruning from 20K to 10K primitives, as detailed in Appendix~\ref{app:hex-2d}.
Image-GS retains its original parameterization and hyperparameters.

\subsection{Baseline Settings}
We use the authors' implementations of 2DGS, 3DGS, 3D Convex Splatting,
Triangle Splatting, BBSplat, MeshSplatting, and Nexels.
BBSplat trains for 32K iterations and includes compressed textures to reduce reported memory requirements post hoc.

Reported model memory, including matched model memory experiments, includes all primitive and appearance parameters,
including textures, feature grids and decoders.
Nexels use a reduced feature grid to fit within each experimental budget.
MeshSplatting is matched at initialization; native remeshing can change
its final memory.
Matched-primitive experiments limit final populations to the scene's
SfM point count while retaining native growth and pruning.
Native comparisons use each baseline's original population policy.

For DTU surface reconstruction experiment, native unconstrained and budget-controlled 2DGS use normal-consistency and depth-distortion loss weights of 0.05 and 1000, respectively. For the constrained experiments, the Hexels primitive or model-memory budget determines the final population ceiling; after initialization and each densification/pruning update, we retain the highest-opacity primitives up to that ceiling. This means that during densification, 2dGS can and does go beyond the set primitive limits - we note that if these limits are used strictly, 2DGS and other baselines perform poorly. This further shows that Hexels, which actually do strictly conform to the memory or primitive budgets, usually without doing densification, are state-of-the-art or surface line primitives in terms of efficiency-quality frontier. 

\subsection{Timing and Memory}
\label{sec:resource_protocol}

\paragraph{Hardware and rendering speed.}
Experiments use RTX A5000 GPUs; starred rows are experiments that were run on RTX 6000 Ada
as A5000 memory was insufficient for that specific experiment. For FPS calculations, we perform two warmup passes and run four trials of ten passes across all test camera viewpoints, with GPU synchronization. We report the median FPS per scene and average over scenes,
excluding loading, image transfer, and metric computation. 

\paragraph{2D image fitting/training timing.}
The 2D fitting times include optimization and pruning, excluding
initialization, evaluation and any post-hoc I/O. For Table~\ref{tab:hexels-2d}, all
three stored representations are decoded and timed using the same A5000
rendering protocol: 10 warmup renders followed by 5 trials of 20
renders, with GPU synchronization. We take the median latency per image,
convert it to FPS, and average equally over all 45 images. These repeated
renders of a single image measure rendering throughput, excluding file I/O.

\paragraph{Model memory and deployment size.}
Model memory is the memory of the uncompressed model used for inference: all FP32 parameters needed for rendering, without optimizer state.  Hexels V1/V2/V3 use 80/116/260 bytes per
primitive. Deployment size includes files needed for inference, including configuration
and appearance parameters, in decimal MB. NVS comparison tables report the original uncompressed FP32 inference files; compression for deployment is evaluated separately in Appendix~\ref{app:nvs_compression}.

\paragraph{Training resources.}
Peak VRAM is the sampled process GPU memory during training, in GiB;
dataset values average the per-scene peaks.
Training time includes initialization, optimization and checkpoint saving,
but excludes separate evaluation, compression, and video generation. For VRAM calculation, we run multiple runs and sample GPU memory at approximately 2 ms or 5 ms intervals using NVIDIA’s NVML API. All mean text VRAM calculations use this, while some in the appendix may use just 5ms sampling. 

\subsection{Hexel Surface-Cell Representation}
\label{app:surface_cell_export}
A Hexel can be used to specify a hexagonal cell through $d(u,v)=0$ in Eq.~\eqref{eq:hex-boundary}. The six vertices can be mapped analytically into the world space through Eq.~\eqref{eq:plane-to-world}. Importantly, this is differentiable and can be done in constant time per Hexel or any other polygonal primitive with low overdraw. However, this might not be possible for the more prevalent Gaussian-based surfels like 2DGS. A Gaussian surfel~\cite{huang2024_2dgs} has a smooth local representation $G(u,v)=\exp[-(u^2+v^2)/2]$, with no explicitly finite boundary; therefore, getting such a differentiable, constant-time surface cell representation is non-trivial.

\subsection{Spatial Appearance Ablations}

Tables~\ref{tab:nvs_hexels_appendix_mip_outdoor} and~\ref{tab:nvs_hexels_appendix_dtu} compare all six Hexels configurations (V1/V2/V3 with fixed or densified populations) on Mip-NeRF360, DTU, and Tanks and Temples.
V3 with densification achieves the highest PSNR across all three datasets. Fixed population variants require less memory and render faster. Densification improves V3's PSNR on Mip-NeRF360 and Tanks and Temples; on DTU, fixed V3 achieves slightly higher PSNR (29.95 versus 29.76\,dB), whereas densified V3 improves LPIPS (0.268 versus 0.293).

\section{Surface Reconstruction Ablations}
\label{app:surface_ablation}

In this section, we perform ablations on the Hexels' model underlying our DTU surface reconstruction results in Table~\ref{tab:dtu_memory_main}: a fixed-population V3 model initialized from multi-view stereo (MVS) points and trained with the normal-consistency and depth-distortion losses, similar to 2DGS. Starting with this configuration, we change one group of choices at a time.

\begin{itemize}
    \item \textbf{Surface losses} (Table~\ref{tab:surface_ablation}(a)): we drop normal consistency, depth distortion, or both.
    \item \textbf{Initialization} (Table~\ref{tab:surface_ablation}(a)): we start from sparse structure-from-motion (SfM) points instead of MVS points.
    \item \textbf{Appearance and population} (Table~\ref{tab:surface_ablation}(b)): keeping both losses, we train V1, V2, and V3 from SfM or MVS points, each with a fixed population or with the densification and pruning of Appendix~\ref{app:adaptive-population}.
\end{itemize}

Every experiment uses the same 15 DTU scans, all registered training views, 30,000 training iterations, with the same TSDF extraction and official DTU evaluation. Chamfer distance (CD) is the mean of accuracy and completeness, measured in millimeters and averaged over scans. When enabled, the normal-consistency and depth-distortion weights are 0.05 and 1000, respectively.

\paragraph{Surface losses.} Table~\ref{tab:surface_ablation}(a) reports 1.575\,mm CD without either loss, 1.246\,mm with depth distortion alone, 0.986\,mm with normal consistency alone and 0.978\,mm with both losses. Normal consistency provides the larger individual improvement, with a smaller complementary benefit from depth distortion.

\paragraph{Initialization.} For fixed V3 with both losses enabled, MVS initialization reduces mean CD from 1.146\,mm for structure-from-motion (SfM) initialization to 0.978\,mm, a 14.6\% reduction at unchanged primitive count and model memory.

\paragraph{Appearance and population.} In Table~\ref{tab:surface_ablation}(b), V3 has the lowest CD with either setting of initialization or population, and MVS initialization beats SfM for every variant. Densification improves CD in 5 out of 6 appearance mode and initialization pairs; the only exception is V3 with SfM (1.146\,mm fixed versus 1.193\,mm densified). Densified V3 from MVS has the lowest CD overall (0.922\,mm), but it needs 33.17\,MB model memory vs 8.83\,MB in the fixed variant, so the memory-matched comparison in Table~\ref{tab:dtu_memory_main} uses fixed V3. Figures~\ref{fig:surface_ablation} and~\ref{fig:surface_matrix_ablation} show the runs of Table~\ref{tab:surface_ablation}(a) and (b) on DTU scan24.

\begin{table}[t]
\caption{\textbf{DTU surface ablations (15 scans).} (a) Surface losses and initialization for fixed V3; the last row is the model of Table~\ref{tab:dtu_memory_main}. (b) Ablation over appearance model, initialization, and population, with normal and depth distortion losses enabled.}
\label{tab:surface_ablation}
\begingroup\small\flushrules
\setlength{\tabcolsep}{3pt}\renewcommand{\arraystretch}{1.3}
\fittabular[3pt]{5}{\begin{tabular}{@{}lllrr@{}}
\toprule
\multicolumn{5}{@{}l}{(a) Fixed-V3 surface-loss controls}\\
Initialization & Normal & Distortion & CD (mm)$\downarrow$ & Model (MB)$\downarrow$\\
\midrule
MVS & No & No & 1.575 & 8.83 \\
MVS & No & Yes & 1.246 & 8.83 \\
MVS & Yes & No & \cellcolor{tabsecond}0.986 & 8.83 \\
SfM & Yes & Yes & \cellcolor{tabthird}1.146 & 8.83 \\
MVS & Yes & Yes & \cellcolor{tabfirst}0.978 & 8.83 \\
\midrule
\multicolumn{5}{@{}l}{(b) Appearance, initialization and population}\\
Variant & Init. & Population & CD (mm)$\downarrow$ & Model (MB)$\downarrow$\\
\midrule
V1 & SfM & Fixed & 1.717 & \cellcolor{tabfirst}{2.72} \\
V1 & SfM & Densified & 1.406 & 10.20 \\
V1 & MVS & Fixed & 1.388 & \cellcolor{tabfirst}{2.72} \\
V1 & MVS & Densified & 1.204 & 11.17 \\
V2 & SfM & Fixed & 1.693 & \cellcolor{tabsecond}{3.94} \\
V2 & SfM & Densified & 1.528 & 13.57 \\
V2 & MVS & Fixed & 1.387 & \cellcolor{tabsecond}{3.94} \\
V2 & MVS & Densified & 1.253 & 14.62 \\
V3 & SfM & Fixed & \cellcolor{tabthird}{1.146} & \cellcolor{tabthird}{8.83} \\
V3 & SfM & Densified & 1.193 & 31.00 \\
V3 & MVS & Fixed & \cellcolor{tabsecond}{0.978} & \cellcolor{tabthird}{8.83} \\
V3 & MVS & Densified & \cellcolor{tabfirst}{0.922} & 33.17 \\
\bottomrule\end{tabular}}
\endgroup\end{table}

\begin{figure*}[t]
\centering
\includegraphics[width=\textwidth]{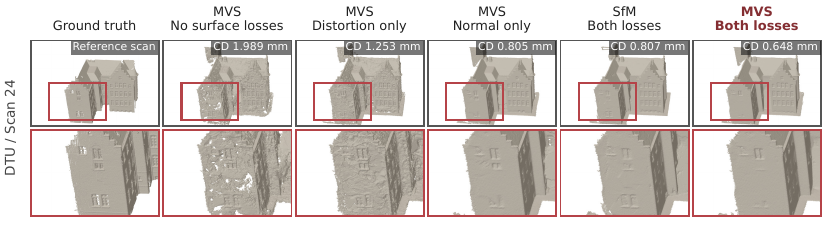}
\caption{\textbf{Surface loss and initialization ablation on DTU scan24.} Ground truth and the five fixed-V3 runs of Table~\ref{tab:surface_ablation}(a). Normal consistency reduces facade roughness and holes; MVS with both losses gives the lowest error on this scan.}
\label{fig:surface_ablation}
\end{figure*}

\begin{figure*}[t]
\centering
\includegraphics[width=\textwidth]{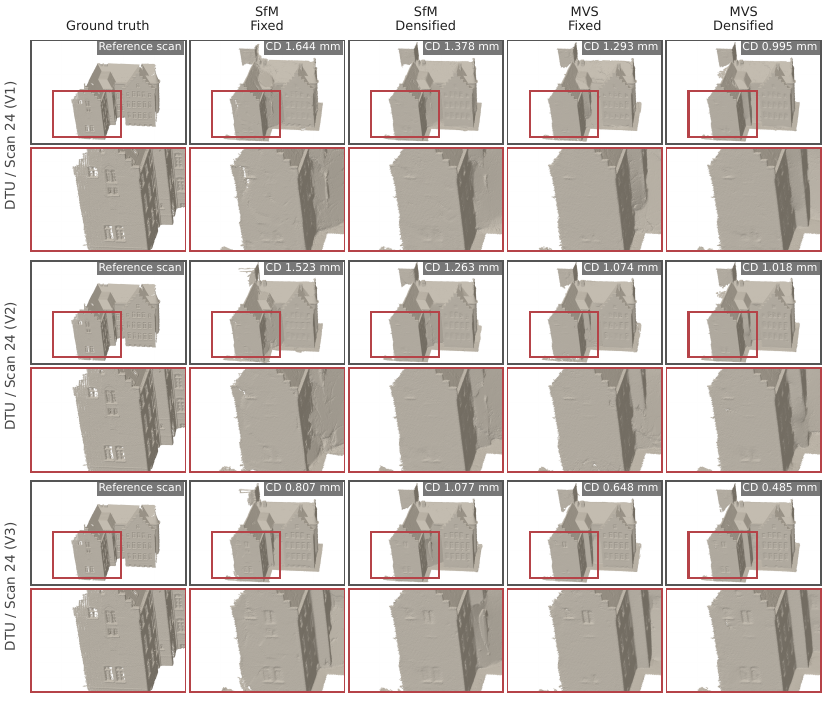}
\caption{\textbf{Appearance, initialization and primitive number ablation in surface reconstruction.} Rows show Hexels V1/V2/V3; columns compare SfM/MVS with fixed/densified promotives on DTU scan24, with both surface losses. V3/MVS retains facade openings in the close-up, and its densified variant has the lowest CD on this scan.}
\label{fig:surface_matrix_ablation}
\end{figure*}

\paragraph{Unconstrained surface reconstruction.}
\label{sec:dtu_unconstrained}
We additionally evaluate the methods without a matched memory budget. Native 2DGS achieves lower DTU CD than Hexels
(0.794\,mm), but Hexels remains competitive (0.978\,mm) while requiring substantially less model memory and training resources (1.93\,GiB VRAM for 2DGS vs. 0.61\,GiB for Hexels). On Tanks and Temples, Hexels has higher mean F1 (34.38\% versus 31.72\%). Figure~\ref{fig:surface_qualitative_extra} provides additional qualitative results. The zoomed crops show
holes in Triangle Splatting's skull and pedestal surfaces and spurious sheets above the `Caterpillar' cab in the memory-matched 2DGS results. Hexels reduces these artifacts in the displayed regions.

\begin{figure*}[t]
\centering
\includegraphics[width=\textwidth]{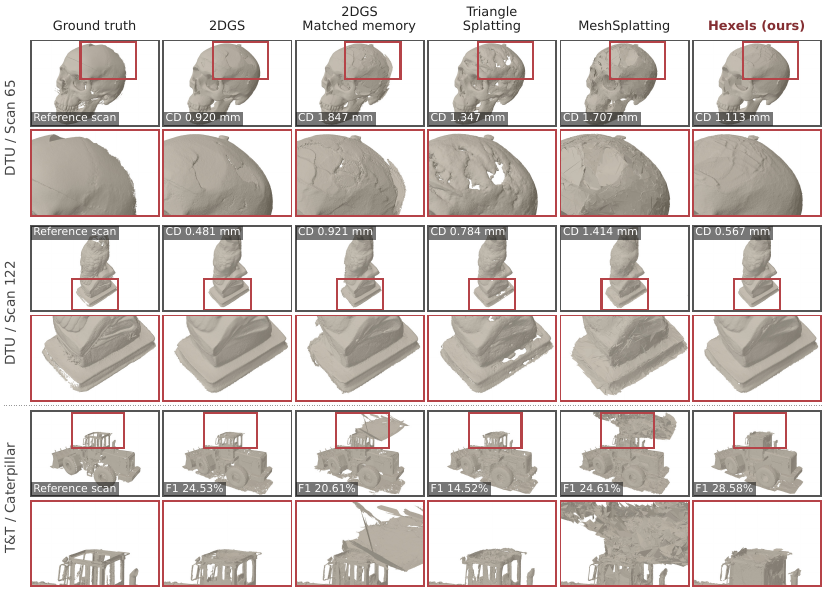}
\caption{\textbf{Additional surface reconstruction close-ups.} The selected views reveal holes in Triangle Splatting's skull and pedestal surfaces and spurious sheets above the Caterpillar cab in the memory-matched 2DGS reconstructions. Hexels reduces these artifacts.}
\label{fig:surface_qualitative_extra}
\end{figure*}

\section{Why Hexagons and Spatial Color?}
\label{app:local-approximation}

We want each Hexel to represent a large surface region without losing the
appearance variation inside it. This requires two choices: a shape that
can cover surfaces efficiently, and a color model that can describe more
than a constant color. The honeycomb theorem motivates the first choice:
regular hexagons minimize boundary length among equal-area partitions of
the plane~\cite{hales2001honeycomb}. The spatial color field addresses
the second.

We examine these choices on one planar patch at a fixed viewing direction.
First, we show that an affine color field represents the local color slope,
with the remaining error controlled by how that slope changes across the
patch. This determines a region where one color field stays within a chosen
error tolerance. We then show that, within the same region, the largest
affine hexagon covers twice the area of the largest affine triangle and
about 30\% more than the largest affine square.

\subsection{Linear Color}

We fix a viewing direction and use  orthonormal coordinates
$\bx\in\mathbb{R}^2$ on a localized plane. Let $\bp$ be the patch center and
$\bh=\bx-\bp$. In axes aligned with the primitive,
$\bh=(s_u u,s_v v)^\top$, where $(u,v)$ are its normalized local
coordinates.

Let $f$ be one target color (channel). An affine color field represents a
constant and a linear change in color. The remaining approximation error
depends on the second derivatives of $f$, which describe how its slope
changes across the patch. We are assuming a standard local approximation
argument underlying piecewise-polynomial representations
~\cite{cohen2011adaptive}.

\begin{proposition}[Local error of an affine color field]
\label{prop:local-color}
Let $f$ be twice continuously differentiable on an open convex
neighborhood $U$ containing $\bp$. Suppose a fixed symmetric
positive-definite matrix $\bQ$ satisfies
\begin{equation}
 |\bz^\top D^2f(\bx)\bz|
 \leq \bz^\top\bQ\bz
 \qquad
 \text{for all }\bx\in U,\ \bz\in\mathbb{R}^2.
 \label{eq:theory-hessian-bound}
\end{equation}
Then, for every $\bp+\bh\in U$, the affine field
$\ell_{\bp}(\bp+\bh)=f(\bp)+\nabla f(\bp)^\top\bh$
satisfies
\begin{equation}
 |f(\bp+\bh)-\ell_{\bp}(\bp+\bh)|
 \leq \tfrac12\bh^\top\bQ\bh.
 \label{eq:theory-local-error}
\end{equation}
In particular, if $\|D^2f(\bx)\|_{\mathrm{op}}\leq M$ throughout $U$,
the error is at most $Mr^2/2$ wherever $\|\bh\|\leq r$.
\end{proposition}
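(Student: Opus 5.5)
The plan is to reduce the statement to a one-variable Taylor estimate along the segment from $\bp$ to $\bp+\bh$. Convexity of $U$ is exactly what makes this legitimate. Fix $\bh$ with $\bp+\bh\in U$ and define $\phi(t)=f(\bp+t\bh)$ for $t\in[0,1]$. Since $U$ is convex and contains both endpoints, the whole segment $\{\bp+t\bh\}$ lies in $U$. Because $f\in C^2(U)$, the function $\phi$ is $C^2$ on a neighborhood of $[0,1]$, with
\begin{equation*}
\phi'(t)=\nabla f(\bp+t\bh)^\top\bh,
\qquad
\phi''(t)=\bh^\top D^2f(\bp+t\bh)\,\bh .
\end{equation*}
In particular, $\phi(0)=f(\bp)$ and $\phi(0)+\phi'(0)=\ell_{\bp}(\bp+\bh)$.

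Next, apply Taylor's theorem with integral remainder:
\begin{equation*}
\phi(1)-\phi(0)-\phi'(0)=\int_0^1(1-t)\,\phi''(t)\,dt .
\end{equation*}
The Lagrange form would work equally well. The left side is exactly $f(\bp+\bh)-\ell_{\bp}(\bp+\bh)$.

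Taking absolute values and applying the hypothesis \eqref{eq:theory-hessian-bound} with $\bz=\bh$ at each point $\bx=\bp+t\bh\in U$ gives $|\phi''(t)|\le\bh^\top\bQ\bh$ uniformly in $t$. Since $\int_0^1(1-t)\,dt=\tfrac12$, this yields \eqref{eq:theory-local-error}.

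For the corollary, if $\|D^2f(\bx)\|_{\mathrm{op}}\le M$, take $\bQ=M\bI_2$. For symmetric $D^2f$, we have $|\bz^\top D^2f\,\bz|\le\|D^2f\|_{\mathrm{op}}\|\bz\|^2\le M\|\bz\|^2$. If $M>0$ this $\bQ$ is positive definite. If $M=0$, $f$ is affine and the error is zero, so the bound holds trivially; alternatively, use $M+\varepsilon$ and let $\varepsilon\to0$. Then $\tfrac12\bh^\top\bQ\bh=\tfrac M2\|\bh\|^2\le Mr^2/2$ whenever $\|\bh\|\le r$.

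No step is a serious obstacle. The only points needing care are, first, justifying that the segment stays in $U$ (convexity), so the pointwise Hessian bound can be applied along it, and second, handling the degenerate case $M=0$ in the corollary, where positive definiteness of $\bQ$ is not literally available.
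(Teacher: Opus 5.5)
Your proof is correct and matches the paper's argument: both restrict $f$ to the segment $t\mapsto f(\bp+t\bh)$ (legitimate by convexity), write the remainder as $\int_0^1(1-t)\phi''(t)\,dt$, bound $|\phi''(t)|$ by $\bh^\top\bQ\bh$, and take $\bQ=M\bI_2$ for the corollary. Your treatment of the degenerate case $M=0$ is extra care the paper omits, though the main bound never uses positive definiteness of $\bQ$, so that case is in fact covered directly.
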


\begin{proof}
Define $g(t)=f(\bp+t\bh)$ for $t\in[0,1]$. Applying the fundamental
theorem of calculus twice gives
\begin{equation}
 \begin{aligned}
 f(\bp+\bh)-\ell_{\bp}(\bp+\bh)
 &=\int_0^1(1-t)g''(t)\,dt,\\
 g''(t)&=\bh^\top D^2f(\bp+t\bh)\bh.
 \end{aligned}
\end{equation}
The segment lies in $U$, so
Eq.~\eqref{eq:theory-hessian-bound} gives
\begin{equation}
 \begin{aligned}
 |f(\bp+\bh)-\ell_{\bp}(\bp+\bh)|
 &\leq
 \int_0^1(1-t)
 \left|\bh^\top D^2f(\bp+t\bh)\bh\right|\,dt\\
 &\leq
 \bh^\top\bQ\bh\int_0^1(1-t)\,dt
 =\tfrac12\bh^\top\bQ\bh.
 \end{aligned}
\end{equation}
The operator-norm statement follows by replacing the quadratic-form
bound with $M\|\bh\|^2$.
\end{proof}

The affine field therefore accounts for the local slope, leaving an
error controlled by changes in that slope. In comparison, using only
the constant value $f(\bp)$ gives an error bound of $Lr$ when
$\|\nabla f\|\leq L$. A nonconstant affine color ramp is represented
exactly by the affine field, but not by one constant color.
This compares individual color fields; blending constant-color
primitives can also produce spatial variation.

The Hexel spatial field contains this Taylor approximation. For one
color channel, in primitive-aligned physical coordinates, the required
coefficients are
\begin{equation}
 c_0=f(\bp),\qquad
 g_u=s_u\,\partial_1 f(\bp),\qquad
 g_v=s_v\,\partial_2 f(\bp).
\end{equation}
Substituting these into $c_0+u g_u+v g_v$ gives
$\ell_{\bp}(\bp+\bh)$. Normalized local coordinates therefore change
the coefficients, not the class of affine functions that can be
represented.

For RGB, apply the same argument to each channel using a common
matrix $\bQ$. Let $\mathbf f$ be the target RGB field and
$\boldsymbol{\ell}_{\bp}$ its channel-wise Taylor approximation.
For a learned field $\mathbf C$, define
$\eta(\bx)=\|\mathbf C(\bx)-\boldsymbol{\ell}_{\bp}(\bx)\|_\infty$.
Then
\begin{equation}
 \|\mathbf C(\bp+\bh)-\mathbf f(\bp+\bh)\|_\infty
 \leq
 \eta(\bp+\bh)+\tfrac12\bh^\top\bQ\bh.
 \label{eq:theory-fit-error}
\end{equation}
Here $\|\cdot\|_\infty$ is the largest absolute RGB-channel error.
The term $\eta$ accounts for the difference between the learned field
and the Taylor field. The result establishes the available approximation
capacity, rather than assuming that optimization reaches the Taylor
coefficients.

\subsection{Patch Shape}

For a tolerance $\varepsilon>0$, define
\begin{equation}
 E=\{\bp+\bh:\bh^\top\bQ\bh\leq2\varepsilon\}.
 \label{eq:theory-error-ellipse}
\end{equation}
Provided $E\subset U$, the Taylor color error is at most
$\varepsilon$ throughout this ellipse. Its longer directions allow
larger displacements while satisfying the same error bound.
The matrix $\bQ$ bounds changes in color slope; it is not a surface
curvature matrix and is not estimated by our training procedure.

The regular triangle, square, and hexagon each tile the plane.
We now compare their affine images inside the same error ellipse:
how much area can each patch cover while satisfying the same
Taylor color-error bound?

\begin{proposition}[Patch area within one error ellipse]
\label{prop:hex-area}
Let $H_m$ be a filled regular $m$-gon centered at the origin, with
$m\geq3$ and unit circumradius. Among all translated, nonsingular
affine images of $H_m$ contained in $E$, the maximum area is
\begin{equation}
 A_m^{\max}
 =
 \frac{m\sin(2\pi/m)\,\varepsilon}
      {\sqrt{\det\bQ}}.
 \label{eq:theory-polygon-area}
\end{equation}
A maximizer is
$\bp+\sqrt{2\varepsilon}\,\bQ^{-1/2}\bR H_m$,
where $\bR$ is any orthogonal matrix. Consequently,
\begin{equation}
 \frac{A_6^{\max}}{A_4^{\max}}
 =\frac{3\sqrt3}{4}\approx1.299,
 \qquad
 \frac{A_6^{\max}}{A_3^{\max}}=2.
\end{equation}
\end{proposition}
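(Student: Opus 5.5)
Our plan is to reduce to the unit disk by an affine normalization, and then use the classical fact that, among affine images of a regular polygon inside a disk, the inscribed regular polygon has maximal area.

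\textbf{Step 1: normalization.} Let $\Phi(\bx)=(2\varepsilon)^{-1/2}\bQ^{1/2}(\bx-\bp)$. This map sends $E$ onto the closed unit disk $D$ and multiplies areas by $\det\Phi=1/(2\varepsilon\sqrt{\det\bQ})$. A translated nonsingular affine image $\bt+\bA H_m$ lies in $E$ exactly when $\Phi(\bt+\bA H_m)\subset D$, and $\Phi(\bt+\bA H_m)$ is again a translated affine image of $H_m$. It therefore suffices to show that the maximal area of an affine image of $H_m$ inside $D$ is $\tfrac m2\sin(2\pi/m)=\operatorname{area}(H_m)$, attained by rotations of $H_m$. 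Multiplying by $2\varepsilon\sqrt{\det\bQ}^{-1}$ then gives \eqref{eq:theory-polygon-area}. Mapping $\bR H_m$ back through $\Phi^{-1}$ gives the stated maximizer $\bp+\sqrt{2\varepsilon}\,\bQ^{-1/2}\bR H_m$.

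\textbf{Step 2: the key inequality.} Suppose $P=\bt+\bA H_m\subset D$. We want $\operatorname{area}(P)\le\operatorname{area}(H_m)$. Following the first idea to try, consider the inverse affine map $\bA^{-1}$. It sends $P$ back to $H_m$ and sends $D$ to an ellipse $\mathcal E\supseteq H_m$, where $\operatorname{area}(\mathcal E)=\pi/|\det\bA|$. The claim is then equivalent to the statement that every ellipse containing $H_m$ has area at least $\pi$. In other words, the minimal-area enclosing (L\"owner--John) ellipse of $H_m$ is its circumcircle.

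To prove this, use uniqueness of the L\"owner ellipse. The rotation by $2\pi/m$ preserves $H_m$, so it must fix the unique minimal ellipse. For $m\ge3$, an ellipse invariant under that rotation is a circle centered at the origin, and the smallest such circle containing the vertices is the unit circle. Existence of a minimizer follows by compactness, since enclosing ellipses of bounded area form a compact family. Uniqueness follows from the standard log-concavity of $\det$: if two distinct minimal ellipses existed, their ``average'' would also contain $H_m$ and would have strictly smaller area. We will cite this as the classical John/L\"owner theorem rather than reprove it.

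\textbf{Step 3: ratios.} Substitute into $m\sin(2\pi/m)$. For $m=6$ this gives $3\sqrt3$, for $m=4$ it gives $4$, and for $m=3$ it gives $\tfrac{3\sqrt3}{2}$. Hence $A_6^{\max}/A_4^{\max}=3\sqrt3/4$ and $A_6^{\max}/A_3^{\max}=2$.

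The main obstacle is Step 2, namely the justification that the circumcircle is the minimal enclosing ellipse. The cleanest route is the uniqueness-plus-symmetry argument above. Should an elementary alternative be wanted, we would instead argue through averaging: apply the $m$ rotations to $\bA$ and use concavity of $\log\det$ on the symmetrized matrix $\tfrac1m\sum_k \bR_k^\top\bA^\top\bA\bR_k$, which is a multiple of the identity. We would then show that its trace is bounded by the containment constraint evaluated at the vertices.
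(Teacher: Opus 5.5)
Your proof is correct, but your main argument takes a different route from the paper's. After the same normalization to the unit disk, the paper never uses the L\"owner ellipse. It relies on the isotropy of the vertex moments, $\frac1m\sum_j\bv_j=0$ and $\frac1m\sum_j\bv_j\bv_j^\top=\tfrac12\bI_2$. Averaging the vertex constraints $\|\bt+\bB\bv_j\|^2\le1$ then gives $\|\bt\|^2+\tfrac12\operatorname{tr}(\bB^\top\bB)\le1$. AM--GM on the singular values finishes: $|\det\bB|=\sigma_1\sigma_2\le\tfrac12(\sigma_1^2+\sigma_2^2)\le1-\|\bt\|^2$.

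That argument is a few lines long and fully self-contained, and it uses only the $m$ vertex constraints. It also gives two things for free: a quantitative penalty for off-center images ($|\det\bB|\le1-\|\bt\|^2$), and an equality case that pins the maximizers down to $\bt=0$ with $\bB$ orthogonal.

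You instead reduce to the claim that the circumcircle is the minimal enclosing ellipse of $H_m$. You get this from existence and uniqueness of the L\"owner--John ellipse together with the $2\pi/m$ rotational symmetry. This is conceptually clean and carries over directly to any convex body whose symmetry group acts irreducibly. The cost is that it rests on a cited classical theorem. Your one-line uniqueness sketch also skips the case of two minimal ellipses with equal shape matrices but different centers, where the averaged ellipse has the same area; citing the theorem covers this.

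Your fallback ``averaging'' route is essentially the paper's proof. In the plane, $\tfrac1m\sum_k\bR_k^\top\bA^\top\bA\bR_k=\tfrac12\operatorname{tr}(\bA^\top\bA)\,\bI_2$ for $m\ge3$. So log-concavity of $\det$ only reproduces $|\det\bA|\le\tfrac12\operatorname{tr}(\bA^\top\bA)$, which is AM--GM, and the symmetrization can be dropped.

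One small slip in Step 1: the Jacobian is $\det\Phi=\sqrt{\det\bQ}/(2\varepsilon)$, not $1/(2\varepsilon\sqrt{\det\bQ})$. The back-conversion factor $2\varepsilon/\sqrt{\det\bQ}$ you actually apply is the correct one, so the final formula is unaffected.
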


\begin{proof}
The coordinate change
$\by=\bQ^{1/2}(\bx-\bp)/\sqrt{2\varepsilon}$
maps $E$ to the unit disk.

Let $\bv_j$, $j=0,\ldots,m-1$, be the vertices of $H_m$. Their equal
angular spacing gives
\begin{equation}
 \frac1m\sum_j\bv_j=0,\qquad
 \frac1m\sum_j\bv_j\bv_j^\top=\tfrac12\bI_2.
\end{equation}
These identities follow from the sums of the sine and cosine terms
at the vertex angles and at twice those angles.

Consider an affine image $\bt+\bB H_m$ contained in the unit disk.
Every vertex has squared distance at most one, so
\begin{equation}
 \|\bt\|^2+\tfrac12\operatorname{tr}(\bB^\top\bB)
 =
 \frac1m\sum_j\|\bt+\bB\bv_j\|^2
 \leq1.
 \label{eq:theory-vertex-average}
\end{equation}
Let $\sigma_1,\sigma_2$ be the singular values of $\bB$. Then
\begin{equation}
 |\det\bB|
 =\sigma_1\sigma_2
 \leq\tfrac12(\sigma_1^2+\sigma_2^2)
 \leq1-\|\bt\|^2
 \leq1.
 \label{eq:theory-det-bound}
\end{equation}
Thus, no translated affine image has greater area than the centered
regular polygon. Equality is attained with $\bt=0$ and $\bB$
orthogonal, for which the polygon lies inside the disk.

The area of $H_m$ is $m\sin(2\pi/m)/2$. Returning to the original
coordinates multiplies area by
$2\varepsilon/\sqrt{\det\bQ}$, yielding
Eq.~\eqref{eq:theory-polygon-area}.
\end{proof}

\paragraph{Connection to the Hexel parameterization.}
The maximizing hexagon can be represented using the rotation and two
tangent scales already present in Hexels. Write
$\bQ=\bU\operatorname{diag}(\lambda_1,\lambda_2)\bU^\top$,
where $\lambda_1,\lambda_2>0$ and the eigenvectors in $\bU$ are
chosen to form a rotation. Align the tangent axes with these
eigenvectors. Since our canonical hexagon has inradius one and
circumradius $2/\sqrt3$, choose
\begin{equation}
 s_u=\sqrt{\frac{3\varepsilon}{2\lambda_1}},
 \qquad
 s_v=\sqrt{\frac{3\varepsilon}{2\lambda_2}}.
 \label{eq:theory-optimal-scales}
\end{equation}
This places the six vertices on the error ellipse and attains
$A_6^{\max}$.

Thus, among affine images of these three regular tiling cells, the
hexagon covers the largest area inside the same sufficient color-error
ellipse. Combined with the tiling motivation, this supports using a
hexagonal patch with an affine color field: one element can represent
a larger region than the corresponding triangular or square element
under this local approximation bound.

\subsection{3D Alpha Compositing}

The preceding results concern the color represented within individual
patches. In 3D rendering, the final pixel also depends on alpha
compositing and the remaining background transmittance.

\begin{proposition}[Error under alpha compositing]
\label{prop:blend-error}
At one pixel, let $0\leq\alpha_i\leq1$ be a finite ordered list of
retained contributions, with colors $C_i$. Define
\begin{equation}
 w_i=\alpha_i\prod_{j<i}(1-\alpha_j),\qquad
 T=\prod_i(1-\alpha_i),\qquad
 I=\sum_iw_i C_i+Tb,
\end{equation}
where $b$ is the background color. For any target color $f$ and
any norm,
\begin{equation}
 \|I-f\|
 \leq
 \sum_iw_i\|C_i-f\|+T\|b-f\|.
 \label{eq:theory-composite-error}
\end{equation}
\end{proposition}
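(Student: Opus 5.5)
The argument rests on one identity: the compositing weights and the residual transmittance form a partition of unity, i.e.
\begin{equation}
 \sum_i w_i + T = 1 .
 \label{eq:plan-partition}
\end{equation}
I would prove this first, by induction on the number $n$ of retained contributions. Write $T_k=\prod_{j<k}(1-\alpha_j)$, so that $w_k=\alpha_kT_k$ and $T_{k+1}=T_k(1-\alpha_k)=T_k-w_k$. Telescoping from $T_1=1$ gives $T=T_{n+1}=1-\sum_{k\le n}w_k$. This identity holds for any real $\alpha_i$. The hypothesis $0\le\alpha_i\le1$ is only needed to guarantee $w_i\ge0$ and $T\ge0$, since every factor $1-\alpha_j$ and every $\alpha_i$ is nonnegative.

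With the identity in hand, I would rewrite the target color as $f=\sum_i w_i f + T f$. This gives
\[
 I-f=\sum_i w_i\,(C_i-f)+T\,(b-f).
\]
Applying the triangle inequality and absolute homogeneity of the norm then yields
\[
 \|I-f\|\le\sum_i |w_i|\,\|C_i-f\|+|T|\,\|b-f\|.
\]
Nonnegativity of the weights lets me drop the absolute values, which gives Eq.~\eqref{eq:theory-composite-error}. Nothing here depends on which norm is used, so the bound holds for any norm as claimed.

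I do not expect a real obstacle; the only point needing care is the bookkeeping in the telescoping step. The empty-product convention gives $T_1=1$, and if the list is empty the bound reduces to $\|b-f\|\le\|b-f\|$. I would also close with a remark that explains the purpose of the proposition. Combined with Eq.~\eqref{eq:theory-fit-error}, the per-primitive color errors are averaged convexly, so the pixel error never exceeds the worst per-primitive or background error. In particular, if every $C_i$ lies within $\varepsilon$ of $f$ (for instance by Proposition~\ref{prop:local-color}), then $\|I-f\|\le(1-T)\varepsilon+T\|b-f\|$.
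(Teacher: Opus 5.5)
Your proof is correct and follows the paper's own argument. The paper also writes $w_i=T_{i-1}-T_i\ge 0$, telescopes this to $\sum_i w_i+T=1$, rewrites $I-f=\sum_i w_i(C_i-f)+T(b-f)$, and concludes with the triangle inequality.
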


\begin{proof}
Set $T_0=1$ and $T_i=\prod_{j\leq i}(1-\alpha_j)$.
Then $w_i=T_{i-1}-T_i\geq0$, giving
$\sum_iw_i+T=1$. Hence,
\[
 I-f=\sum_iw_i(C_i-f)+T(b-f).
\]
The result follows from the triangle inequality.
\end{proof}

Thus, if every contributing color differs from the target by at most
$e$, the image error is at most
$(1-T)e+T\|b-f\|$. This result is independent of the primitive shape.
It also shows why local color accuracy alone does not ensure an
accurate image: the contributions must approximate the same pixel
target, and the remaining background contribution must be included.

\paragraph{Soft support and visibility.}
The sigmoid support extends beyond the nominal hexagon, where the
local Taylor bound may no longer apply. Let $\mathcal G$ contain
contributions with valid bounds $\|C_i-f\|_\infty\leq e_i$ relative
to the target color at this pixel. If the remaining contributions
have error at most $D$ and total weight
$\zeta=\sum_{i\notin\mathcal G}w_i$, then
\begin{equation}
 \|I-f\|_\infty
 \leq
 \sum_{i\in\mathcal G}w_i e_i
 +\zeta D+T\|b-f\|_\infty.
 \label{eq:theory-tail-error}
\end{equation}
Equation~\eqref{eq:theory-fit-error} can supply $e_i$ when the local
target at the intersection is the desired pixel color. An accurate
approximation to a different, occluded surface does not provide that
bound. Since individual Hexel colors are unclamped, $D=1$ cannot
be assumed.

These statements apply to the contributions actually accumulated by
the renderer, with $T$ their remaining transmittance.
\section{Extended Related Work}
\label{app:related}

\paragraph{Novel view synthesis and surface reconstruction.}
NeRF~\cite{mildenhall2020nerf} was introduced for novel view synthesis and
represents a scene as a neural field of density and view-dependent radiance.
This representation enabled high-quality novel-view rendering. However, the
repeated network evaluations required for volumetric rendering made the
original NeRF expensive to train and slow to render. A large body of
follow-up work introduced explicit feature grids, hash encodings,
precomputed representations, and baked representations to reduce the
sampling, rendering, and computational demands of the original neural-field
models~\cite{muller2022instant,yu2022plenoxels,reiser2023merf}.

3DGS~\cite{kerbl3Dgaussians} introduced an alternative for substantially
more efficient rendering using explicit anisotropic Gaussian volume
primitives, enabling real-time rasterization. However, because of their
volumetric nature, 3D Gaussians are not naturally surface aligned and can
produce inaccurate geometry. To remedy this, works such as
SuGaR~\cite{guedon2023sugar} regularize Gaussians toward surfaces. An
important solution was introduced by 2DGS~\cite{huang2024_2dgs}, which
instead represents the scene using oriented planar Gaussian surfels and
perspective-correct ray--plane intersections, retaining fast rasterization
while substantially improving the geometry.

Despite the large improvements made by 3DGS and 2DGS in training and
rendering time relative to NeRF, these improvements are still not enough for
truly resource-constrained settings. Efficiency requires several things at
the same time: learned model storage and training VRAM should be as low as
possible, primitive count should remain small relative to the cost of each
primitive, and the representation should still retain high reconstruction
quality and fast rendering. 3DGS and 2DGS can both require large primitive
populations and substantial memory. Despite this, we find that 2DGS remains
the strongest prior native surface representation in terms of the overall
performance--efficiency balance when these factors are considered together,
even after a number of subsequent works have attempted to improve different
parts of the problem.

\paragraph{Primitive-based differentiable rendering.}
Different follow-up works have tried to make the primitive more expressive,
reduce the number of primitives, increase rendering speed, reduce model
storage, or address limitations of 2DGS and earlier Gaussian
representations, such as the infinite support of Gaussian kernels.
GES~\cite{hamdi2024ges}, for example, replaces the Gaussian with a more
flexible generalized exponential kernel, while Deformable Beta
Splatting~\cite{liu2025beta} introduces bounded and deformable kernels.
3D Convex Splatting~\cite{held2025convex} instead uses smooth 3D convex
primitives, allowing more flexible volumetric geometry and sharper
boundaries, but remains a volumetric rather than surface-aligned
representation.

Triangle Splatting~\cite{held2026triangle}, on the other hand, optimizes an
independent triangle soup so that high rendering speeds can be achieved while
retaining a surface-aligned primitive. However, high-quality scenes can still
require very large triangle populations, increasing both model storage and
the computation and memory required during optimization. The independent
triangle soup also does not provide surface connectivity.
MeshSplatting~\cite{held2025mesh} follows this work by introducing explicit
connectivity and an opaque mesh representation, trading the independence of
individual splats for a conventional connected surface. Other works increase
geometric complexity in different ways, including Quadratic Gaussian
Splatting~\cite{zhang2024quadratic}, which introduces curved patches, and
MILo~\cite{guedon2025milo}, which brings mesh construction directly into
Gaussian optimization. In our experiments, we focus on 2DGS, Triangle Splatting, and MeshSplatting, which we find provide the strongest overall quality--efficiency trade-offs among prior native surface-aligned representations.

Hexels are also primitive-based renderers, but we try to solve a more
fundamental problem: can we design a more efficient primitive that is able
to exploit the structure found in the world, both in geometry and
appearance? Like 2DGS, we retain independent planar elements, but change the
region represented by one element and the signal represented inside it. We
use an affine hexagonal patch, which gives a large, nearly isotropic local
region, together with a first-order color field that models smooth
appearance variation across a surface. We find that much of the variation
on local surfaces is smooth, making such a representation particularly
efficient.

A first-order color field, by itself, is not unique to Hexels: a vertex-colored
triangle (another tiling primitive), for example, can also represent an affine color field. The domain
of our primitive, however, makes an important difference. Under the same
local approximation-error ellipse, our analysis shows that the largest
admissible affine regular hexagon can cover twice the area of the
corresponding affine triangle and about $1.3\times$ that of the square
(Appendix~\ref{app:local-approximation}). Hexels therefore increase the
amount represented by a simple planar element without requiring complex
geometry, volumetric convexes, or mesh connectivity.

\paragraph{Hybrid primitives and representations.}
The most direct comparisons to Hexels are fundamentally native surface
primitives. However, there has also been a trend toward hybrid
representations that separate geometry from appearance in order to improve
particular aspects of efficiency over 2DGS-like representations.
Gaussian Billboards~\cite{weiss2024billboards}, for example, introduces
spatially varying appearance through per-splat texture interpolation.
BBSplat~\cite{svitov2025bbsplat} learns separate RGB and alpha texture maps
to reduce the number of geometric primitives required. This can substantially
reduce primitive count, but the textures themselves become part of the model
and training cost.
NeST~\cite{zhang2025nest} follows a related idea by separating sparse surface
geometry from a neural appearance representation. Nexels
~\cite{rong2025nexels} similarly combines sparse surfels with a shared neural
appearance field and per-primitive colors. These methods can substantially
reduce geometric primitive count, but this capacity is moved into texture
maps, feature fields, or neural networks, which introduce additional memory,
training state, and computation.
Concurrent work such as Bake It Till You Make
It~\cite{kelkar2026bake}, which is also centered on improving the efficiency
of surface splats, follows this hybrid direction. It uses sparse surface
geometry together with a hash-based appearance representation during
optimization, and later bakes high-frequency appearance into a texture atlas
for rendering. This can reduce the number of geometric surface primitives
and improve rendering speed, but appearance is again moved outside the
primitive and requires a separate representation and baking pipeline.

These ideas of separating geometry and appearance, neural textures, and
texture baking are interesting, but they can in principle be applied on top of
different native surface primitives. They also often come with a different set of trade-offs, be it slower rendering, more memory or non-ideal decoupled optimization. They therefore answer a somewhat
different question from ours. We include these methods in our comparisons,
but count their full learned representation, including textures, feature
grids, and neural decoders, rather than only the number of geometric
elements. These hybrid setups, however, are not the objective of our work. 

\paragraph{Population control and compression.}
Given the multifaceted nature of the efficiency problem, another body of work
has developed around interventions on existing primitives to improve
different aspects of their efficiency. Scaffold-GS~\cite{lu2024scaffold},
for example, reduces 3DGS redundancy through anchors that generate
view-adaptive Gaussian attributes rather than storing every Gaussian
independently. 3DGS-MCMC~\cite{kheradmand2024mcmc} improves primitive
allocation through stochastic updates and relocation. FastGS
~\cite{ren2025fastgs} instead focuses on training time, using
multi-view-consistent densification and pruning to reduce unnecessary
Gaussian growth.

A related line of work compresses already learned representations.
LightGaussian~\cite{fan2024lightgaussian} prunes low-importance Gaussians,
distills appearance into lower-order spherical harmonics, and applies vector
quantization to reduce model size. Compact 3D Gaussian
Representation~\cite{lee2024compact} combines learnable masking with
shared neural appearance, vector quantization, and entropy coding.
Compressed 3DGS~\cite{niedermayr2024compressed} similarly uses
quantization-aware clustering and codebooks to reduce storage. Related ideas
have also been developed for surface-based Gaussian representations.

These methods are useful and complementary, but they start with an already
chosen primitive and add additional methodology around it. They may greatly
improve one aspect of efficiency without necessarily reducing the memory and
computation required elsewhere, particularly during training. Our focus is
therefore the more fundamental question: we can add pruning, quantization,
codebooks, texture atlases, neural decoders, and other post-hoc compression,
but can we also start with a more efficient native surface primitive?
Hexels address this question before these additional techniques are applied;
many of them could still be applied on top of Hexels afterward.

\paragraph{Neural Image Representations.}
A similar line of research has developed independently in 2D, where neural
image representations are used to encode visual media. Earlier approaches
often relied on autoencoders or implicit neural functions. More recently,
GaussianImage~\cite{zhang2024gaussianimage} adapted ideas from Gaussian
splatting to represent images using simple 2D Gaussian primitives,
demonstrating fast fitting and rendering. Image-GS~\cite{imagegs2025}
further improves this representation using content-adaptive initialization
and progressive optimization of anisotropic 2D Gaussians.
SAD~\cite{iinbor2026sad} instead represents the image through a soft
anisotropic partition whose regions depend on neighboring sites rather than
independently parameterized primitives. We compare Hexels primarily with Image-GS because both use independently parameterized image primitives, enabling a direct comparison of quality and efficiency per primitive. SAD instead represents images through a coupled partition whose regions depend on neighboring sites, addressing a different representation regime that we leave for future comparison.

We show that Hexels can also be used directly for 2D image representation,
where they outperform Image-GS in reconstruction quality and training
convergence under our evaluated settings. More importantly, Hexels are
designed as a general surface primitive rather than separate 2D and 3D
representations. In 2D, the primitive is placed directly in the image plane;
in 3D, it is oriented on a surface patch. While the compositor changes where
required by visibility and depth ordering, the basic hexagonal support and
appearance model remain the same. This gives Hexels a general capability
across image representation, novel-view synthesis, and surface
reconstruction, rather than requiring a separate representation for each
setting.

\paragraph{Overall efficiency.}
A common pattern across methods that have tried to make surface-aligned
representations more efficient is that the cost is often moved to another
part of the problem. Faster rasterization may still require a very large
primitive population. Sparse geometry may move capacity into external
texture maps or neural fields, increasing representation complexity,
training VRAM, or final model storage. Compression can make the deployed
scene small while having much less effect on the size and memory required
during training. Richer geometry can reduce primitive count while increasing
the cost of each primitive.

Hexels are designed to alleviate these problems together, without first
resorting to post-hoc retrofitting and before pruning or compression is
applied. We use the fact that natural images, surfaces, and scenes contain
local structure. Large regions are often approximately planar and vary
smoothly in appearance. A Hexel covers a bounded, efficient local region and
models first-order appearance variation inside it, allowing simple regions
to be represented by larger and fewer elements. This allows us to target high
quality, low learned storage, low training VRAM, few primitives, and high
FPS in the same native surface representation, in both 2D and 3D. Pruning,
quantization, neural appearance models, and other post-hoc improvements
remain possible additions rather than requirements of the representation.
\section{Full NVS Results}
\label{app:full_tables}

We report additional model-memory experiments, all six Hexels configurations (V1, V2, V3 fixed and densified), initial compression results, and per-scene results on the 34-scene benchmark. $F$ stands for fixed primitives from SfM initialization whereas $G$ stands for their densified counterparts, unless stated otherwise. 

\subsection{Matched Model Memory}
\label{app:nvs_budgets}

At the V2 memory budget, Hexels exceeds Triangle Splatting by
1.40 dB outdoors and 2.19 dB indoors on Mip-NeRF360
(Table~\ref{tab:nvs_memory_v2_mip_outdoor}).
On DTU, it renders at $1.65\times$ Triangle's mean FPS
(Table~\ref{tab:nvs_memory_v2_dtu}).
At the V3 budget, the PSNR gains over Triangle are 0.54/0.55 dB
on outdoor/indoor Mip-NeRF360 (Table~\ref{tab:nvs_memory_v3_mip_outdoor}) and 2.73/0.83 dB on DTU/T\&T (Table~\ref{tab:nvs_memory_v3_dtu}).
Across these V2/V3 budgets, Hexels has higher PSNR than MeshSplatting
across all 3 datasets, with higher SSIM and lower LPIPS.
Table~\ref{tab:scene_capacities} lists the model-memory budgets for each scene.

\begin{table*}[t]
\caption{\textbf{Matched model memory on Mip-NeRF360.} Fixed-population Hexels V2-F defines mean budgets of 7.33/20.77 MB outdoors/indoors. Hexels V2 improves PSNR by 1.40/2.19 dB relative to Triangle Splatting, respectively.}
\label{tab:nvs_memory_v2_mip_outdoor}
\small
\flushrules
\setlength{\tabcolsep}{4pt}
\renewcommand{\arraystretch}{1.14}
\fittabular{13}{%
\begin{tabular}{@{}lrrrrrr@{\hspace{9pt}}rrrrrr@{}}
\toprule
\multicolumn{1}{@{}l}{\multirow{3}{*}{Method}} & \multicolumn{6}{c}{Mip-NeRF360 outdoor (5)} & \multicolumn{6}{c@{}}{Mip-NeRF360 indoor (4)} \\
\cmidrule(lr){2-7}\cmidrule(l){8-13}
 & Train & VRAM & FPS & PSNR & SSIM & LPIPS & Train & VRAM & FPS & PSNR & SSIM & LPIPS \\
 & min $\downarrow$ & GiB $\downarrow$ &  $\uparrow$ & dB $\uparrow$ &  $\uparrow$ & VGG $\downarrow$ & min $\downarrow$ & GiB $\downarrow$ &  $\uparrow$ & dB $\uparrow$ &  $\uparrow$ & VGG $\downarrow$ \\
\midrule
\multicolumn{13}{@{}c@{}}{\textcolor{black!65}{\textit{Volumetric primitives}}} \\
3DGS & \cellcolor{tabtealfirst!55!white}6.3 & \cellcolor{tabtealfirst!55!white}0.66 & \cellcolor{tabtealfirst!55!white}1112.3 & 16.72 & 0.313 & 0.722 & \cellcolor{tabtealfirst!55!white}11.4 & 2.44 & \cellcolor{tabtealfirst!55!white}376.6 & \cellcolor{tabtealfirst!55!white}27.13 & 0.823 & 0.412 \\
3D Convex & 24.3 & 0.88 & 178.0 & \cellcolor{tabtealfirst!55!white}19.04 & \cellcolor{tabtealfirst!55!white}0.416 & \cellcolor{tabtealfirst!55!white}0.638 & 54.6 & \cellcolor{tabtealfirst!55!white}1.86 & 49.3 & 25.04 & \cellcolor{tabtealfirst!55!white}0.826 & \cellcolor{tabtealfirst!55!white}0.404 \\
\addlinespace[2pt]
\multicolumn{13}{@{}c@{}}{\textcolor{black!65}{\textit{Hybrid primitives}}} \\
BBSplat & \cellcolor{tabbluefirst!93!blue}28.0 & \cellcolor{tabbluefirst!93!blue}1.41 & \cellcolor{tabbluefirst!93!blue}109.0 & 19.24 & 0.374 & 0.688 & \cellcolor{tabbluefirst!93!blue}52.2 & 8.00 & \cellcolor{tabbluefirst!93!blue}44.4 & 25.19 & 0.779 & 0.443 \\
Nexels & 61.3 & 4.71 & 59.4 & \cellcolor{tabbluefirst!93!blue}20.72 & \cellcolor{tabbluefirst!93!blue}0.460 & \cellcolor{tabbluefirst!93!blue}0.534 & 111.3 & \cellcolor{tabbluefirst!93!blue}7.97 & 33.6 & \cellcolor{tabbluefirst!93!blue}25.74 & \cellcolor{tabbluefirst!93!blue}0.807 & \cellcolor{tabbluefirst!93!blue}0.379 \\
\addlinespace[2pt]
\multicolumn{13}{@{}c@{}}{\textcolor{black!65}{\textit{Surface primitives}}} \\
2DGS & \cellcolor{tabsecond}16.5 & \cellcolor{tabthird}1.92 & \cellcolor{tabthird}245.1 & 16.11 & 0.328 & 0.736 & \cellcolor{tabsecond}25.1 & 5.93 & \cellcolor{tabthird}115.4 & \cellcolor{tabthird}22.13 & \cellcolor{tabthird}0.762 & \cellcolor{tabthird}0.489 \\
Triangle Spl. & \cellcolor{tabfirst}12.9 & \cellcolor{tabsecond}0.96 & \cellcolor{tabsecond}358.9 & \cellcolor{tabsecond}20.71 & \cellcolor{tabsecond}0.448 & \cellcolor{tabsecond}0.592 & \cellcolor{tabfirst}21.0 & \cellcolor{tabsecond}1.45 & \cellcolor{tabfirst}233.8 & \cellcolor{tabsecond}26.55 & \cellcolor{tabsecond}0.843 & \cellcolor{tabsecond}0.383 \\
MeshSplatting & 27.0 & 3.70 & 72.9 & \cellcolor{tabthird}18.81 & \cellcolor{tabthird}0.349 & \cellcolor{tabthird}0.674 & 40.9 & \cellcolor{tabthird}5.52 & 45.1 & 20.83 & 0.665 & 0.558 \\
\textbf{Hexels V2-F} & \cellcolor{tabthird}17.8 & \cellcolor{tabfirst}0.63 & \cellcolor{tabfirst}491.9 & \cellcolor{tabfirst}22.11 & \cellcolor{tabfirst}0.562 & \cellcolor{tabfirst}0.497 & \cellcolor{tabthird}32.8 & \cellcolor{tabfirst}0.87 & \cellcolor{tabsecond}228.9 & \cellcolor{tabfirst}28.74 & \cellcolor{tabfirst}0.897 & \cellcolor{tabfirst}0.290 \\
\bottomrule
\end{tabular}}
\end{table*}

\begin{table*}[t]
\caption{\textbf{Matched model memory on DTU and Tanks and Temples.} Fixed-population Hexels V2-F defines mean budgets of 3.94/21.12 MB. Hexels V2 improves PSNR by 2.92/1.44 dB relative to Triangle Splatting, respectively.}
\label{tab:nvs_memory_v2_dtu}
\small
\flushrules
\setlength{\tabcolsep}{4pt}
\renewcommand{\arraystretch}{1.14}
\fittabular{13}{%
\begin{tabular}{@{}lrrrrrr@{\hspace{9pt}}rrrrrr@{}}
\toprule
\multicolumn{1}{@{}l}{\multirow{3}{*}{Method}} & \multicolumn{6}{c}{DTU (15)} & \multicolumn{6}{c@{}}{Tanks and Temples (10)} \\
\cmidrule(lr){2-7}\cmidrule(l){8-13}
 & Train & VRAM & FPS & PSNR & SSIM & LPIPS & Train & VRAM & FPS & PSNR & SSIM & LPIPS \\
 & min $\downarrow$ & GiB $\downarrow$ &  $\uparrow$ & dB $\uparrow$ &  $\uparrow$ & VGG $\downarrow$ & min $\downarrow$ & GiB $\downarrow$ &  $\uparrow$ & dB $\uparrow$ &  $\uparrow$ & VGG $\downarrow$ \\
\midrule
\multicolumn{13}{@{}c@{}}{\textcolor{black!65}{\textit{Volumetric primitives}}} \\
3DGS & \cellcolor{tabtealfirst!55!white}4.7 & \cellcolor{tabtealfirst!55!white}0.45 & \cellcolor{tabtealfirst!55!white}937.2 & \cellcolor{tabtealfirst!55!white}28.34 & \cellcolor{tabtealfirst!55!white}0.867 & 0.365 & \cellcolor{tabtealfirst!55!white}6.1 & 0.94 & \cellcolor{tabtealfirst!55!white}723.5 & \cellcolor{tabtealfirst!55!white}21.53 & \cellcolor{tabtealfirst!55!white}0.718 & \cellcolor{tabtealfirst!55!white}0.430 \\
3D Convex & 12.8 & 0.61 & 280.9 & 27.49 & 0.863 & \cellcolor{tabtealfirst!55!white}0.356 & 22.4 & \cellcolor{tabtealfirst!55!white}0.91 & 172.3 & 20.72 & 0.706 & 0.445 \\
\addlinespace[2pt]
\multicolumn{13}{@{}c@{}}{\textcolor{black!65}{\textit{Hybrid primitives}}} \\
BBSplat & \cellcolor{tabbluefirst!93!blue}14.3 & \cellcolor{tabbluefirst!93!blue}0.64 & \cellcolor{tabbluefirst!93!blue}183.2 & 27.63 & 0.854 & 0.347 & \cellcolor{tabbluefirst!93!blue}21.7 & \cellcolor{tabbluefirst!93!blue}1.55 & \cellcolor{tabbluefirst!93!blue}123.7 & 19.46 & 0.643 & 0.521 \\
Nexels & 28.8 & 2.26 & 154.6 & \cellcolor{tabbluefirst!93!blue}28.83 & \cellcolor{tabbluefirst!93!blue}0.887 & \cellcolor{tabbluefirst!93!blue}0.277 & 69.9 & 2.98 & 93.6 & \cellcolor{tabbluefirst!93!blue}22.02 & \cellcolor{tabbluefirst!93!blue}0.729 & \cellcolor{tabbluefirst!93!blue}0.383 \\
\addlinespace[2pt]
\multicolumn{13}{@{}c@{}}{\textcolor{black!65}{\textit{Surface primitives}}} \\
2DGS & \cellcolor{tabthird}8.6 & \cellcolor{tabthird}0.70 & \cellcolor{tabthird}426.3 & \cellcolor{tabthird}22.88 & \cellcolor{tabthird}0.786 & \cellcolor{tabthird}0.450 & \cellcolor{tabsecond}11.1 & \cellcolor{tabthird}1.61 & \cellcolor{tabthird}249.1 & \cellcolor{tabthird}19.37 & \cellcolor{tabthird}0.675 & \cellcolor{tabthird}0.481 \\
Triangle Spl. & \cellcolor{tabfirst}7.7 & \cellcolor{tabsecond}0.61 & \cellcolor{tabsecond}513.9 & \cellcolor{tabsecond}26.65 & \cellcolor{tabsecond}0.840 & \cellcolor{tabsecond}0.364 & \cellcolor{tabfirst}10.1 & \cellcolor{tabsecond}0.80 & \cellcolor{tabfirst}459.9 & \cellcolor{tabsecond}22.04 & \cellcolor{tabsecond}0.764 & \cellcolor{tabsecond}0.368 \\
MeshSplatting & 13.3 & 1.85 & 128.7 & 21.44 & 0.685 & 0.499 & 17.2 & 2.17 & 117.5 & 16.66 & 0.581 & 0.553 \\
\textbf{Hexels V2-F} & \cellcolor{tabsecond}8.5 & \cellcolor{tabfirst}0.56 & \cellcolor{tabfirst}845.6 & \cellcolor{tabfirst}29.57 & \cellcolor{tabfirst}0.895 & \cellcolor{tabfirst}0.298 & \cellcolor{tabthird}14.4 & \cellcolor{tabfirst}0.60 & \cellcolor{tabsecond}386.9 & \cellcolor{tabfirst}23.49 & \cellcolor{tabfirst}0.819 & \cellcolor{tabfirst}0.287 \\
\bottomrule
\end{tabular}}
\end{table*}

\begin{table*}[t]
\caption{\textbf{Matched model memory on Mip-NeRF360.} Fixed-population Hexels V3-F defines mean budgets of 16.42/46.56 MB outdoors/indoors. Hexels V3 improves PSNR by 0.54/0.55 dB relative to Triangle Splatting, respectively.}
\label{tab:nvs_memory_v3_mip_outdoor}
\small
\flushrules
\setlength{\tabcolsep}{4pt}
\renewcommand{\arraystretch}{1.14}
\fittabular{13}{%
\begin{tabular}{@{}lrrrrrr@{\hspace{9pt}}rrrrrr@{}}
\toprule
\multicolumn{1}{@{}l}{\multirow{3}{*}{Method}} & \multicolumn{6}{c}{Mip-NeRF360 outdoor (5)} & \multicolumn{6}{c@{}}{Mip-NeRF360 indoor (4)} \\
\cmidrule(lr){2-7}\cmidrule(l){8-13}
 & Train & VRAM & FPS & PSNR & SSIM & LPIPS & Train & VRAM & FPS & PSNR & SSIM & LPIPS \\
 & min $\downarrow$ & GiB $\downarrow$ &  $\uparrow$ & dB $\uparrow$ &  $\uparrow$ & VGG $\downarrow$ & min $\downarrow$ & GiB $\downarrow$ &  $\uparrow$ & dB $\uparrow$ &  $\uparrow$ & VGG $\downarrow$ \\
\midrule
\multicolumn{13}{@{}c@{}}{\textcolor{black!65}{\textit{Volumetric primitives}}} \\
3DGS & \cellcolor{tabtealfirst!55!white}7.1 & \cellcolor{tabtealfirst!55!white}0.88 & \cellcolor{tabtealfirst!55!white}830.8 & 18.68 & 0.383 & 0.675 & \cellcolor{tabtealfirst!55!white}13.5 & \cellcolor{tabtealfirst!55!white}2.26 & \cellcolor{tabtealfirst!55!white}318.8 & \cellcolor{tabtealfirst!55!white}29.51 & \cellcolor{tabtealfirst!55!white}0.886 & \cellcolor{tabtealfirst!55!white}0.326 \\
3D Convex & 26.7 & 1.07 & 116.3 & \cellcolor{tabtealfirst!55!white}20.31 & \cellcolor{tabtealfirst!55!white}0.470 & \cellcolor{tabtealfirst!55!white}0.581 & 70.7 & 2.72 & 35.1 & 28.29 & 0.877 & 0.329 \\
\addlinespace[2pt]
\multicolumn{13}{@{}c@{}}{\textcolor{black!65}{\textit{Hybrid primitives}}} \\
BBSplat & \cellcolor{tabbluefirst!93!blue}29.8 & \cellcolor{tabbluefirst!93!blue}2.05 & \cellcolor{tabbluefirst!93!blue}92.4 & 19.92 & 0.405 & 0.648 & \cellcolor{tabbluefirst!93!blue}58.9 & 12.99 & \cellcolor{tabbluefirst!93!blue}35.0 & 27.03 & 0.834 & 0.367 \\
Nexels & 63.5 & 5.11 & 54.7 & \cellcolor{tabbluefirst!93!blue}21.88 & \cellcolor{tabbluefirst!93!blue}0.542 & \cellcolor{tabbluefirst!93!blue}0.454 & 113.1 & \cellcolor{tabbluefirst!93!blue}8.23 & 31.6 & \cellcolor{tabbluefirst!93!blue}27.84 & \cellcolor{tabbluefirst!93!blue}0.865 & \cellcolor{tabbluefirst!93!blue}0.307 \\
\addlinespace[2pt]
\multicolumn{13}{@{}c@{}}{\textcolor{black!65}{\textit{Surface primitives}}} \\
2DGS & \cellcolor{tabsecond}16.9 & \cellcolor{tabthird}2.23 & \cellcolor{tabthird}220.9 & 16.69 & 0.354 & 0.718 & \cellcolor{tabsecond}29.1 & 8.28 & \cellcolor{tabthird}83.5 & \cellcolor{tabthird}28.39 & \cellcolor{tabthird}0.864 & \cellcolor{tabthird}0.358 \\
Triangle Spl. & \cellcolor{tabfirst}13.8 & \cellcolor{tabsecond}1.03 & \cellcolor{tabsecond}337.2 & \cellcolor{tabsecond}21.73 & \cellcolor{tabsecond}0.520 & \cellcolor{tabsecond}0.518 & \cellcolor{tabfirst}24.8 & \cellcolor{tabsecond}1.74 & \cellcolor{tabsecond}195.8 & \cellcolor{tabsecond}28.78 & \cellcolor{tabsecond}0.886 & \cellcolor{tabsecond}0.312 \\
MeshSplatting & 26.3 & 3.76 & 71.7 & \cellcolor{tabthird}19.27 & \cellcolor{tabthird}0.378 & \cellcolor{tabthird}0.648 & 41.9 & \cellcolor{tabthird}5.54 & 42.5 & 22.16 & 0.706 & 0.528 \\
\textbf{Hexels V3-F} & \cellcolor{tabthird}17.6 & \cellcolor{tabfirst}0.66 & \cellcolor{tabfirst}484.6 & \cellcolor{tabfirst}22.27 & \cellcolor{tabfirst}0.566 & \cellcolor{tabfirst}0.496 & \cellcolor{tabthird}32.7 & \cellcolor{tabfirst}0.99 & \cellcolor{tabfirst}219.8 & \cellcolor{tabfirst}29.33 & \cellcolor{tabfirst}0.904 & \cellcolor{tabfirst}0.283 \\
\bottomrule
\end{tabular}}
\end{table*}

\begin{table*}[t]
\caption{\textbf{Matched model memory on DTU and Tanks and Temples.} Fixed-population Hexels V3-F defines mean budgets of 8.83/47.34 MB. Hexels V3 improves PSNR by 2.73/0.83 dB relative to Triangle Splatting, respectively.}
\label{tab:nvs_memory_v3_dtu}
\small
\flushrules
\setlength{\tabcolsep}{4pt}
\renewcommand{\arraystretch}{1.14}
\fittabular{13}{%
\begin{tabular}{@{}lrrrrrr@{\hspace{9pt}}rrrrrr@{}}
\toprule
\multicolumn{1}{@{}l}{\multirow{3}{*}{Method}} & \multicolumn{6}{c}{DTU (15)} & \multicolumn{6}{c@{}}{Tanks and Temples (10)} \\
\cmidrule(lr){2-7}\cmidrule(l){8-13}
 & Train & VRAM & FPS & PSNR & SSIM & LPIPS & Train & VRAM & FPS & PSNR & SSIM & LPIPS \\
 & min $\downarrow$ & GiB $\downarrow$ &  $\uparrow$ & dB $\uparrow$ &  $\uparrow$ & VGG $\downarrow$ & min $\downarrow$ & GiB $\downarrow$ &  $\uparrow$ & dB $\uparrow$ &  $\uparrow$ & VGG $\downarrow$ \\
\midrule
\multicolumn{13}{@{}c@{}}{\textcolor{black!65}{\textit{Volumetric primitives}}} \\
3DGS & \cellcolor{tabtealfirst!55!white}5.1 & \cellcolor{tabtealfirst!55!white}0.57 & \cellcolor{tabtealfirst!55!white}823.3 & \cellcolor{tabtealfirst!55!white}29.45 & \cellcolor{tabtealfirst!55!white}0.895 & 0.320 & \cellcolor{tabtealfirst!55!white}7.8 & 1.31 & \cellcolor{tabtealfirst!55!white}561.6 & \cellcolor{tabtealfirst!55!white}23.07 & \cellcolor{tabtealfirst!55!white}0.781 & \cellcolor{tabtealfirst!55!white}0.352 \\
3D Convex & 15.2 & 0.71 & 214.8 & 28.19 & 0.886 & \cellcolor{tabtealfirst!55!white}0.309 & 28.5 & \cellcolor{tabtealfirst!55!white}1.25 & 118.8 & 22.19 & 0.755 & 0.380 \\
\addlinespace[2pt]
\multicolumn{13}{@{}c@{}}{\textcolor{black!65}{\textit{Hybrid primitives}}} \\
BBSplat & \cellcolor{tabbluefirst!93!blue}14.7 & \cellcolor{tabbluefirst!93!blue}0.74 & \cellcolor{tabbluefirst!93!blue}182.8 & 28.82 & 0.891 & 0.285 & \cellcolor{tabbluefirst!93!blue}24.5 & \cellcolor{tabbluefirst!93!blue}2.55 & \cellcolor{tabbluefirst!93!blue}99.1 & 20.87 & 0.692 & 0.451 \\
Nexels & 29.1 & 2.19 & 151.7 & \cellcolor{tabbluefirst!93!blue}29.15 & \cellcolor{tabbluefirst!93!blue}0.899 & \cellcolor{tabbluefirst!93!blue}0.247 & 73.0 & 3.29 & 82.2 & \cellcolor{tabbluefirst!93!blue}22.90 & \cellcolor{tabbluefirst!93!blue}0.775 & \cellcolor{tabbluefirst!93!blue}0.321 \\
\addlinespace[2pt]
\multicolumn{13}{@{}c@{}}{\textcolor{black!65}{\textit{Surface primitives}}} \\
2DGS & \cellcolor{tabsecond}9.4 & \cellcolor{tabthird}0.97 & \cellcolor{tabthird}359.4 & \cellcolor{tabthird}26.61 & \cellcolor{tabthird}0.860 & \cellcolor{tabthird}0.367 & \cellcolor{tabsecond}12.7 & 2.39 & \cellcolor{tabthird}186.3 & \cellcolor{tabthird}21.63 & \cellcolor{tabthird}0.749 & \cellcolor{tabthird}0.385 \\
Triangle Spl. & \cellcolor{tabfirst}8.5 & \cellcolor{tabsecond}0.65 & \cellcolor{tabsecond}482.6 & \cellcolor{tabsecond}27.23 & \cellcolor{tabsecond}0.866 & \cellcolor{tabsecond}0.326 & \cellcolor{tabfirst}12.0 & \cellcolor{tabsecond}1.04 & \cellcolor{tabfirst}393.6 & \cellcolor{tabsecond}22.94 & \cellcolor{tabsecond}0.806 & \cellcolor{tabsecond}0.303 \\
MeshSplatting & \cellcolor{tabthird}13.3 & 1.86 & 126.3 & 22.59 & 0.703 & 0.486 & 17.7 & \cellcolor{tabthird}2.25 & 113.2 & 17.24 & 0.609 & 0.527 \\
\textbf{Hexels V3-F} & \cellcolor{tabfirst}8.5 & \cellcolor{tabfirst}0.58 & \cellcolor{tabfirst}792.3 & \cellcolor{tabfirst}29.95 & \cellcolor{tabfirst}0.899 & \cellcolor{tabfirst}0.293 & \cellcolor{tabthird}14.6 & \cellcolor{tabfirst}0.71 & \cellcolor{tabsecond}370.6 & \cellcolor{tabfirst}23.78 & \cellcolor{tabfirst}0.824 & \cellcolor{tabfirst}0.281 \\
\bottomrule
\end{tabular}}
\end{table*}

\begin{table*}[!htbp]
\centering
\caption{\textbf{Model-memory budgets for all 34 scenes.} V1/V2/V3 require $80N$/$116N$/$260N$ bytes, where $N$ is the SfM point count.}
\label{tab:scene_capacities}
\begingroup\fontsize{8}{9.5}\selectfont
\setlength{\tabcolsep}{3pt}
\renewcommand{\arraystretch}{1.3}
\fittabular{12}{%
\begin{tabular}{@{}lrrr@{\hspace{1.4em}}lrrr@{\hspace{1.4em}}lrrr@{}}
\toprule
\multicolumn{4}{c}{\textit{Mip-NeRF360}} & \multicolumn{4}{c}{\textit{DTU NVS}} & \multicolumn{4}{c}{\textit{Tanks and Temples}} \\
\cmidrule(lr){1-4}\cmidrule(lr){5-8}\cmidrule(lr){9-12}
Scene & V1 (MB) & V2 (MB) & V3 (MB) & Scene & V1 (MB) & V2 (MB) & V3 (MB) & Scene & V1 (MB) & V2 (MB) & V3 (MB) \\
\midrule
bicycle & 4.342 & 6.296 & 14.111 & scan105 & 3.545 & 5.141 & 11.522 & Barn & 16.220 & 23.519 & 52.714 \\
bonsai & 16.529 & 23.967 & 53.719 & scan106 & 3.707 & 5.375 & 12.047 & Caterpillar & 15.024 & 21.785 & 48.829 \\
counter & 12.461 & 18.069 & 40.499 & scan110 & 2.042 & 2.960 & 6.635 & Courthouse & 39.629 & 57.461 & 128.793 \\
flowers & 3.068 & 4.448 & 9.970 & scan114 & 3.442 & 4.992 & 11.188 & Francis & 8.545 & 12.391 & 27.773 \\
garden & 11.101 & 16.097 & 36.079 & scan118 & 2.270 & 3.292 & 7.379 & Horse & 5.384 & 7.807 & 17.500 \\
kitchen & 19.309 & 27.999 & 62.755 & scan122 & 1.741 & 2.524 & 5.658 & Ignatius & 12.424 & 18.015 & 40.380 \\
room & 9.010 & 13.065 & 29.283 & scan24 & 2.496 & 3.620 & 8.113 & Lighthouse & 12.828 & 18.600 & 41.690 \\
stump & 2.564 & 3.718 & 8.333 & scan37 & 4.065 & 5.894 & 13.210 & Meetingroom & 9.868 & 14.309 & 32.072 \\
treehill & 4.189 & 6.074 & 13.614 & scan40 & 4.692 & 6.803 & 15.247 & Train & 14.734 & 21.365 & 47.887 \\
 &  &  &  & scan55 & 3.742 & 5.426 & 12.161 & Truck & 10.993 & 15.940 & 35.728 \\
 &  &  &  & scan63 & 1.421 & 2.061 & 4.618 &  &  &  &  \\
 &  &  &  & scan65 & 1.809 & 2.624 & 5.880 &  &  &  &  \\
 &  &  &  & scan69 & 1.858 & 2.694 & 6.037 &  &  &  &  \\
 &  &  &  & scan83 & 1.281 & 1.858 & 4.164 &  &  &  &  \\
 &  &  &  & scan97 & 2.658 & 3.854 & 8.639 &  &  &  &  \\
\bottomrule
\end{tabular}}\endgroup
\end{table*}

\subsection{Fixed and Densified Hexels}
\label{app:hexels_variants}

Densifying V3 improves PSNR by 1.37 dB outdoors and 0.78 dB indoors
on Mip-NeRF360 (Table~\ref{tab:nvs_hexels_appendix_mip_outdoor}),
increasing deployment size from 17.0 to 337.6 MB and 48.2 to 133.2 MB,
respectively.
On DTU, densified V3 has 0.20 dB lower PSNR than fixed V3 but 0.026 lower LPIPS; on T\&T it gains 0.76 dB (Table~\ref{tab:nvs_hexels_appendix_dtu}).
Fixed models therefore offer a particularly compact option for DTU,
while growth improves quality on the larger scenes.

\begin{table*}[t]
\caption{\textbf{Fixed and densified Hexels on Mip-NeRF360.} F/G denote fixed/grown populations.}
\label{tab:nvs_hexels_appendix_mip_outdoor}
\small
\flushrules
\setlength{\tabcolsep}{3pt}
\renewcommand{\arraystretch}{1.14}
\fittabular{9}{%
\begin{tabular}{@{}lrrrrrrrr@{}}
\toprule
\multicolumn{1}{@{}l}{\multirow{3}{*}{Method}} & \multicolumn{8}{c@{}}{Mip-NeRF360 outdoor (5)} \\
\cmidrule(l){2-9}
 & Train & VRAM & FPS & PSNR & SSIM & LPIPS & Deploy. & Prims. \\
 & min $\downarrow$ & GiB $\downarrow$ &  $\uparrow$ & dB $\uparrow$ &  $\uparrow$ & VGG $\downarrow$ & MB $\downarrow$ & K \\
\midrule
\textbf{Hexels V1-F} & \cellcolor{tabfirst}17.6 & \cellcolor{tabfirst}0.60 & \cellcolor{tabfirst}495.5 & 21.98 & 0.557 & 0.501 & \cellcolor{tabfirst}5.64 & 63.2 \\
\textbf{Hexels V1-G} & \cellcolor{tabthird}28.8 & 2.44 & 140.5 & \cellcolor{tabsecond}23.43 & \cellcolor{tabsecond}0.667 & \cellcolor{tabsecond}0.349 & 113.81 & 1278.6 \\
\textbf{Hexels V2-F} & \cellcolor{tabsecond}17.8 & \cellcolor{tabsecond}0.63 & \cellcolor{tabsecond}491.9 & 22.11 & 0.562 & 0.497 & \cellcolor{tabsecond}7.91 & 63.2 \\
\textbf{Hexels V2-G} & 29.3 & 3.84 & 125.4 & \cellcolor{tabthird}23.39 & \cellcolor{tabthird}0.663 & \cellcolor{tabthird}0.358 & 143.96 & 1151.6 \\
\textbf{Hexels V3-F} & \cellcolor{tabfirst}17.6 & \cellcolor{tabthird}0.66 & \cellcolor{tabthird}484.6 & 22.27 & 0.566 & 0.496 & \cellcolor{tabthird}17.00 & 63.2 \\
\textbf{Hexels V3-G} & 33.6 & 17.27 & 94.5 & \cellcolor{tabfirst}23.64 & \cellcolor{tabfirst}0.674 & \cellcolor{tabfirst}0.346 & 337.60 & 1254.9 \\
\bottomrule
\end{tabular}}
\par\smallskip
\fittabular{9}{%
\begin{tabular}{@{}lrrrrrrrr@{}}
\toprule
\multicolumn{1}{@{}l}{\multirow{3}{*}{Method}} & \multicolumn{8}{c@{}}{Mip-NeRF360 indoor (4)} \\
\cmidrule(l){2-9}
 & Train & VRAM & FPS & PSNR & SSIM & LPIPS & Deploy. & Prims. \\
 & min $\downarrow$ & GiB $\downarrow$ &  $\uparrow$ & dB $\uparrow$ &  $\uparrow$ & VGG $\downarrow$ & MB $\downarrow$ & K \\
\midrule
\textbf{Hexels V1-F} & \cellcolor{tabfirst}32.4 & \cellcolor{tabfirst}0.83 & \cellcolor{tabfirst}230.7 & 28.36 & 0.891 & 0.297 & \cellcolor{tabfirst}15.95 & 179.1 \\
\textbf{Hexels V1-G} & 43.1 & 1.32 & 115.9 & \cellcolor{tabthird}29.31 & \cellcolor{tabthird}0.903 & \cellcolor{tabsecond}0.270 & \cellcolor{tabthird}48.02 & 539.4 \\
\textbf{Hexels V2-F} & \cellcolor{tabthird}32.8 & \cellcolor{tabsecond}0.87 & \cellcolor{tabsecond}228.9 & 28.74 & 0.897 & 0.290 & \cellcolor{tabsecond}22.40 & 179.1 \\
\textbf{Hexels V2-G} & 41.7 & 1.40 & 122.5 & 29.27 & \cellcolor{tabsecond}0.904 & \cellcolor{tabthird}0.271 & 59.63 & 476.9 \\
\textbf{Hexels V3-F} & \cellcolor{tabsecond}32.7 & \cellcolor{tabthird}0.99 & \cellcolor{tabthird}219.8 & \cellcolor{tabsecond}29.33 & \cellcolor{tabsecond}0.904 & 0.283 & 48.19 & 179.1 \\
\textbf{Hexels V3-G} & 42.4 & 9.81 & 108.6 & \cellcolor{tabfirst}30.11 & \cellcolor{tabfirst}0.913 & \cellcolor{tabfirst}0.259 & 133.21 & 495.2 \\
\bottomrule
\end{tabular}}
\end{table*}

\begin{table*}[t]
\caption{\textbf{Fixed and densified Hexels on DTU and Tanks and Temples.} F/G denote fixed/grown populations.}
\label{tab:nvs_hexels_appendix_dtu}
\small
\flushrules
\setlength{\tabcolsep}{3pt}
\renewcommand{\arraystretch}{1.14}
\fittabular{9}{%
\begin{tabular}{@{}lrrrrrrrr@{}}
\toprule
\multicolumn{1}{@{}l}{\multirow{3}{*}{Method}} & \multicolumn{8}{c@{}}{DTU (15)} \\
\cmidrule(l){2-9}
 & Train & VRAM & FPS & PSNR & SSIM & LPIPS & Deploy. & Prims. \\
 & min $\downarrow$ & GiB $\downarrow$ &  $\uparrow$ & dB $\uparrow$ &  $\uparrow$ & VGG $\downarrow$ & MB $\downarrow$ & K \\
\midrule
\textbf{Hexels V1-F} & \cellcolor{tabfirst}8.4 & \cellcolor{tabfirst}0.55 & \cellcolor{tabsecond}830.8 & 28.94 & 0.890 & 0.303 & \cellcolor{tabfirst}3.04 & 34.0 \\
\textbf{Hexels V1-G} & 10.8 & 0.61 & 441.4 & 29.31 & \cellcolor{tabthird}0.897 & \cellcolor{tabsecond}0.278 & \cellcolor{tabthird}8.69 & 97.5 \\
\textbf{Hexels V2-F} & \cellcolor{tabsecond}8.5 & \cellcolor{tabsecond}0.56 & \cellcolor{tabfirst}845.6 & \cellcolor{tabthird}29.57 & 0.895 & 0.298 & \cellcolor{tabsecond}4.26 & 34.0 \\
\textbf{Hexels V2-G} & \cellcolor{tabthird}10.3 & 0.64 & 513.8 & 29.14 & 0.891 & \cellcolor{tabthird}0.282 & 11.40 & 91.1 \\
\textbf{Hexels V3-F} & \cellcolor{tabsecond}8.5 & \cellcolor{tabthird}0.58 & \cellcolor{tabthird}792.3 & \cellcolor{tabfirst}29.95 & \cellcolor{tabsecond}0.899 & 0.293 & 9.15 & 34.0 \\
\textbf{Hexels V3-G} & 10.6 & 1.21 & 483.1 & \cellcolor{tabsecond}29.76 & \cellcolor{tabfirst}0.902 & \cellcolor{tabfirst}0.268 & 27.03 & 100.4 \\
\bottomrule
\end{tabular}}
\par\smallskip
\fittabular{9}{%
\begin{tabular}{@{}lrrrrrrrr@{}}
\toprule
\multicolumn{1}{@{}l}{\multirow{3}{*}{Method}} & \multicolumn{8}{c@{}}{Tanks and Temples (10)} \\
\cmidrule(l){2-9}
 & Train & VRAM & FPS & PSNR & SSIM & LPIPS & Deploy. & Prims. \\
 & min $\downarrow$ & GiB $\downarrow$ &  $\uparrow$ & dB $\uparrow$ &  $\uparrow$ & VGG $\downarrow$ & MB $\downarrow$ & K \\
\midrule
\textbf{Hexels V1-F} & \cellcolor{tabfirst}13.3 & \cellcolor{tabfirst}0.55 & \cellcolor{tabfirst}438.4 & 23.05 & 0.811 & 0.295 & \cellcolor{tabfirst}16.22 & 182.1 \\
\textbf{Hexels V1-G} & 20.8 & 1.38 & 197.1 & \cellcolor{tabthird}23.95 & \cellcolor{tabthird}0.829 & \cellcolor{tabthird}0.261 & 67.79 & 761.6 \\
\textbf{Hexels V2-F} & \cellcolor{tabsecond}14.4 & \cellcolor{tabsecond}0.60 & \cellcolor{tabsecond}386.9 & 23.49 & 0.819 & 0.287 & \cellcolor{tabsecond}22.77 & 182.1 \\
\textbf{Hexels V2-G} & 22.7 & 1.96 & 167.5 & \cellcolor{tabsecond}24.27 & \cellcolor{tabsecond}0.834 & \cellcolor{tabsecond}0.257 & 89.80 & 718.3 \\
\textbf{Hexels V3-F} & \cellcolor{tabthird}14.6 & \cellcolor{tabthird}0.71 & \cellcolor{tabthird}370.6 & 23.78 & 0.824 & 0.281 & \cellcolor{tabthird}48.99 & 182.1 \\
\textbf{Hexels V3-G} & 24.3 & 12.06 & 142.7 & \cellcolor{tabfirst}24.53 & \cellcolor{tabfirst}0.839 & \cellcolor{tabfirst}0.248 & 198.84 & 739.1 \\
\bottomrule
\end{tabular}}
\end{table*}

\subsection{Deployment Compression analysis}
\label{app:nvs_compression}
While outside our core efficiency objectives, we also provide an initial analysis of lossless compression and appearance parameter quantization 
(PQ1024) on all six variants across 34 scenes.
Our lossless compression analysis keeps every FP32 parameter
exact. It regroups the bytes of each parameter array by position (all first
bytes of the 4-byte values, then all second bytes, and so on), so that the
highly repetitive sign and exponent bytes are stored together. It then
compresses the result with DEFLATE~\cite{deutsch1996deflate}, the standard
LZ77 and Huffman-coding compressor used by ZIP and gzip software.
PQ1024 replaces RGB appearance vectors with 1,024-entry codebooks:
one per SH coefficient for V3, and corresponding codebooks for the
spatial and directional gradients of V1/V2.
Geometry and base color remain lossless.
Parameters are decoded to FP32 before rendering, without retraining.

Lossless compression and PQ1024 evaluated here are additional experiments; the main NVS experiments use uncompressed FP32 models.
Quantizing V2 reduces PSNR by \val{0.55--0.61}{dB}, whereas densifier V3-G PQ1024
changes mean PSNR/SSIM/LPIPS by $-0.01674$ dB/$-0.000259$/$+0.000849$
(Table~\ref{tab:compression}).
The largest dataset-mean PSNR loss for V3-G is 0.0303 dB on Mip-NeRF360.
Across 34 scenes, its mean deployment size is 43.87 MB, 60.09\% below lossless compression.
Table~\ref{tab:compression} reports these compressed results whereas all other NVS comparison tables use uncompressed FP32 representation.

Encoding time includes codebook fitting; decoding measures cached reading
and CPU-based reconstruction, followed by upload into an allocated model.

\paragraph{2D model size (deployment) compression.}
For the 2D representation in Table~\ref{tab:hexels-2d}, mixed-bit storage reduces mean PSNR by 0.03307\,dB for constant-color Hexels and 0.01091\,dB for linear-color Hexels relative to their FP32 parameters.

After training/image fitting, constant-color Hexels store
positions, (log-)scales, rotations and RGB parameters with 16, 14, 12 and 10 bits per scalar, respectively. Linear-color Hexels use 16, 14, 12, 12, 14 and 10 bits
for positions, scales, rotations, base RGB, spatial RGB gradients and sharpness. The bit-packed files include a 1024-byte header containing
quantization ranges and decoder settings. At 10K primitives, the constant
and linear files occupy 128524 and 253524 bytes, or 0.2451 and 0.4836 bits
per pixel. The 16-bit Image-GS files occupy 161024 bytes, or 0.3071 bits per
pixel. Both Hexel variants show similar performance for 16-bit storage: at most 0.05\,dB mean PSNR loss, 0.2\,dB worst-image loss, and 0.001 mean SSIM loss on the evaluated samples.

\begin{table*}[t]
\caption{\textbf{Deployment compression across 34 scenes.} Quality changes are relative to uncompressed models; FP32 rows are the uncompressed reference results from the main text.}
\label{tab:compression}
\small
\flushrules
\setlength{\tabcolsep}{3pt}
\renewcommand{\arraystretch}{1.14}
\fittabular{11}{%
\begin{tabular}{@{}llrrrrrrrrr@{}}
\toprule
Variant & Codec & Encode & Decode & Upload & FPS & $\Delta$PSNR & $\Delta$SSIM & $\Delta$LPIPS & Deploy. & Saved \\
 & & s & ms & ms & & dB & & VGG & MB & \% \\
\midrule
V1 fixed & FP32 & -- & 6.0 & 2.8 & 608.4 & 0.0000 & 0.00000 & 0.00000 & 8.82 & 0.0 \\
V1 fixed & Lossless & 0.5 & 34.5 & 4.3 & 620.8 & 0.0000 & 0.00000 & 0.00000 & 6.67 & 24.2 \\
V1 fixed & PQ1024 & 1.8 & 33.1 & 3.1 & 619.1 & -0.0338 & -0.00057 & 0.00175 & 4.97 & 43.3 \\
V1 grown & FP32 & -- & 13.5 & 8.6 & 289.5 & 0.0000 & 0.00000 & 0.00000 & 46.16 & 0.0 \\
V1 grown & Lossless & 2.8 & 163.2 & 8.5 & 290.2 & 0.0000 & 0.00000 & 0.00000 & 34.90 & 24.4 \\
V1 grown & PQ1024 & 4.0 & 152.3 & 8.1 & 289.3 & -0.0325 & -0.00058 & 0.00168 & 25.91 & 43.8 \\
V2 fixed & FP32 & -- & 6.4 & 4.3 & 590.1 & 0.0000 & 0.00000 & 0.00000 & 12.38 & 0.0 \\
V2 fixed & Lossless & 0.8 & 47.6 & 3.4 & 610.4 & 0.0000 & 0.00000 & 0.00000 & 9.66 & 21.8 \\
V2 fixed & PQ1024 & 2.4 & 42.0 & 3.1 & 599.9 & -0.6061 & -0.00767 & 0.02537 & 5.38 & 55.9 \\
V2 grown & FP32 & -- & 17.1 & 8.0 & 310.4 & 0.0000 & 0.00000 & 0.00000 & 59.63 & 0.0 \\
V2 grown & Lossless & 3.7 & 209.3 & 10.4 & 314.6 & 0.0000 & 0.00000 & 0.00000 & 46.50 & 21.9 \\
V2 grown & PQ1024 & 5.4 & 186.3 & 12.2 & 309.0 & -0.5450 & -0.00777 & 0.02449 & 25.69 & 56.7 \\
V3 fixed & FP32 & -- & 9.6 & 5.3 & 560.5 & 0.0000 & 0.00000 & 0.00000 & 26.62 & 0.0 \\
V3 fixed & Lossless & 1.7 & 103.1 & 4.6 & 565.1 & 0.0000 & 0.00000 & 0.00000 & 21.59 & 18.8 \\
V3 fixed & PQ1024 & 4.7 & 77.4 & 5.4 & 555.7 & -0.0200 & -0.00032 & 0.00085 & 8.73 & 66.5 \\
V3 grown & FP32 & -- & 31.0 & 17.0 & 282.6 & 0.0000 & 0.00000 & 0.00000 & 135.73 & 0.0 \\
V3 grown & Lossless & 8.4 & 464.2 & 21.0 & 284.5 & 0.0000 & 0.00000 & 0.00000 & 109.92 & 18.9 \\
V3 grown & PQ1024 & 11.7 & 363.1 & 22.8 & 283.5 & -0.0167 & -0.00026 & 0.00085 & 43.87 & 67.4 \\
\bottomrule
\end{tabular}}
\end{table*}

\section{Per-Scene NVS Results}
\label{app:per_scene}

Tables~\ref{tab:v3_scenes_mipnerf360}, \ref{tab:v3_scenes_dtu}
and~\ref{tab:v3_scenes_tanks_and_temples} report V3 densified on all
nine Mip-NeRF360 scenes, fifteen DTU scans and ten T\&T scenes.

\begin{table*}[t]
\caption{\textbf{Per-scene V3 densified results on Mip-NeRF360.} The final row reports the mean across scenes.}
\label{tab:v3_scenes_mipnerf360}
\small
\flushrules
\setlength{\tabcolsep}{3pt}
\renewcommand{\arraystretch}{1.14}
\fittabular{9}{%
\begin{tabular}{@{}lrrrrrrrr@{}}
\toprule
Scene & Train & VRAM peak & FPS & PSNR & SSIM & LPIPS & Deploy. & Primitives \\
 & min & GiB & & dB & & VGG & MB & K \\
\midrule
bicycle & 32.4 & 18.55 & 88.5 & 24.00 & 0.693 & 0.329 & 348.07 & 1293.9 \\
flowers & 35.1 & 19.08 & 95.3 & 20.36 & 0.539 & 0.429 & 353.13 & 1312.7 \\
garden & 40.6 & 20.49 & 66.3 & 26.93 & 0.840 & 0.169 & 486.56 & 1808.7 \\
stump & 26.6 & 12.67 & 123.7 & 24.73 & 0.690 & 0.358 & 211.91 & 787.7 \\
treehill & 33.2 & 15.56 & 98.4 & 22.19 & 0.609 & 0.443 & 288.31 & 1071.7 \\
room & 31.4 & 7.42 & 142.0 & 29.86 & 0.901 & 0.321 & 111.51 & 414.5 \\
counter & 44.4 & 7.62 & 105.3 & 28.61 & 0.901 & 0.273 & 110.34 & 410.1 \\
kitchen & 51.0 & 9.42 & 87.9 & 30.56 & 0.918 & 0.173 & 127.87 & 475.3 \\
bonsai & 42.8 & 14.78 & 99.4 & 31.43 & 0.932 & 0.270 & 183.13 & 680.7 \\
\midrule
Mean & 37.5 & 13.95 & 100.7 & 26.52 & 0.780 & 0.307 & 246.76 & 917.3 \\
\bottomrule
\end{tabular}}
\end{table*}

\begin{table*}[t]
\caption{\textbf{Per-scene V3 densified results on DTU.} The final row reports the mean across scenes.}
\label{tab:v3_scenes_dtu}
\small
\flushrules
\setlength{\tabcolsep}{3pt}
\renewcommand{\arraystretch}{1.14}
\fittabular{9}{%
\begin{tabular}{@{}lrrrrrrrr@{}}
\toprule
Scene & Train & VRAM peak & FPS & PSNR & SSIM & LPIPS & Deploy. & Primitives \\
 & min & GiB & & dB & & VGG & MB & K \\
\midrule
scan24 & 12.7 & 0.95 & 322.7 & 26.68 & 0.911 & 0.166 & 30.38 & 112.9 \\
scan37 & 13.4 & 2.93 & 275.5 & 25.48 & 0.896 & 0.193 & 59.45 & 220.9 \\
scan40 & 14.3 & 4.04 & 240.3 & 26.73 & 0.905 & 0.194 & 72.14 & 268.1 \\
scan55 & 12.2 & 3.10 & 280.6 & 29.07 & 0.882 & 0.266 & 61.98 & 230.4 \\
scan63 & 10.4 & 0.60 & 579.4 & 30.04 & 0.938 & 0.167 & 12.05 & 44.7 \\
scan65 & 9.5 & 0.67 & 504.4 & 30.00 & 0.886 & 0.279 & 21.40 & 79.5 \\
scan69 & 9.7 & 0.65 & 475.8 & 27.02 & 0.886 & 0.241 & 20.68 & 76.8 \\
scan83 & 9.4 & 0.62 & 571.3 & 29.53 & 0.889 & 0.334 & 11.12 & 41.3 \\
scan97 & 11.1 & 0.68 & 406.8 & 27.72 & 0.875 & 0.287 & 21.76 & 80.8 \\
scan105 & 10.7 & 0.66 & 446.6 & 30.17 & 0.885 & 0.273 & 18.69 & 69.4 \\
scan106 & 9.1 & 0.69 & 555.6 & 33.19 & 0.916 & 0.323 & 19.64 & 73.0 \\
scan110 & 8.3 & 0.62 & 693.8 & 31.99 & 0.913 & 0.342 & 10.94 & 40.6 \\
scan114 & 9.8 & 0.69 & 480.2 & 30.40 & 0.909 & 0.289 & 21.93 & 81.5 \\
scan118 & 8.9 & 0.62 & 712.9 & 33.98 & 0.920 & 0.326 & 12.29 & 45.6 \\
scan122 & 8.9 & 0.60 & 700.6 & 34.38 & 0.919 & 0.337 & 11.00 & 40.8 \\
\midrule
Mean & 10.6 & 1.21 & 483.1 & 29.76 & 0.902 & 0.268 & 27.03 & 100.4 \\
\bottomrule
\end{tabular}}
\end{table*}

\begin{table*}[t]
\caption{\textbf{Per-scene V3 densified results on Tanks and Temples.} The final row reports the mean across scenes.}
\label{tab:v3_scenes_tanks_and_temples}
\small
\flushrules
\setlength{\tabcolsep}{3pt}
\renewcommand{\arraystretch}{1.14}
\fittabular{9}{%
\begin{tabular}{@{}lrrrrrrrr@{}}
\toprule
Scene & Train & VRAM peak & FPS & PSNR & SSIM & LPIPS & Deploy. & Primitives \\
 & min & GiB & & dB & & VGG & MB & K \\
\midrule
Train & 23.5 & 10.11 & 144.4 & 21.44 & 0.794 & 0.258 & 152.85 & 568.2 \\
Truck & 28.8 & 21.09 & 98.1 & 24.22 & 0.847 & 0.211 & 329.30 & 1224.1 \\
Francis & 22.1 & 9.13 & 165.9 & 28.34 & 0.899 & 0.331 & 126.42 & 469.9 \\
Horse & 26.4 & 16.56 & 113.2 & 24.82 & 0.878 & 0.207 & 250.68 & 931.8 \\
Lighthouse & 23.7 & 7.28 & 164.4 & 23.17 & 0.849 & 0.263 & 109.67 & 407.7 \\
Barn & 21.5 & 8.39 & 182.4 & 28.36 & 0.867 & 0.214 & 125.86 & 467.8 \\
Caterpillar & 21.6 & 13.20 & 150.9 & 23.82 & 0.799 & 0.262 & 181.02 & 672.9 \\
Courthouse & 25.4 & 3.33 & 162.5 & 22.87 & 0.800 & 0.270 & 134.50 & 500.0 \\
Ignatius & 29.7 & 23.35 & 82.5 & 22.39 & 0.784 & 0.236 & 440.19 & 1636.3 \\
Meetingroom & 20.3 & 8.18 & 162.7 & 25.91 & 0.880 & 0.230 & 137.91 & 512.6 \\
\midrule
Mean & 24.3 & 12.06 & 142.7 & 24.53 & 0.839 & 0.248 & 198.84 & 739.1 \\
\bottomrule
\end{tabular}}
\end{table*}

\section{Additional Qualitative Comparisons}
\label{app:nvs_qualitative}

Figures~\ref{fig:nvs_supp_mipnerf360}--\ref{fig:nvs_supp_tanks_and_temples}
compare V3 (densified) with native 2DGS, Triangle Splatting and MeshSplatting
on additional Mip-NeRF360, DTU, and Tanks and Temples scenes.
The enlarged regions show differences in object boundaries and fine textures.

\clearpage
\begin{figure*}[t]
\centering
\includegraphics[width=\textwidth,height=0.90\textheight,keepaspectratio]{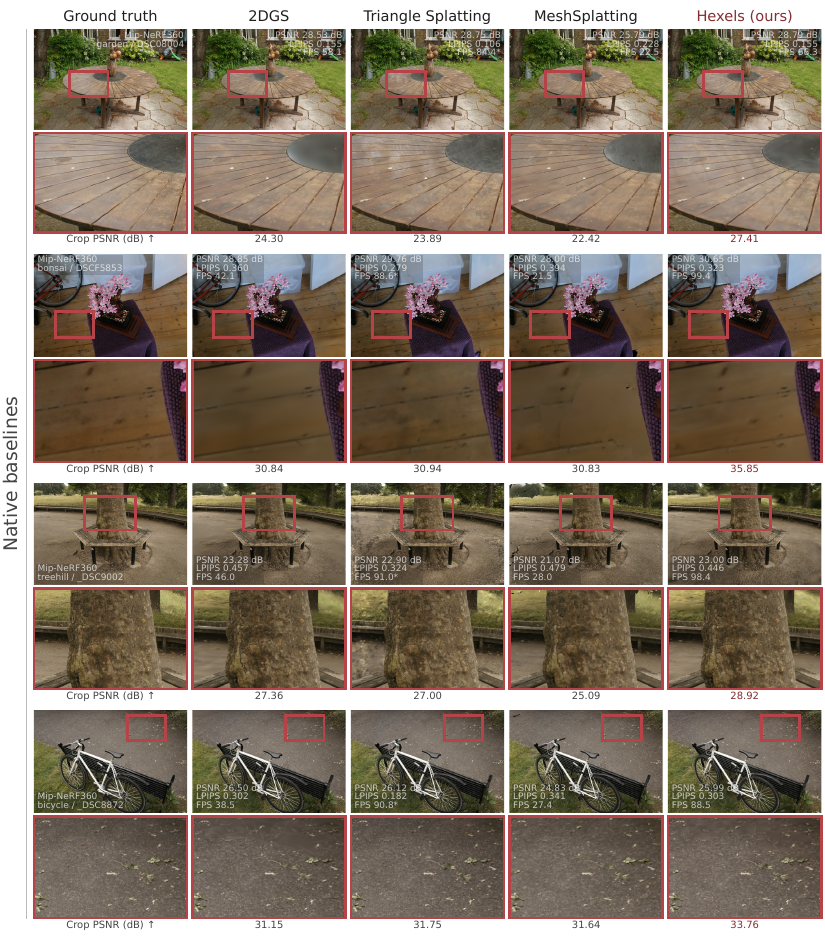}
\caption{\textbf{Additional Mip-NeRF360 comparisons.}}
\label{fig:nvs_supp_mipnerf360}
\end{figure*}

\begin{figure*}[p]
\centering
\includegraphics[width=\textwidth,height=0.90\textheight,keepaspectratio]{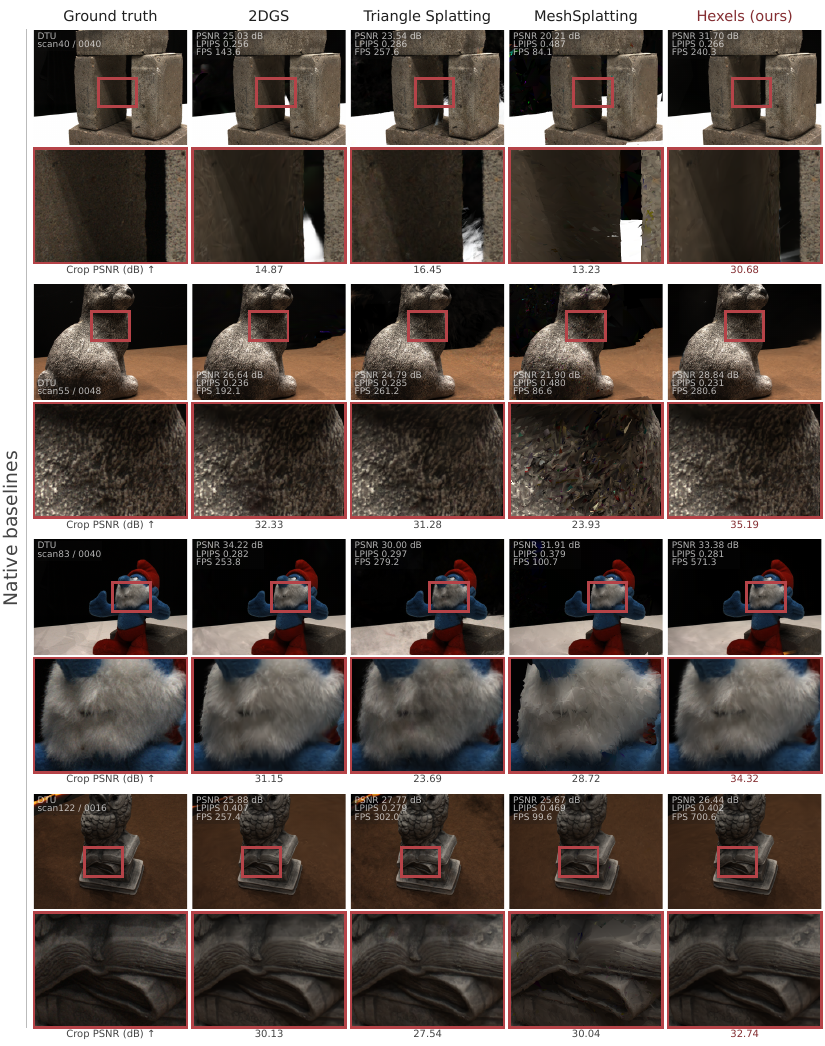}
\caption{\textbf{Additional DTU comparisons.}}
\label{fig:nvs_supp_dtu}
\end{figure*}

\begin{figure*}[p]
\centering
\includegraphics[width=\textwidth,height=0.90\textheight,keepaspectratio]{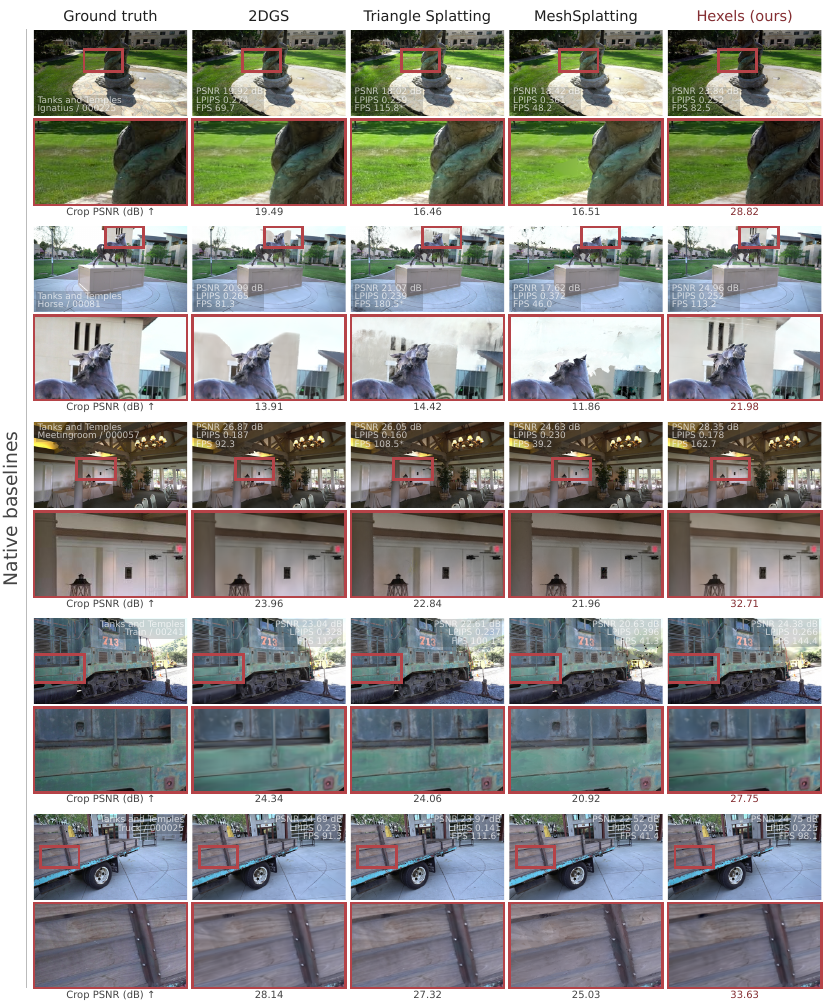}
\caption{\textbf{Additional Tanks and Temples comparisons.}}
\label{fig:nvs_supp_tanks_and_temples}
\end{figure*}

\end{document}